%% file: main.tex
\documentclass[preprint,12pt,authoryear,english]{elsarticle}
\usepackage[T1]{fontenc}
\usepackage[utf8]{inputenc}
\usepackage{array,booktabs}
\usepackage{float}
\usepackage{color}
\usepackage{graphicx,tabularx}
\usepackage{subfigure,caption,subcaption}
\usepackage{babel}
\usepackage{pifont}
\usepackage{amsmath}
\usepackage{amsfonts}
\usepackage{amssymb}
\usepackage{amsthm}
\usepackage{rotating}
\usepackage{proof}
\usepackage{adjustbox}
\usepackage{multirow}
\usepackage{url}
\usepackage{algorithm}     
\usepackage{algpseudocode}  
\usepackage{caption}

\newtheorem{definition}{Definition}
\newtheorem{theorem}{Theorem}

\newtheorem{lemma}[theorem]{Lemma}
\newtheorem{corollary}[theorem]{Corollary}

\newcommand{\HIDE}[1]{}

\makeatletter
\def\ps@pprintTitle{%
  \let\@oddhead\@empty
  \let\@evenhead\@empty
  \let\@oddfoot\@empty
  \let\@evenfoot\@oddfoot
}

\makeatother

\begin{document}
\begin{frontmatter}
\title{Convergence issues in Relational Concept Analysis based on  AOC-posets}

\author[lbl1]{Xavier Dolques}
\author[lbl1]{Agn\`es Braud}
\author[lbl2]{Alain Gutierrez}
\author[lbl2]{Marianne Huchard\corref{cor1}}
\ead{marianne.huchard@lirmm.fr}
\author[lbl1]{Florence Le Ber}

\cortext[cor1]{Corresponding author}

\address[lbl1]{Univ. de Strasbourg, ENGEES, CNRS,
ICube UMR 7357, F-67000 Strasbourg, France}

\address[lbl2]{LIRMM, Univ. Montpellier, CNRS,
Montpellier, France}


\begin{abstract}
Formal Concept Analysis (FCA) is an approach for conceptual classification building and rule discovery from a binary table describing a set of objects by a set of attributes. Extensions have been proposed to deal with non-binary and more complex data, such as Relational Concept Analysis (RCA) for multi-relational data. RCA aims to highlight groups of objects characterized by their relationships with other groups of objects.
The richer and more complex nature of the underlying data allows RCA to produce richer results than FCA, at the expense of higher computational and interpretive complexity.
The most commonly used conceptual classification structure in FCA is the concept lattice. However, in many  applications, concept-lattice substructures—such as AOC-posets—are preferred over the full lattice, either to mitigate combinatorial blow-up or to focus on the most informative parts of the structure.
 Indeed, in an AOC-poset, only concepts introducing an object or an attribute are represented, which makes AOC-posets smaller and easier to compute and use than concept lattices. 
Although RCA was originally defined on concept lattices, it can also be instantiated on AOC-posets. 
RCA is iterative and its convergence is guaranteed in the lattice-based setting, but this guarantee is lost when using AOC-posets.
In this paper, we investigate this loss of convergence in detail. 
We show why convergence is no longer guaranteed in the general case, identify conditions under which it can still be ensured, and discuss how a dataset can be transformed to recover convergence. 
We also propose a convergent variant of the process, which preserves the AOC-poset structure:
relational attributes, once created, are never removed, which guarantees convergence at the price of attributes that may refer to concepts absent from the final structures. 
\end{abstract}

\begin{keyword}
Formal Concept Analysis \sep
Concept lattice \sep 
Relational Concept Analysis \sep AOC-posets \sep convergence
\end{keyword}

\end{frontmatter}
\input{tex__introduction}
\input{tex__backgroundrca}
\input{tex__rca_gsh}

\input{tex__convergence}

\input{tex__approaches_convergence}

\input{tex__relatedwork}
\input{tex__conclusion}

\section*{Declaration of generative AI and AI-assisted technologies in the writing process}

During the preparation of this work, the authors used Claude (Anthropic) in order to improve the language and readability of the manuscript, to obtain feedback on the mathematical definitions, proofs, and examples, and to assist with the drafting of LaTeX code. After using this tool, the authors reviewed and edited the content as needed and take full responsibility for the content of the published article.

\section*{Acknowledgement}
This work was supported by the French National Research Agency  Grant ANR-21-CE23-0023 (SmartFCA).

\bibliographystyle{elsarticle-harv}
\bibliography{bib-propre}

\input{tex__appendix}
\end{document}

%% file: tex__introduction.tex
\section{Introduction}

Formal Concept Analysis (FCA) \citep{Gant99a} provides a natural framework when the goal of data analysis is not primarily prediction, but the discovery and organization of interpretable and explainable knowledge patterns, in line with a white-box perspective on knowledge representation \citep{10.1007/s10618-024-01041-y}.
Based on lattice theory, FCA aims to discover conceptual structures from sets of objects and their attributes. It identifies formal concepts, each defined by a set of objects and the attributes they share, and organizes them into a concept lattice that makes abstraction, specialization, and dependency relations explicit. 
This contrasts with many mainstream machine-learning approaches, including supervised learning, clustering, dimensionality-reduction techniques, and neural models, which often focus on predictive accuracy, numerical similarity, or compact numerical representations \citep{hastie2009elements,LeCun2015}.
While explainable AI often aims at providing post-hoc explanations for predictive black-box models~\citep{10.1145/3236009}, FCA offers an intrinsically interpretable symbolic representation
and synthetic representation of the data. 
It supports knowledge discovery by producing explicit conceptual structures, abstract relations, and implications that can be inspected, interpreted, and discussed by domain experts.
FCA has been successfully used for analyzing tabular data, either binary or multivalued \citep{DBLP:journals/eswa/PoelmansIKD13,DBLP:journals/eswa/PoelmansKID13}. Various extensions have later been proposed to process more complex data, such as multidimensional data \citep{DBLP:journals/order/Voutsadakis02}, fuzzy data \citep{DBLP:conf/cla/BelohlavekV05}, and relational data \citep{rouane2013,ferre2020graph,DBLP:journals/dam/KottersE20}.

Relational Concept Analysis (RCA) \citep{rouane2013} is one of the extensions to handle relational data.
 In this framework, objects are described by attributes, and by relationships between them. RCA is based on the iterative use of FCA, computing several concept lattices, which are connected by links that abstract the relations between objects. 
 The result complements graph-pattern-based analyses \citep{ferre2020graph,DBLP:journals/dam/KottersE20} by highlighting how objects are classified separately within each category according to these relations.
Another distinctive feature of RCA is its use of several scaling operators borrowed from Description Logics \citep{baader03} to build links between concepts. This contrasts with approaches restricted to the existential operator, such as \cite{ferre2020graph,DBLP:journals/dam/KottersE20}, and enables the extraction of richer patterns.
Nevertheless,  since RCA groups  objects using relationships to  objects at any distance, it often comes with a combinatorial explosion, and patterns of interest are difficult to extract from the huge set of built concepts.  
Note that this is true for other FCA-based methods for relational data.
Various strategies can be used in RCA to cope with this complexity, including separating the initial formal object sets into smaller ones after a first analysis, introducing queries \citep{azmehCla11}, using frequency thresholds or specific measures to select the concepts of interest \citep{Stumme2002,buzmakov2014concept,DBLP:journals/isci/KuznetsovM18}, or using an alternative conceptual structure—namely, an AOC-poset—as a substitute for the concept lattices \citep{DBLP:conf/cla/DolquesBH13}. 
AOC-posets are sub-orders of lattices restricted to concepts introducing  objects or attributes, called respectively object-concepts (OC) and attribute-concepts (AC). 
They are smaller and computationally more efficient than concept lattices while preserving the original information \citep{Godi93a}. 

Using AOC-posets for RCA was motivated by the analysis of particular relational datasets, e.g. in  environmental data \citep{ijgis2016,braud2022}, where we showed that this approach provided a reasonable number of concepts and relevant rules.
A second advantage is that the object- and attribute-concepts contained in AOC-posets may be the only outputs needed in some applications. This is the case, for instance, in software engineering tasks such as class model refactoring \citep{DBLP:conf/cla/MirallesMHNDD15}. In this setting, the goal is to derive more general classes from an initial class model in which generalization relations are incomplete or missing. Attribute-concepts can guide the construction of these generalized classes, while object-concepts help preserve the classes of the initial model.
From these experiments, one may suggest that RCA-AOC has strong potential, both for focusing on the most relevant patterns in the context of a given application and for limiting the amount of extracted knowledge.
Nevertheless, in order to generalize 
the use of  an AOC-poset-based RCA process (RCA-AOC)  to other applications that rely on different—and potentially more complex—data schemas, it is important to verify whether RCA’s original properties are preserved. Unfortunately, we found that using AOC-posets within RCA can introduce divergence issues. A critical issue then arises: understanding when and why divergences occur, and devising ways either to eliminate them or to reliably exploit the results in their presence. 

In this paper, we illustrate the potential risk of process divergence through three examples involving the two most widely used scaling operators, namely existential scaling and strict universal scaling. One of these examples is concrete and is grounded in a UML class-model refactoring task. Furthermore, we identify properties that prevent divergence in most cases.
We also propose a variant of the process that guarantees termination, builds genuine AOC-posets, and yields 
concepts required in certain applications, such as the UML class-model refactoring task presented in this paper.

The rest of the paper is organized as follows. 
Section \ref{sec:rca_background} introduces the basic definitions of RCA.
Section \ref{sec:rca_gsh} details RCA-AOC, the RCA process based on AOC-posets.
Section \ref{sec:convergence} presents three examples of process divergence, including one drawn from a real-world software engineering application. 
In Sect.~\ref{sec_ensuring}, we discuss different approaches to ensuring convergence in the AOC-poset-based RCA process, including conditions on the data and on the process itself, as well as a convergent variant of the process. 
Section \ref{sec:relatedwork} presents the state of the art about AOC-posets and RCA, including theoretical developments, variants, and applications.
Finally, Sect.~\ref{sec:conclusion} concludes the paper and outlines directions for future work.

%% file: tex__backgroundrca.tex
\section{Introduction to Relational Concept Analysis}
\label{sec:rca_background}

In this section, we introduce the RCA  process and show the graphical form of its results. Definitions are illustrated with an example taken from the \textsc{CNRS Miti’80
2021 Paradise} project \citep{fokou-elhaff2024}.

\subsection{FCA basics}

RCA is a relational variant of Formal Concept Analysis (FCA)~\citep{Gant99a}. 
FCA takes as input a dataset, called a formal context, composed of objects described by attributes. Table \ref{table:plants} shows a formal context, denoted ${\mathcal K}_\textit{Plants}$, which describes plants used in ancient remedies and their useful parts or characteristics.
For example, leaves and roots of \textit{Archangelica officinalis} ($angelica$), an herbaceous plant, can be used in remedies. The aim of FCA is to extract an ordered set of concepts from the input dataset.

\input{Remedes__plants}

\begin{definition}[Formal Context] 
A formal context is a triple ${\mathcal K}=(G,  M,  I)$ ~where $G$~and ${M}$~are  finite sets
{of} objects and attributes, respectively, and ${I}$~is the incidence relation,
i.e., ${I}~\subseteq~{ G} \times {M}$. 
\end{definition}
\begin{definition}[Derivation operators]
Let $\mathcal{K} = (G, M, I)$ be a formal context. The two
\textit{derivation operators}, both denoted by $(\cdot)'$, are defined,
for $X \subseteq G$ and $Y \subseteq M$, by:
\[
X' = \{ m \in M \mid \forall g \in X,\ (g,m) \in I \}
\qquad
Y' = \{ g \in G \mid \forall m \in Y,\ (g,m) \in I \}
\]
$X'$ is the set of attributes shared by all objects of $X$, and $Y'$
the set of objects owning all attributes of $Y$. Composing the two
operators yields the closure operators $(\cdot)''$ on $G$ and on $M$.
For a single object $o \in G$, we write $\{o\}'$ and $\{o\}''$.
\end{definition}
\begin{definition}[Formal Concept]
A formal concept of $\mathcal{K} = (G, M, I)$ is a pair
$C = (X, Y)$ with $X \subseteq G$ and $Y \subseteq M$, such that
$X' = Y$ and $Y' = X$.
$X = Extent(C)$ is the \textit{extent} of the concept, i.e., the set of
objects falling under the concept, and $Y = Intent(C)$ is its
\textit{intent}, i.e., the set of attributes shared by these objects.
\end{definition}

\begin{definition}[Concept Specialization Order and Lattice]
Let ${\mathcal C}_{\mathcal K}$ be the set of all concepts built from  formal context $\mathcal K$. Let 
 $C_1=(X_1, Y_1)$ and $C_2=(X_2, Y_2)$ be two elements of ${\mathcal C}_{\mathcal K}$.
The concept specialization order $\leq_{s}$  is defined by $C_1 \leq_{s}  C_2$ if and only if $X_1 \subseteq X_2$ (and equivalently  $Y_2 \subseteq Y_1$).
$C_1$ is called a subconcept of $C_2$, while $C_2$ is called a superconcept of $C_1$.
The concept set ${\mathcal C}_{\mathcal K}$ provided with the  specialization order,  (${\mathcal C}_{\mathcal K}$, $\leq_{s}$),  has a lattice structure, and is called the concept lattice associated with ${\mathcal K}$.
\end{definition}
The concept lattice built on ${\mathcal K}_\textit{Plants}$ is shown in Fig. \ref{fig:latticeplants}.
To reduce redundant information in the presentation of concept lattices, figures only show  introduced objects (resp. introduced attributes) in concepts. 
E.g. \texttt{C\_plants\_8} (denoted \texttt{cp\_8} for short) introduces attribute $flowers$ and object $camomile$\footnote{\texttt{C\_plants\_8} is the introducer concept of attribute $flowers$ and object $camomile$.}.
But the complete concept definition is 
$Intent(\mathtt{cp\_8})$ = $\{herbaceous$,  $flowers\}$ with $herbaceous$ top-down inherited, and 
$Extent($ $\mathtt{cp\_8})$ = $\{camomile$,  $borage$, $garlic\}$ with $borage$ and $garlic$ bottom-up inherited.
The presentation exploits the fact that when an attribute belongs to the intent of a concept $C$, it is (top-down) inherited by $C$ subconcepts. Thus it is sufficient to show an attribute in the highest concept having it in its intent. This highest concept is the $introducing$ concept of the attribute. The same simplification can be symmetrically applied to the objects, which are bottom-up inherited.

\begin{figure}[htb]
\centering
\includegraphics[width=0.45\linewidth]{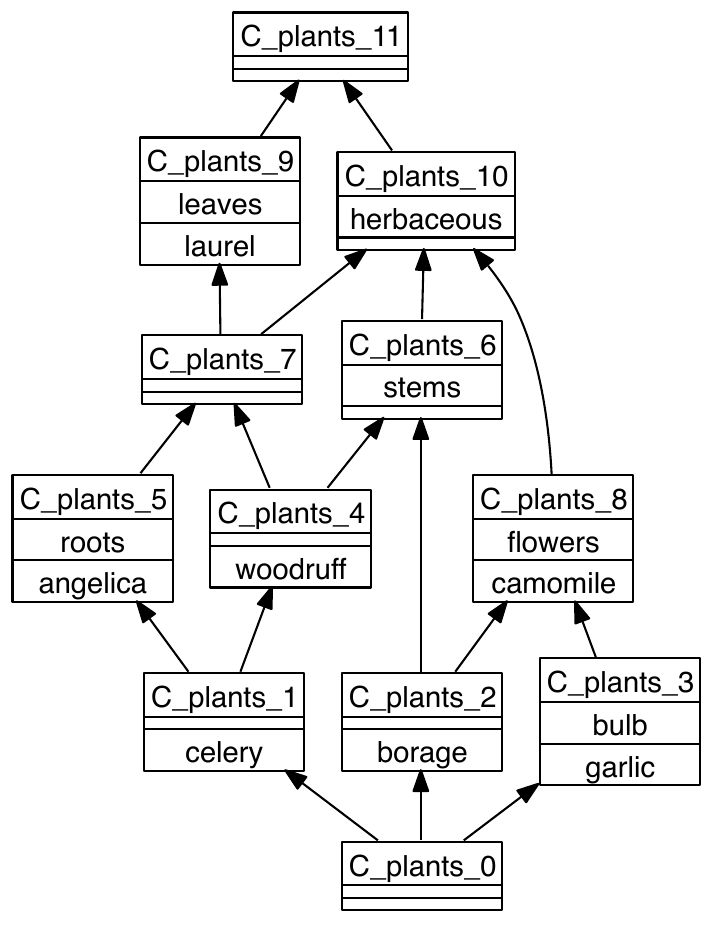}
\caption{Concept Lattice  on ${\mathcal K}_\textit{Plants}$. Each concept is represented in 3 parts: its identifier (top), its intent (middle), and its extent (bottom); only introduced attributes and objects are shown in intents and extents respectively. In the text, \texttt{C\_plants\_i} is shortened to  \texttt{cp\_i}.}
\label{fig:latticeplants} 
\end{figure}

\subsection{RCA process}

Relational Concept Analysis (RCA) aims at extending FCA to take into account datasets where objects of several categories are described by attributes and by relations to objects \citep{HuchardHRV07,rouane2013}. 
The main idea of RCA is as follows: when objects of a category are connected via a relation
to objects of another category (or the same category), concepts formed on top of objects of the latter category can be used to form concepts on top of objects of the former category.  The whole dataset is called a Relational Context Family. 

\begin{definition}[Relational Context Family (RCF)]
\label{def:rel-context}
A Relational Context Family (denoted RCF) is a $({\mathbf K},{\mathbf R})$ pair where:
\begin{itemize}
\item ${\mathbf K} = \{{\cal K}_i\}_{i=1,\ldots,n}$ is a set of contexts, ${\cal K}_i=(G_i,M_i,I_i)$. 
\item ${\mathbf R}= \{r_{j}\}_{j=1,\ldots,m}$ is a set of relations,  
$r_j \subseteq G_{s_{r_j}} \times G_{t_{r_j}}$ 
where 
$s_{r_j}, t_{r_j} \in \{1,\ldots,n\}$, and $G_{s_{r_j}}$ and $G_{t_{r_j}}$ are resp. the source (domain) and target (range) of $r_j$.
\end{itemize}
\end{definition}

In the following, we assume the object sets ($G_i$, $i=1,...,n $) to be pairwise disjoint. 
The relationship between objects of a category and concepts formed on top of objects of another  category is
 implemented  thanks to \textit{scaling operators}, which are inspired by operators of Description Logics. This results in the creation of special attributes called \textit{relational attributes}, of the form $qr(C)$, where $q$ is a scaling operator, $r$ a relation between two object sets $G_{s_{r}}$ and $G_{t_{r}}$,  and $C$ a concept from the lattice built on the context ${\cal K}_{t_{r}} = (G_{t_{r}}, M_{t_{r}}, I_{t_{r}})$. 
The most used {scaling operators} are:
\begin{itemize} 
\item The \textit{existential} scaling operator ($\exists$):  an object $o \in  G_{s_{r}}$ is in relation by $\exists r$ with a concept $C$ from  ${\cal K}_{t_{r}}$ if $r(o)$ has a non-empty intersection with $Extent(C)$, where $r(o)$ denotes the image set of $o$ by $r$, i.e. $o$ is connected via $r$ to at least one element from $Extent(C)$. {A relational attribute based on this scaling operator is denoted as $\exists r(C)$}.
\item The \textit{universal} scaling operator ($\forall$):  an object $o \in  G_{s_{r}}$ is in relation by $\forall r$ with a concept $C$ from  ${\cal K}_{t_{r}}$ if  $r(o)$ is  included in the extent of $C$, i.e. $o$ is connected via $r$ only to elements from $Extent(C)$. {A relational attribute based on this scaling operator is denoted as $\forall r(C)$}. 
\item The \textit{strict universal} scaling operator ($\exists\forall$):  an object $o \in  G_{s_{r}}$ is in relation by $\exists\forall r$ with a concept $C$ from  ${\cal K}_{t_{r}}$ if  $r(o)$ is \textit{non-empty} and included in the extent of $C$, i.e. $o$ is connected via $r$ only to elements from $Extent(C)$ (and \textit{at least one}). {A relational attribute based on this scaling operator is denoted as $\exists\forall r(C)$}.
\end{itemize}

When processing an RCF $({\mathbf K},{\mathbf R})$ with RCA, different scaling operators can be applied to the relations of ${\mathbf R}$. We introduce then the function $\rho:{\mathbf R} \rightarrow \{\exists, \forall, \exists\forall\}$   that associates a scaling operator to a relation from ${\mathbf R}$. 

We here  explain these principles relying on an example taken from a database on ancient Arabic remedies \citep{fokou-elhaff2024}. The relational context family, called \texttt{RP} in the following, is composed of a Plants context, ${\mathcal K}_\textit{Plants}$ (see Table~\ref{table:plants}), a Remedies context, denoted ${\mathcal K}_\textit{Remedies}$ (Table at the left-hand side of Fig.~\ref{fig:remedies}), and a \textit{contains} relation, denoted  ${r}_{contains}$  (Table \ref{table:contains}). Remedies are described by their ingredients and their application form;  two forms are here considered: pills and potions.
Let us for example consider the concept lattice of Fig.~\ref{fig:latticeplants} and the \texttt{cp\_8} concept which groups plants whose flowers are used in remedies, namely \textit{camomile}, \textit{borage} and \textit{garlic}. Now, according to the existential scaling operator, \texttt{remedy1}, \texttt{remedy2} and \texttt{remedy3} can be linked to \texttt{cp\_8} because they contain at least one plant with useful flowers,
i.e. at least one plant in $Extent($\texttt{cp\_8}$)$  (Table~\ref{table:contains}).  
To represent this property, a new attribute is added to objects \texttt{remedies}. This is done for each concept from  ${\mathcal K}_\textit{Plants}$ leading to an extension of the original context, as defined below.
\begin{figure}[htb]
\begin{minipage}{0.5\linewidth}
\centering
\input{Remedes__remedes}
\end{minipage}
\begin{minipage}{0.45\linewidth}
\centering
\includegraphics[width=0.32\linewidth]{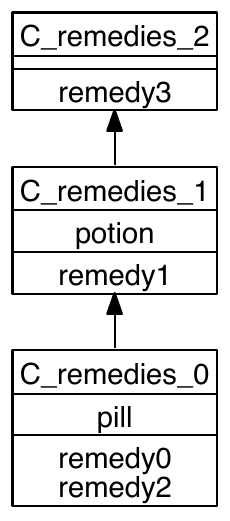}
\end{minipage}
\caption{Formal Context ${\mathcal K}_\textit{Remedies}$ (left) and concept lattice (right) for remedies. In the text, \texttt{C\_remedies\_i} is shortened to  \texttt{cr\_i}.}
\label{fig:remedies} 
\end{figure}
\input{Remedes__contains}

\begin{definition}[Existential scaling ($\exists$)]
\label{def:exist-scaling}
Let ${\cal K}=(G,M,I)$ be a context in a relational context family, and $r$ a relation, where $G= G_{s_r}$ is the source of $r$, and $G_{t_r}$, the target of $r$, is the object set of a formal context ${\cal K}_{t_r}=(G_{t_r},M_{t_r},I_{t_r})$.
Let also  ${\cal C}_{t_r}$ be the set of concepts of the lattice built on ${\cal K}_{t_r}$. 
The application of $\mathbb{S}_{\exists}$, the  existential scaling operator,  to context ${\cal K}$, denoted by
$ \mathbb{S}_{\exists}$(${\cal K},r,{\cal C}_{t_r}$), 
gives the extension ${\cal K}^+=(G^+,M^+,I^{+})$ of ${\cal K}$, with:
\begin{itemize}
\item 
$G^{+}=G$
\item 
$M^{+}=\{ \exists r(C) ~|~ C \in {\cal C}_{t_r}\}$.
\item 
$I^{+}=\{ (o, \exists r(C)) ~|~ o \in G, C \in {\cal C}_{t_r}, r(o) \cap Extent(C) \neq \emptyset\}$
\end{itemize}
\end{definition}

Table \ref{table:existsplantsRCA} shows $\mathbb{S}_{\exists }(K_\textit{Remedies},r_{contains},{\cal C}_\textit{Plants})$ where
${\cal C}_\textit{Plants}$ is the set of concepts of the lattice of Fig.~\ref{fig:latticeplants}.
In this case, $I^{+}$ contains e.g.
$(\texttt{remedy1},$ $\exists~{r_{contains}}(\texttt{cp\_8}))$,
$(\texttt{remedy2},$  $\exists~{r_{contains}}(\texttt{cp\_8}))$,
and
$(\texttt{remedy3},$  $\exists~{r_{contains}}$ $(\texttt{cp\_8}))$, 
because \texttt{remedy1} contains $camomile$,  \texttt{remedy2} and \texttt{remedy3} contain $garlic$, both plants in \texttt{cp\_8} extent.
 This extension is added to the original context and then used to update the Remedies lattice. The new concept lattice is shown in Fig. \ref{fig:latticeremedes2}.

\input{Remedes__remedes_existentiel}

\begin{definition}[Strict universal scaling ($\exists\forall$)]
\label{def:univ-scaling}
Let ${\cal K}=(G,M,I)$ be a context in a relational context family, and $r$ a relation, 
where $G= G_{s_r}$ is the source of $r$, and $G_{t_r}$, the target of $r$, is the object set of a formal context ${\cal K}_{t_r}=(G_{t_r},M_{t_r},I_{t_r})$.
Let also  ${\cal C}_{t_r}$ be the set of concepts of the lattice
built on ${\cal K}_{t_r}$. 
The application  of $ \mathbb{S}_{\exists\forall}$, the strict universal scaling operator, to context ${\cal K}$, denoted by
 $\mathbb{S}_{\exists\forall}$(${\cal K},r,{\cal C}_{t_r}$), gives the extension 
${\cal K}^+=(G^+,M^+,I^{+})$, with:
\begin{itemize}
\item 
$G^{+}=G$
\item 
$M^{+}=\{ \exists\forall r(C) ~|~ C \in {\cal C}_{t_r}\}$ 
\item 
$I^{+}=\{ (o, \exists\forall r(C)) ~|~ o \in G, C \in {\cal C}_{t_r}, r(o) \neq \emptyset \,\mathit{and}\, r(o) \subseteq Extent(C)\}$
\end{itemize}
\end{definition}

\begin{definition}[Relational extension of a context]
\label{def:rel-extent}
Let ${\cal K} = (G,M,I)$ be a context in a relational context family $({\mathbf K},{\mathbf R})$, and $R_G$ the subset of relations $r \in \mathbf{R}$ such that $G_{s_{r}}=G$.
The relational extension of ${\cal K}$, denoted  $\mathbb{E}_{\rho,{\mathbf C}}({\cal K})$, is the result of the apposition (denoted by the symbol `|') of ${\cal K}$ with all extensions $\mathbb{S}_{\rho(r)}$(${\cal K},r,{\cal C}_{t_r}$) for $r \in R_G$, and $\mathbf{C} =$$ \bigcup_{r \in R_G}{\mathcal{C}_{t_{r}}}$.
$$ \mathbb{E}_{\rho,{\mathbf C}}({\cal K}) =  {\cal K}~ |~ 
\mathbb{S}_{\rho(r_{1})}( {\cal K},r_{1},{\cal C}_{t_{r_1}})~ 
|~ \ldots |~ \mathbb{S}_{\rho(r_{k})}( {\cal K},r_{k},{\cal C}_{t_{r_k}})$$
\end{definition}

By extension, $E^*_{\rho}(\mathbf{K})$ denotes the relational extension
of $\mathbf{K}$, which is composed of all the relational extensions of
all $\mathcal{K}_i$ in $\mathbf{K}$:
$$E^*_{\rho}(\mathbf{K}) = \{E_{\rho,{\mathbf C_1}}(\mathcal{K}_1), \ldots,
E_{\rho,{\mathbf C_n}}(\mathcal{K}_n)\}$$
where, for each $i$:
$${\mathbf C_i} =
\bigcup_{r \in R_{G_i}} \mathcal{C}_{t_r}$$

A relational extension of  ${\mathbf K} = \{{\mathcal K}_\textit{Plants}, {\mathcal K}_\textit{Remedies}\}$  is composed of Table \ref{table:plants} (no outgoing relation), and the left table of Fig.~\ref{fig:remedies} apposed to Table \ref{table:existsplantsRCA}.  The resulting context is ${\mathbf K}^{1} = \mathbb{E}^*_{\rho}({\mathbf K}) = \{{\mathcal K}_\textit{Plants}, \mathbb{E}_{\rho,{\mathbf C}}({\cal K}_\textit{Remedies})\}$. Figure~\ref{fig:latticeremedes2} shows the concept lattice built from the extended context  $\mathbb{E}_{\rho,{\mathbf C}}({\cal K}_\textit{Remedies}) = {\mathcal K}_\textit{Remedies}  | \mathbb{S}_{ \exists}(K_\textit{Remedies}, r_\mathit{contains},{\cal C}_\textit{Plants})$.

\begin{figure}[htb]
\centering
\includegraphics[width=0.6\linewidth]{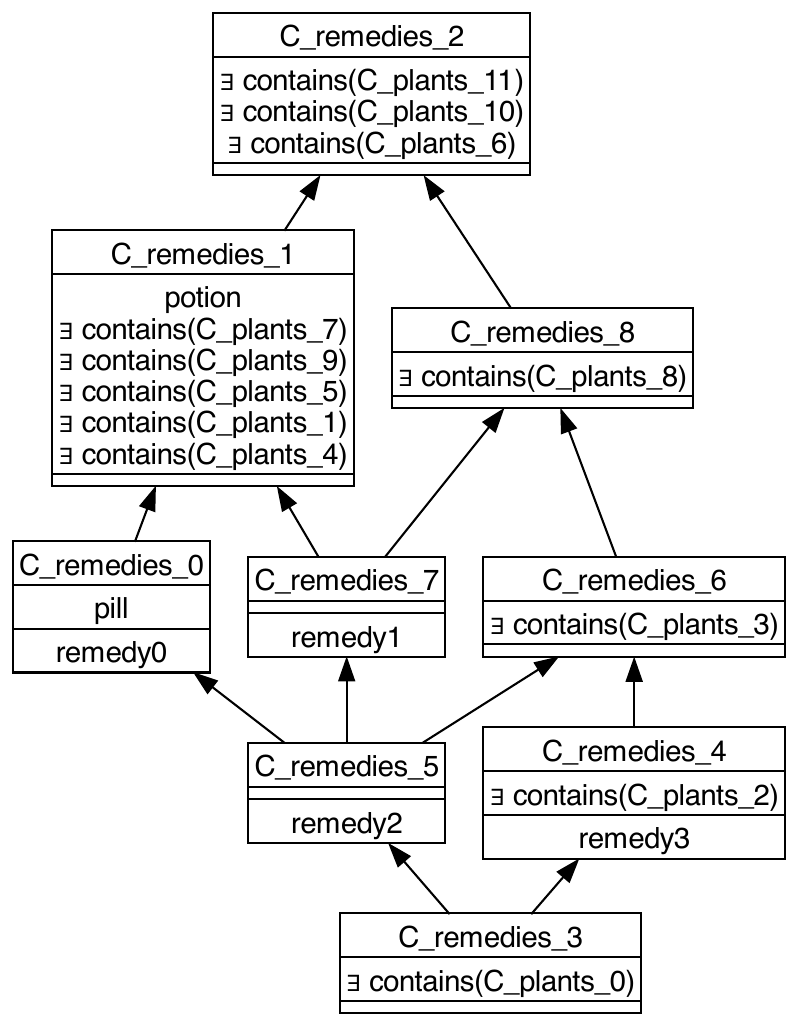}
\caption{Concept Lattice  on $\mathbb{S}_{\exists }({\mathcal K}_\textit{Remedies},r_{contains},{\cal C}_\textit{Plants})$ apposed to ${\mathcal K}_\textit{Remedies}$.}
\label{fig:latticeremedes2} 
\end{figure}

A whole construction process consists in building a finite sequence of contexts and concept lattices associated with $({\mathbf K},{\mathbf R})$ and $\rho$.   The first set of contexts (step $0$) is ${\mathbf K}^{0} ={\mathbf K}$.
The contexts of step $p$ are extended with relational attributes built on the concepts of ${\mathbf C}^{p-1}$, i.e. the set of concepts of all lattices built at the previous step. These extended contexts are used to build new lattices providing a new set of concepts ${\mathbf C}^{p}$ that will be used in the next step.
The process stops when a fixpoint is obtained, i.e. lattices associated with the same formal context at two successive steps have the same sets of concept extents. The result is a family of interrelated lattices, called a \textit{Concept Lattice Family} (CLF).

\subsection{RCA output as graphs}

RCA is usually described as a process, starting from an RCF and producing a family of interrelated concept lattices \citep{rouane2013}. Nevertheless, several works have highlighted the links between these outputs and graphs, in various domains, e.g. 
\cite{dolques2010learning}  produced
model transformation patterns in the software domain. 
\cite{nica-kbs2020,nica-dam2020} proposed a general method, called RCA-Seq, to extract a hierarchy of directed acyclic graphs 
from the lattice family, starting from one chosen lattice. In the resulting  hierarchy,  each graph represents a navigation path through the lattice family. Redundant links have been removed.  In the same way,  \cite{ferre2018hierarchies} introduced a generic representation of RCA outputs as a hierarchy of concept graphs where each concept of the lattice family belongs to one concept graph and each concept graph exhibits the relationships between several concepts.

Following this idea, a recent work has shown that RCA outputs can be represented as a set of relational patterns with no loss of information \citep{fokou-ijar2025}. These relational patterns are directly comparable to graph patterns produced by the Graph-FCA approach \citep{ferre2020graph} when inverse relations are included in the relational context family.
The set of relational patterns is derived from a dependency graph $G=(V,E)$, where $V$ is the set of all concepts described with unary attributes, $E$ is the set of dependencies (relational attributes and subsumption) between concepts. This graph is built on top of the concept lattice family, by extracting maximal subsets of related concepts and  removing redundant links.
This approach cannot be directly applied on RCF with one-way relations. But the approach proposed by \cite{nica-dam2020}  for temporal data can be adapted for any data model of a relational context family.

\begin{figure}[htb]
    \centering
    \includegraphics[width=\textwidth]{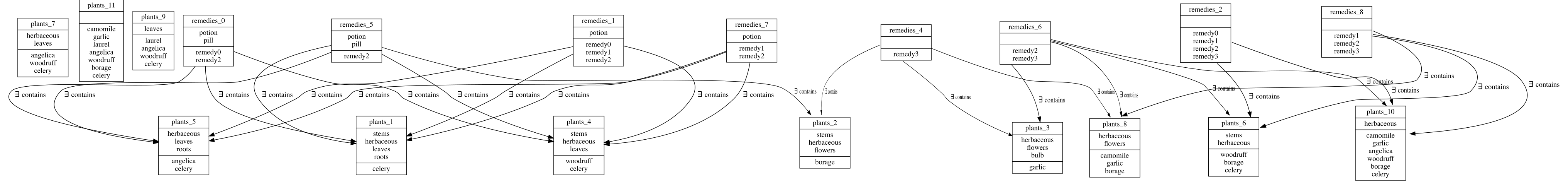}
    \includegraphics[width=\textwidth]{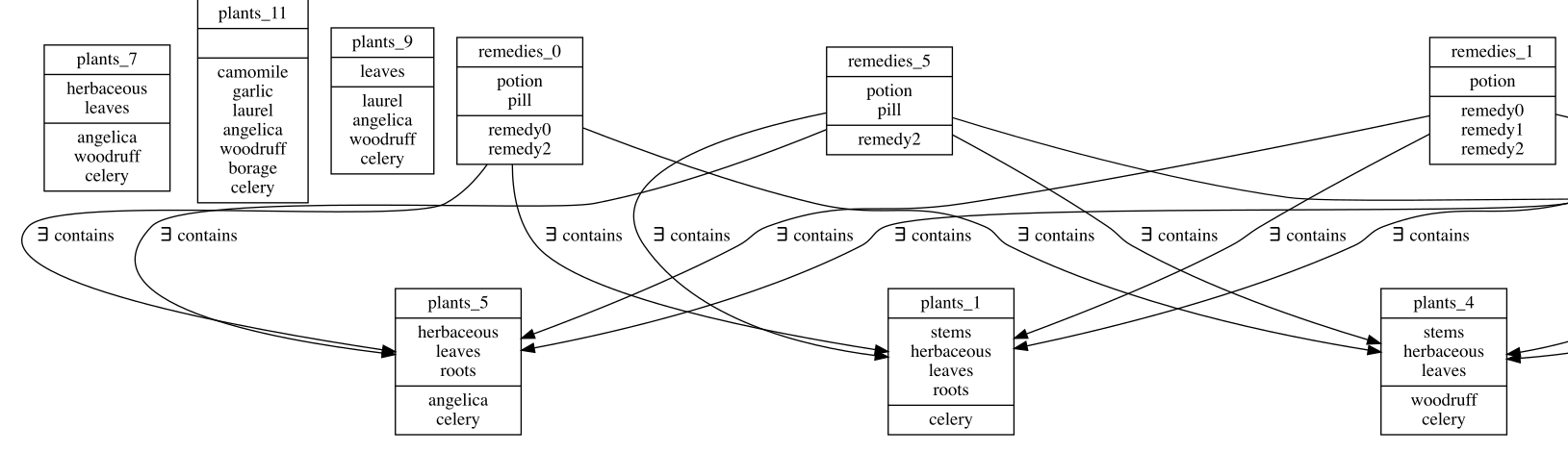}
    \caption{General overview of the patterns extracted from the concept lattice family built on \texttt{RP} with the existential quantifier and a focus on \texttt{cr\_0} and linked  nodes.}
    \label{fig:graphremedesplant}
\end{figure}

As formalized by \cite{nica-dam2020} and \cite{fokou-ijar2025}, the process of pattern extraction requires to remove some redundant links. 
For example, remedy concept \texttt{cr\_0} (see Fig. \ref{fig:latticeremedes2}) has eight relational attributes  pointing at \texttt{cp\_1}, \texttt{cp\_4},  \texttt{cp\_5}, \texttt{cp\_6}, \texttt{cp\_7}, \texttt{cp\_9}, \texttt{cp\_10} and \texttt{cp\_11}. 
Plant concept \texttt{cp\_11} (top concept, see Fig. \ref{fig:latticeplants}) brings no information  as well as \texttt{cp\_7}. 
Plant concepts \texttt{cp\_6}, \texttt{cp\_9} and  \texttt{cp\_10} introduce  attributes that are inherited by lower concepts; besides, \texttt{cp\_9}  introduces an object that is not linked to remedies of  \texttt{cr\_0} (\texttt{remedy0} and \texttt{remedy2}). Finally, in the graph built on this lattice family (see Fig. \ref{fig:graphremedesplant}), only concepts \texttt{cp\_1} (denoted \texttt{plants\_1}), \texttt{cp\_4}  (\texttt{plants\_4}), and \texttt{cp\_5} (\texttt{plants\_5})  are  represented as linked to \texttt{cr\_0} (\texttt{remedies\_0}).

%% file: Remedes__plants.tex
\begin{table}
\caption{Formal Context of plants and their used parts (${\mathcal K}_{Plants}$)}
\label{table:plants}
\begin{center}
\footnotesize
\begin{tabular}{|l| *{8}{m{1.4cm}|}}
\hline
& \hbox{herbaceous} & bulb & leaves & roots &  stems & flowers  \\
\hline
\textit{camomile} & \hspace{0.8cm}$\times$ & & & & & \hspace{0.8cm}$\times$ \\
\hline
\textit{garlic} & \hspace{0.8cm}$\times$ & \hspace{0.8cm}$\times$ & & & &\hspace{0.8cm}$\times$\\
\hline
\textit{laurel} & & & \hspace{0.8cm}$\times$ & & &\\
\hline
\textit{angelica} &\hspace{0.8cm}$\times$	&&\hspace{0.8cm}$\times$&\hspace{0.8cm}$\times$&&\\
\hline
\textit{woodruff} &\hspace{0.8cm}$\times$		&&\hspace{0.8cm}$\times$	&	&\hspace{0.8cm}$\times$	&\\
\hline
\textit{borage} &\hspace{0.8cm}$\times$	&	& 	&	&\hspace{0.8cm}$\times$&\hspace{0.8cm}$\times$\\
\hline
\textit{celery} &\hspace{0.8cm}$\times$& 	&\hspace{0.8cm}$\times$	&\hspace{0.8cm}$\times$	&\hspace{0.8cm}$\times$	&  \\
\hline
\end{tabular}
\end{center} 
\end{table}

%% file: Remedes__remedes.tex
\centering
\small
\begin{tabular}{|l|c|c|}
\hline
& \textbf{pill}& \textbf{potion} \\
\hline
\textbf{remedy0}& $\times$  &  $\times$ \\
\hline
\textbf{remedy1}&  & $\times$  \\
\hline
\textbf{remedy2} & $\times$  &  $\times$\\
\hline
\textbf{remedy3}&   & \\
\hline
\end{tabular}

%% file: Remedes__contains.tex
\begin{table}[htb]
\caption{Relation  $\mathit{r}_{contains}$ }
\label{table:contains}
\centering 
{\scriptsize
\begin{tabular}{|l|c|c|c|c|c|c|c|}
\hline
$r_\mathit{contains}$ & \textit{\textbf{camomile}}&  \textit{\textbf{garlic}}&  \textit{\textbf{laurel}}& \textit{\textbf{angelica}}& \textit{\textbf{woodruff}} 
&  \textit{\textbf{borage}} &  \textit{\textbf{celery}}\\
\hline
\textbf{remedy0}&	&	&	&$\times$ & $\times$	& & $\times$	\\
\hline
\textbf{remedy1}&$\times$	& &	 & $\times$	 & $\times$	 & 	&$\times$	 \\
\hline
\textbf{remedy2} &	&$\times$	&	&$\times$	&$\times$	& &	$\times$	\\
\hline
\textbf{remedy3}	&$\times$	&$\times$	&	&	&	&	$\times$	&		 \\
\hline
\end{tabular}}
\end{table}

%% file: Remedes__remedes_existentiel.tex
\begin{table}
\caption{Existential scaling of the formal context for Remedies based on the relation $r_{contains}$ (shortened to $r$ in the column headers),
$\mathbb{S}_{\exists}(\mathcal{K}_\textit{Remedies}, \mathit{r_{contains}}, \mathcal{C}_{\textit{Plants}})$. 
}
\label{table:existsplantsRCA}
\centering 
\setlength{\tabcolsep}{2pt}
\scriptsize{
\begin{tabular}{|l|*{12}{c|}}
\hline
\texttt{remedies}  & \begin{sideways}\textbf{$\exists$ r(cp\_11)}\end{sideways} & \begin{sideways}\textbf{$\exists$ r(cp\_10)}\end{sideways} & \begin{sideways}\textbf{$\exists$ r(cp\_3)}\end{sideways} & \begin{sideways}\textbf{$\exists$ r(cp\_0)}\end{sideways} & \begin{sideways}\textbf{$\exists$ r(cp\_7)}\end{sideways} & \begin{sideways}\textbf{$\exists$ r(cp\_9)}\end{sideways} & 
\begin{sideways}\textbf{$\exists$ r(cp\_5)} \end{sideways}& \begin{sideways}\textbf{$\exists$ r(cp\_1)}\end{sideways} & \begin{sideways}\textbf{$\exists$ r(cp\_4)}\end{sideways} & \begin{sideways}\textbf{$\exists$ r(cp\_6)}\end{sideways} & \begin{sideways}\textbf{$\exists$ r(cp\_2)}\end{sideways} & \begin{sideways}\textbf{$\exists$ r(cp\_8)}\end{sideways}\\
\hline
\textbf{remedy0}& $\times$ & $\times$  &  & & $\times$ & $\times$ & $\times$ & $\times$ & $\times$ & $\times$  &  & \\
\hline
\textbf{remedy1} &  $\times$ & $\times$  &  & & $\times$ & $\times$ & $\times$ & $\times$ & $\times$ & $\times$  & & $\times$ \\
\hline
\textbf{remedy2}&  $\times$ & $\times$ & $\times$  & & $\times$ & $\times$ & $\times$ & $\times$ & $\times$ & $\times$  & & $\times$ \\
\hline
\textbf{remedy3} &  $\times$ & $\times$ & $\times$  &  &  &  &  &  & & $\times$ & $\times$ & $\times$ \\
\hline
\end{tabular}
}
\end{table}

%% file: tex__rca_gsh.tex
\section{Relational Concept Analysis based on AOC-posets}
\label{sec:rca_gsh}

A variant of RCA based on AOC-posets --called RCA-AOC-- was introduced by \cite{DBLP:conf/cla/DolquesBH13}. We here  explain its principles relying on the same relational context family.
The AOC-poset associated with a concept lattice is its suborder restricted to concepts that introduce at least one object or one attribute. 
The AOC-posets built on the Plants-Remedies relational context family are shown in Fig.~\ref{fig:GSHremedesplantes}.
Three concepts have been removed from the Plant concept lattice of Fig.~\ref{fig:latticeplants}: \texttt{cp\_11} (top), \texttt{cp\_0} (bottom), and \texttt{cp\_7} (internal), leading to the Plant AOC-poset on the right-hand side of Fig. \ref{fig:GSHremedesplantes} (note that the concepts have been renumbered between Fig. \ref{fig:latticeplants} and Fig. \ref{fig:GSHremedesplantes}).

\begin{figure}[htb]
\begin{minipage}{0.35\linewidth}
\centering
\includegraphics[width=0.40\linewidth]{Remedes__remedestep0.pdf}
\end{minipage}
\begin{minipage}{0.65\linewidth}
\centering
\includegraphics[width=0.90\linewidth]{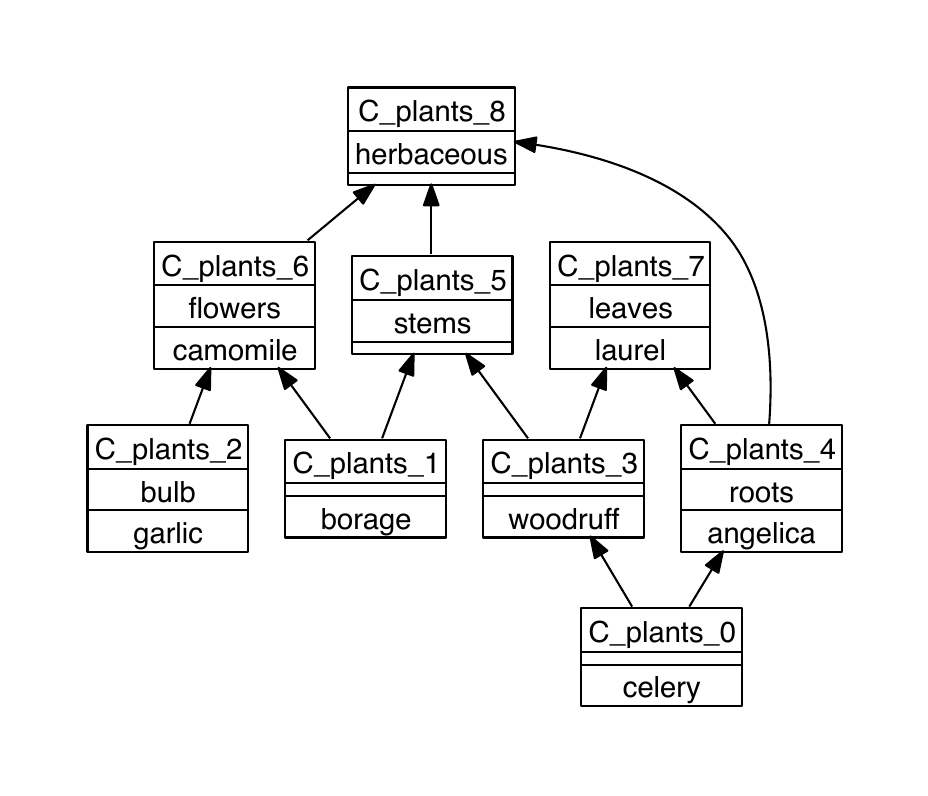}
\end{minipage}
\caption{AOC-posets of Remedies (left) and Plants (right).}
\label{fig:GSHremedesplantes} 
\end{figure}

Now, since we rely on AOC-posets, the existential scaling operator is defined slightly differently from classical RCA \citep{rouane2013}. While the scaling operation on RCA relies on the fact that all the concepts of the concept lattice have been created (see Definitions \ref{def:exist-scaling} and \ref{def:univ-scaling}), we acknowledge here that only a subset of the concept lattice, such as an AOC-poset, can be considered to build relational attributes. Then the previous definitions of scaling operators are unchanged except that ${\cal C}_{t_r}$ is any set of concepts, not necessarily the entire set of concepts of the concept lattice.

Table \ref{table:existsplants} shows $\mathbb{S}_{\exists }(\mathcal{K}_{\mathit{Remedies}},r_{\mathit{contains}},{\cal C}_{\mathit{Plants}})$ where
${\cal C}_{Plants}$ is the set of concepts of the AOC-poset of the right-hand side of Fig.~\ref{fig:GSHremedesplantes}.
$I^{+}$ contains $(\mathtt{remedy3},$ $\exists~{\mathit{contains}}(\mathtt{cp\_1}))$, $(\mathtt{remedy3},$ $\exists~{\mathit{contains}}(\mathtt{cp\_2}))$, $(\mathtt{remedy3},$ $\exists~{\mathit{contains}}(\mathtt{cp\_5}))$, $(\mathtt{remedy3},$ $\exists~{\mathit{contains}}(\mathtt{cp\_6}))$ and $(\mathtt{remedy3},\exists~{\mathit{contains}}(\mathtt{cp\_8}))$ because ${contains}$ $(\mathtt{remedy3})=\{camomile, garlic, borage\}$ and each of the pointed concepts contains at least one of those plants in its extent.

\input{Remedes__AOCstep2-0}

In the case of the strict universal scaling operator, 
$I^{+}=\{ (o, \exists\forall r(C)) | o \in G, C \in {\cal C}_{t_r}, r(o) \subseteq Extent(C) ~and~ r(o) \neq \emptyset\}$.
Table \ref{table:forallplants} shows 
 $\mathbb{S}_{\exists\forall }(\mathcal{K}_{Remedies},$ $\mathit{contains},$ ${\cal C}_{\mathit{Plants}})$.
In this case, $I^{+}$ contains e.g.   $(\mathtt{remedy3},$ $\exists\forall~{\mathit{contains}}(\mathtt{cp\_6}))$ because all plants found in \texttt{remedy3} belong to the extent of this concept. But $I^{+}$  does not contain $(\mathtt{remedy3},$ $\exists\forall~{\mathit{contains}}(\mathtt{cp\_2}))$ because \texttt{remedy3} contains plants (e.g. \textit{camomile}) that are not in the extent of this concept (reduced to \textit{garlic}).

\input{Remedes__AOCuniversel}

\begin{figure}[htb]
\centering
\includegraphics[width=0.6\linewidth]{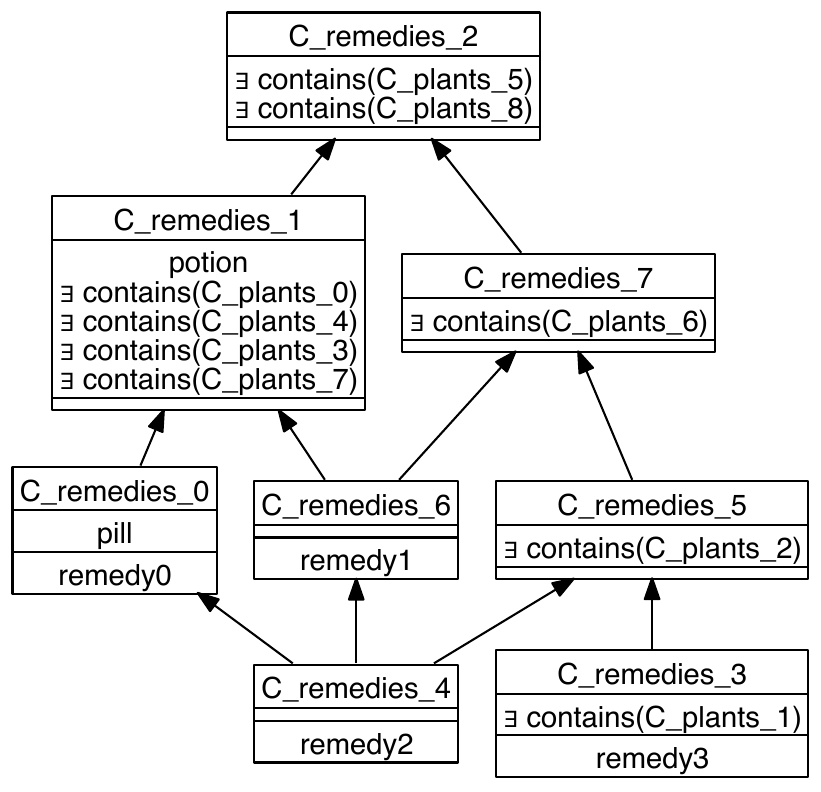}
\caption{AOC-poset for the $K_{Remedies}$ formal context extended with the existential scaling on ${\cal C}_{Plants}$.}
\label{fig:step0-Remedies}
\end{figure}

As for RCA, the whole construction process consists in building a (possibly infinite) sequence of contexts and AOC-posets associated with $({\mathbf K},{\mathbf R})$ and $\rho$.   
The difference here with classical RCA is that step $p+1$ relies on the original context ${\mathbf K}$ and the concept set of AOC-posets ${\mathbf C}^p$ to build ${\mathbf K}^{p+1}$, rather than on ${\mathbf  K}^{p}$ and the set of concepts from lattices of step $p$ (this is discussed in Section \ref{sec_rcaacoconv}).

In the current simple example, the following steps do not produce any new concept (a fixpoint is reached). More complex datasets may include cycles between objects. This can lead to convergence problems in the RCA-AOC process, as discussed in the following section.

\paragraph*{RCA-AOC output as graphs}
Obviously, patterns built from AOC-posets are included in the corresponding patterns built from RCA lattices. Furthermore, since building patterns requires removing redundancies, i.e. concepts that bring no information, building patterns based on AOC-poset is more straightforward than based on RCA lattices and should lead to the same result.
For example, Remedy AO-concept\footnote{An AO-concept is a concept introducing at least one attribute or object, i.e. it belongs to the AOC-poset.} \texttt{cr\_0}  in Fig. \ref{fig:step0-Remedies} corresponds to  concept \texttt{cr\_0}  in Fig. \ref{fig:latticeremedes2}. It has only six relational attributes, pointing at  Plant AO-concepts \texttt{cp\_0}, \texttt{cp\_3}, \texttt{cp\_4}, \texttt{cp\_5}, \texttt{cp\_7}, and \texttt{cp\_8}. Plant AO-concepts \texttt{cp\_5} and \texttt{cp\_8} introduce only attributes that are inherited by lower AO-concepts, \texttt{cp\_7} introduces an object that is  not linked to remedies of AO-concept \texttt{cr\_0}. In the final pattern only  \texttt{cp\_0} (\texttt{celery}), \texttt{cp\_3} (\texttt{woodruff}, \texttt{celery}) and \texttt{cp\_4} (\texttt{angelica}, \texttt{celery}) are linked to Remedy AO-concept \texttt{cr\_0} as for Remedy concept \texttt{cr\_0} in Fig. \ref{fig:graphremedesplant}.

%% file: Remedes__AOCstep2-0.tex
\begin{table}
\caption{Existential Scaling of Formal Context of Remedies with AOC-poset of Plants $\mathbb{S}_{\exists}(K_{Remedies}, r_\mathit{contains},{\cal C}_{Plants})$. $\exists r(\texttt{cp\_i})$ abbreviates $\exists {r_\mathit{contains}}(\texttt{cp\_i})$.}
\label{table:existsplants}
\centering \scriptsize
\begin{tabular}{|l|*{9}{c|}}

\hline
\texttt{remedies} & \begin{sideways}\textbf{$\exists$ r(cp\_2)}\end{sideways} & \begin{sideways}\textbf{$\exists$ r(cp\_0)}\end{sideways} & \begin{sideways}\textbf{$\exists$ r(cp\_4)}\end{sideways} & \begin{sideways}\textbf{$\exists$ r(cp\_3)}\end{sideways} & \begin{sideways}\textbf{$\exists$ r(cp\_5)}\end{sideways} & \begin{sideways}\textbf{$\exists$ r(cp\_7)}\end{sideways} & \begin{sideways}\textbf{$\exists$ r(cp\_8)}\end{sideways} & \begin{sideways}\textbf{$\exists$ r(cp\_1)}\end{sideways} & \begin{sideways}\textbf{$\exists$ r(cp\_6)}\end{sideways}\\
\hline
\textbf{remedy0}&  & $\times$ & $\times$ & $\times$ & $\times$ & $\times$ & $\times$  &  & \\
\hline
\textbf{remedy1} &  & $\times$ & $\times$ & $\times$ & $\times$ & $\times$ & $\times$  & & $\times$ \\
\hline
\textbf{remedy2}&  $\times$ & $\times$ & $\times$ & $\times$ & $\times$ & $\times$ & $\times$  & & $\times$ \\
\hline
\textbf{remedy3} &   $\times$  &  &  & & $\times$  & & $\times$ & $\times$ & $\times$ \\
\hline
\end{tabular}
\end{table}

%% file: Remedes__AOCuniversel.tex
\begin{table}[htb]
    \centering
   \caption{Strict universal scaling of formal context of Remedies with AOC-poset of Plants $\mathbb{S}_{\exists\forall}(K_{Remedies}, r_\mathit{contains},{\cal C}_{Plants})$. $\exists\forall r(\texttt{cp\_i})$ abbreviates $\exists\forall {r_\mathit{contains}}(\texttt{cp\_i})$.}
\label{table:forallplants}
\scriptsize

\begin{tabular}{|l|*{9}{c|}}

\hline
\texttt{remedies} &  \begin{sideways}\textbf{$\exists\forall$ r(cp\_2)}\end{sideways} & \begin{sideways}\textbf{$\exists\forall$ r(cp\_0)}\end{sideways} & \begin{sideways}\textbf{$\exists\forall$ r(cp\_4)}\end{sideways} & \begin{sideways}\textbf{$\exists\forall$ r(cp\_3)}\end{sideways} & \begin{sideways}\textbf{$\exists\forall$ r(cp\_5)}\end{sideways} & \begin{sideways}\textbf{$\exists\forall$ r(cp\_7)}\end{sideways} & \begin{sideways}\textbf{$\exists\forall$ r(cp\_8)}\end{sideways} & \begin{sideways}\textbf{$\exists\forall$ r(cp\_1)}\end{sideways} & \begin{sideways}\textbf{$\exists\forall$ r(cp\_6)}\end{sideways}\\
\hline
\textbf{remedy0}&   &  &  &  & & $\times$ & $\times$  &  & \\
\hline
\textbf{remedy1} &  &  &  &  &  & & $\times$  &  & \\
\hline
\textbf{remedy2}&   &  &  &  &  & & $\times$  &  & \\
\hline
\textbf{remedy3} &   &  &  &  &  & & $\times$  & & $\times$ \\
\hline
\end{tabular}
\end{table}

%% file: tex__convergence.tex
\section{Convergence issues in RCA-AOC}
\label{sec:convergence}

The convergence of an iterative process is the property of reaching a stable solution after a finite number of steps.
This notion is fundamental, as it enables the design of practical stopping criteria that can be effectively verified.
The stop condition for RCA and RCA-AOC is the equivalence of concept-posets (concept lattice or AOC-poset) associated with the same initial context between two successive iterations. 
Two concept-posets are equivalent if the sets of their extents are equal, as the equivalence of two concepts 
is the equality of their extents.

 Convergence is ensured with the RCA specification from \cite{rouane2013} where the set of concepts used at each step for building relational attributes is the set of concepts of the whole lattice. Besides, it makes it possible to interpret the last built lattices using only the last step lattices, while forgetting the lattices of the previous steps. 
By interpretation, we mean that every concept occurring in the expression of a relational attribute at the last step is present in one of the lattices obtained at this step. As a consequence, we can interpret these attributes recursively, as done by \cite{DBLP:conf/concepts/GutierrezHMZ25}, or extract a graph representation, as in \citep{nica-kbs2020,nica-dam2020}, without encountering any concept that is not defined at this step.
Unfortunately, RCA-AOC may not converge in the general case. 
In this section, we illustrate this divergence through several examples based on the two scaling operators  $\exists$ and $\exists\forall$. The existential scaling operator is the most commonly used in practice, and the strict universal scaling operator is a natural alternative for discovering more restricted concepts.
The first example (Sect. \ref{sec_uml}) is a concrete situation that may arise when applying RCA-AOC in the software engineering domain, more specifically when using it as a tool to support conceptual model refactoring. The next two examples (Sect. \ref{sec:counterexamples}) are small formal cases that highlight typical divergence patterns for $\exists$ and $\exists\forall$ operators respectively.

\input{tex__umlBank}

\subsection{Diverging examples}\label{sec:counterexamples}

In this section, we present two small formal examples {highlighting the mechanism of divergence for two quantifiers}.

\subsubsection{Existential scaling}

This example reduces the UML divergence example 
to its essence. 
The RCF  is presented in Table \ref{tab:rcf-exists}, and the first steps of the process are illustrated by Table \ref{tab:gsh-exists} with all the created AOC-posets.

\begin{table}[htb!]
\centering
\caption{RCF for the counterexample illustrating divergence using the existential scaling operator ($\exists$) on each relation. \label{tab:rcf-exists}}
\footnotesize
\begin{tabular}{|*{2}{@{\hspace{0.07cm}}c@{\hspace{0.07cm}}|}}
\hline
Formal Contexts&Relations\\
\hline
\rule[-0.7cm]{0pt}{0cm}
\begin{tabular}{|*{1}{@{\hspace{0.07cm}}c@{\hspace{0.07cm}}|}}
\hline
\textbf{$K_1$}\\
\hline
o1\\

o2\\
\hline
\end{tabular}
\begin{tabular}{|*{3}{@{\hspace{0.07cm}}c@{\hspace{0.07cm}}|}}
\hline
\textbf{$K_2$}&a1&a2\\
\hline
o3&$\times$&\\

o4&&$\times$\\
\hline
\end{tabular}
\begin{tabular}{|*{3}{@{\hspace{0.07cm}}c@{\hspace{0.07cm}}|}}
\hline
\textbf{$K_3$}&a3&a4\\
\hline
o5&$\times$&\\

o6&&$\times$\\
\hline
\end{tabular}
\begin{tabular}{|*{3}{@{\hspace{0.07cm}}c@{\hspace{0.07cm}}|}}
\hline
\textbf{$K_4$}&a5&a6\\
\hline
o7&$\times$&\\

o8&&$\times$\\
\hline
\end{tabular}
&
\begin{tabular}{|*{3}{@{\hspace{0.07cm}}c@{\hspace{0.07cm}}|}}
\hline
$r_1$ &o3&o4\\
\hline
o1     &$\times$&   \\
o2     & &$\times$  \\
\hline
\end{tabular}
\begin{tabular}{|*{3}{@{\hspace{0.07cm}}c@{\hspace{0.07cm}}|}}
\hline
$r_2$ &o1&o2\\
\hline
o7      &$\times$&   \\
o8      & &$\times$  \\
\hline
\end{tabular}
\begin{tabular}{|*{3}{@{\hspace{0.07cm}}c@{\hspace{0.07cm}}|}}
\hline
$r_3$ &o7&o8\\
\hline
o5      &$\times$&  \\
o6      & &$\times$   \\
\hline
\end{tabular}
\begin{tabular}{|*{3}{@{\hspace{0.07cm}}c@{\hspace{0.07cm}}|}}
\hline
$r_4$ &o5&o6\\
\hline
o7      &$\times$&  \\
o8      & &$\times$  \\
\hline
\end{tabular}
\rule{0pt}{0.8cm}
\\
\hline
\end{tabular}
\end{table}

\begin{table}[ht]
\centering
\caption{AOC-posets obtained from applying RCA-AOC on the RCF from Table \ref{tab:rcf-exists} using the existential scaling operator ($\exists$) on each relation. 
For each step, AOC-posets come, from left to right, from $K_1$, $K_2$, $K_3$ and $K_4$.\label{tab:gsh-exists}}

\begin{tabular}{|c|cccc|}
\hline
\begin{sideways}step 0\end{sideways}&
\includegraphics[scale=0.35]{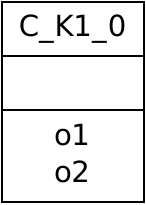}&
\includegraphics[scale=0.35]{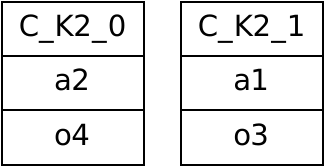}&
\includegraphics[scale=0.35]{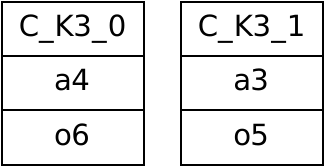}&
\includegraphics[scale=0.35]{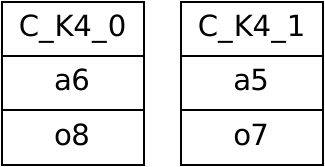}
\\ \hline
\begin{sideways}step 1\end{sideways}&

\includegraphics[scale=0.35]{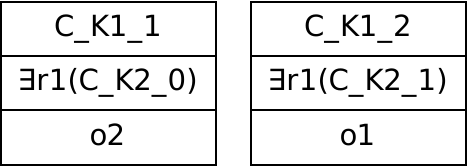}&
\includegraphics[scale=0.35]{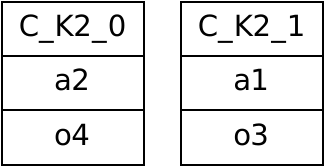}&
\includegraphics[scale=0.35]{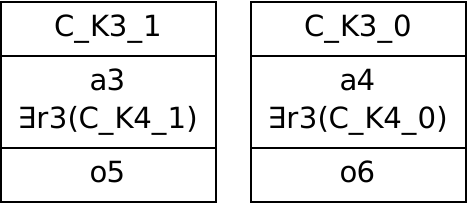}&
\includegraphics[scale=0.35]{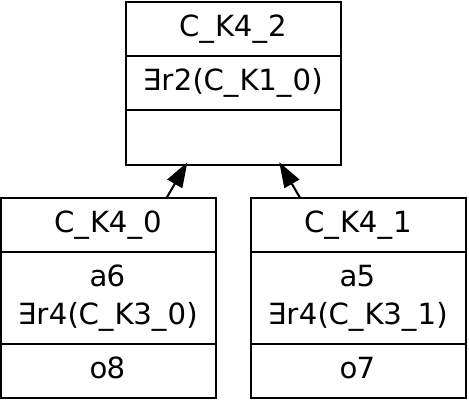}
\\ \hline
\begin{sideways}step 2\end{sideways}&
\includegraphics[scale=0.35]{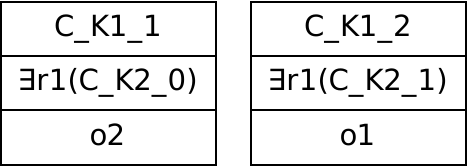}&
\includegraphics[scale=0.35]{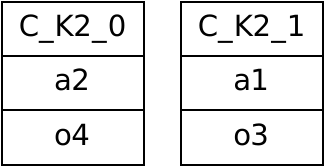}&
\includegraphics[scale=0.35]{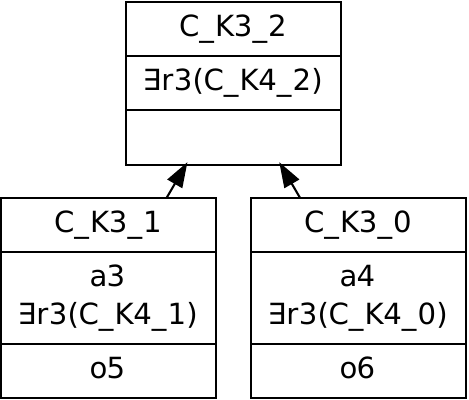}&
\includegraphics[scale=0.35]{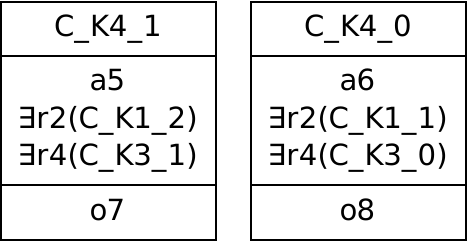}
\\ \hline
\begin{sideways}step 3\end{sideways}&
\includegraphics[scale=0.35]{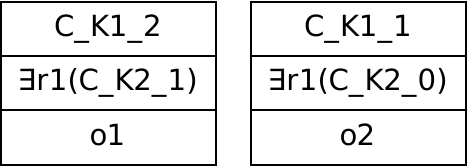}&
\includegraphics[scale=0.35]{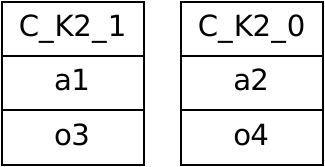}&
\includegraphics[scale=0.35]{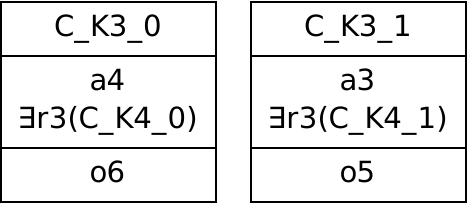}&
\includegraphics[scale=0.35]{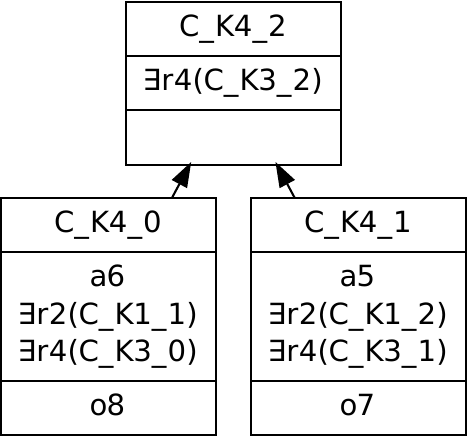}
\\ \hline
\end{tabular}
\end{table}

Let us look at the AOC-posets built during the process and shown in Table~\ref{tab:gsh-exists}.
To help the reader, a concept name contains the context from which the concept is built (e.g. C\_K1\_0 is the first concept built from the $K_1$ context) and the same name is reused for the same concept in the following steps.
In step 0, seven concepts are created, one for context $K_1$, and two in the other contexts.
In step 1, C\_K4\_2 is created depending on C\_K1\_0 from step 0 and contains no object in its simplified extent. So as C\_K1\_0 is removed in step 1, it leads to the removal of C\_K4\_2 in step 2 with no creation of new concepts in this AOC-poset.  However C\_K3\_2 has been created in step 2 because of the presence of C\_K4\_2 in step 1, and has an empty simplified extent. In step 3, C\_K3\_2 is thus removed, and no new concept is created in this AOC-poset. 
{In the same step,  C\_K4\_2 reappears because of C\_K3\_2: we have an inter-dependency between two concepts in a way that they will appear alternately. Step 3 is equivalent to step 1, step 4 will be equivalent to step 2, step 5 to step 3, and so on. This leads to an infinite loop.}

\subsubsection{Strict Universal scaling}

 Table \ref{tab:rcf-forall} presents an RCF which, if we use the strict universal operator for every relation, makes the RCA-AOC process loop infinitely. The first steps are detailed in Table \ref{tab:gsh-forall} by showing the AOC-posets obtained. The divergence of the process appears at step 4 as we obtain AOC-posets equivalent to those of step 0, which means that step 5 will be equivalent to step 1, etc.

\begin{table}[htb!]
\centering
\caption{RCF for the counterexample illustrating divergence using the strict universal scaling operator ($\exists\forall$) on each relation. \label{tab:rcf-forall}}
\footnotesize
\begin{tabular}{|*{2}{@{\hspace{0.1cm}}c@{\hspace{0.1cm}}|}}
\hline
Formal Contexts&Relations\\
\hline
\rule[-0.65cm]{0pt}{0cm}
\begin{tabular}{|*{2}{@{\hspace{0.1cm}}c@{\hspace{0.1cm}}|}}
\hline
\textbf{$K_1$}&a1\\
\hline
o1&\\

o2&\\
\hline
\end{tabular}
\begin{tabular}{|*{2}{@{\hspace{0.1cm}}c@{\hspace{0.1cm}}|}}
\hline
\textbf{$K_2$}&a2\\
\hline
o3&\\

o4&\\
\hline
\end{tabular}
\begin{tabular}{|*{2}{@{\hspace{0.1cm}}c@{\hspace{0.1cm}}|}}
\hline
\textbf{$K_3$}&a3\\
\hline
o5&\\

o6&\\
\hline
\end{tabular}
&
\begin{tabular}{|*{3}{@{\hspace{0.1cm}}c@{\hspace{0.1cm}}|}}
\hline
$R_1$ &o5&o6\\
\hline
o1      &$\times$&   \\
o2      & &$\times$  \\
\hline
\end{tabular}
\begin{tabular}{|*{3}{@{\hspace{0.1cm}}c@{\hspace{0.1cm}}|}}
\hline
$R_2$ &o5&o6\\
\hline
o3      &$\times$&   \\
o4      & &$\times$  \\
\hline
\end{tabular}
\begin{tabular}{|*{3}{@{\hspace{0.1cm}}c@{\hspace{0.1cm}}|}}
\hline
$R_3$ &o1&o2\\
\hline
o5      &$\times$&$\times$  \\
o6      & &   \\
\hline
\end{tabular}
\begin{tabular}{|*{3}{@{\hspace{0.1cm}}c@{\hspace{0.1cm}}|}}
\hline
$R_4$ &o3&o4\\
\hline
o5     & &  \\
o6      &$\times$&$\times$  \\
\hline
\end{tabular}
\rule{0pt}{0.8cm}
\\

\hline
\end{tabular}
\end{table}

\begin{table}[htbp]
\centering
\caption{AOC-posets obtained from applying RCA-AOC on the relational context family from  Table \ref{tab:rcf-forall} using the strict universal scaling operator ($\exists\forall$) on each relation. For each step, AOC-posets come, from left to right, from $K_1$, $K_2$ and $K_3$. \label{tab:gsh-forall}}

\begin{tabular}{|c|ccc|}
\hline
\begin{sideways}step 0\end{sideways}&
\includegraphics[scale=0.35]{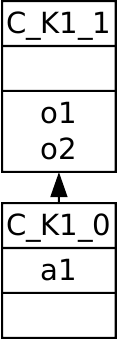}&
\includegraphics[scale=0.35]{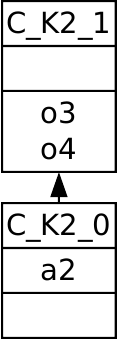}&
\includegraphics[scale=0.35]{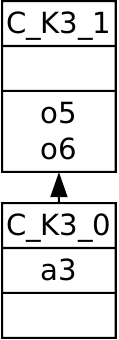}
\\ \hline
\begin{sideways}step 1 \end{sideways}&
\includegraphics[scale=0.35]{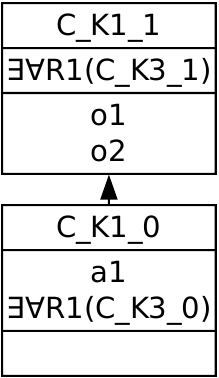}&
\includegraphics[scale=0.35]{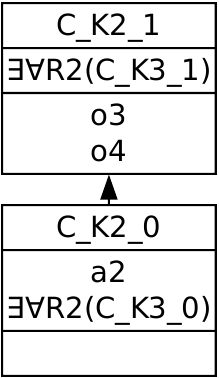}&
\includegraphics[scale=0.35]{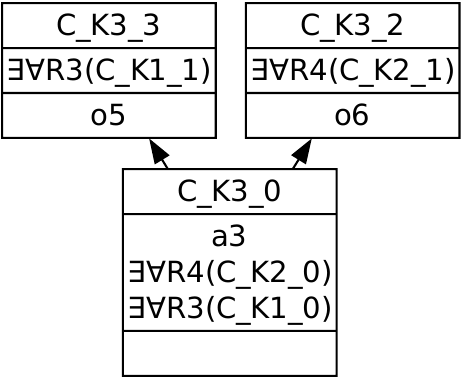}
\\ \hline
\begin{sideways}step 2\end{sideways}&
\includegraphics[scale=0.35]{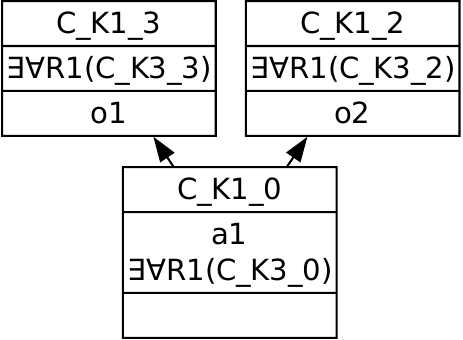}&
\includegraphics[scale=0.35]{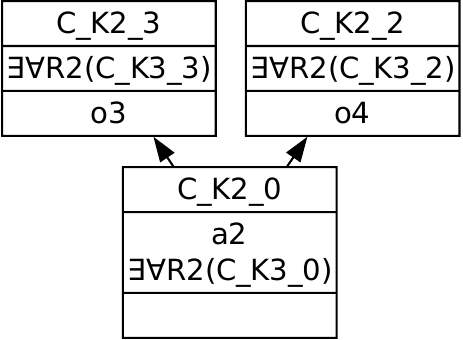}&
\includegraphics[scale=0.35]{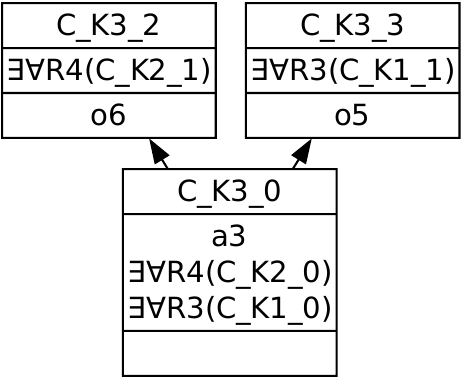}
\\ \hline
\begin{sideways}step 3\end{sideways}&
\includegraphics[scale=0.35]{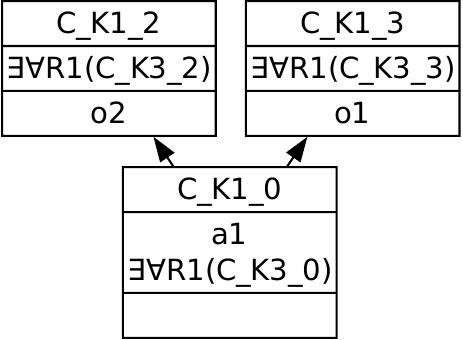}&
\includegraphics[scale=0.35]{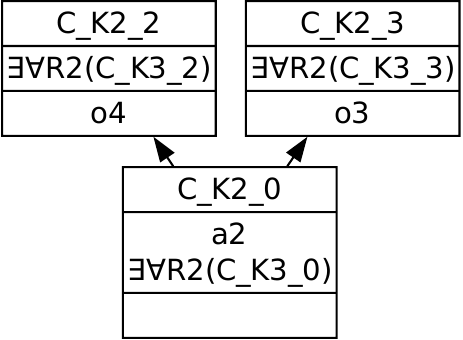}&
\includegraphics[scale=0.35]{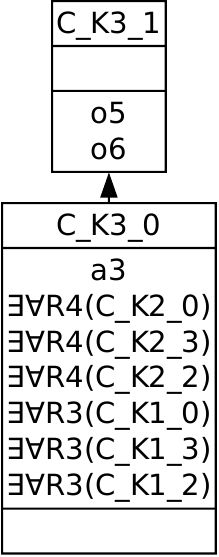}
\\ \hline
\begin{sideways}step 4\end{sideways}&
\includegraphics[scale=0.35]{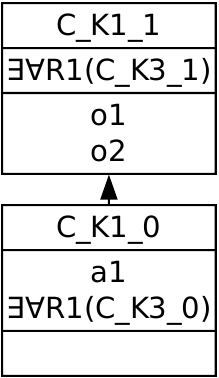}&
\includegraphics[scale=0.35]{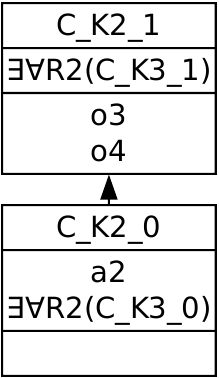}&
\includegraphics[scale=0.35]{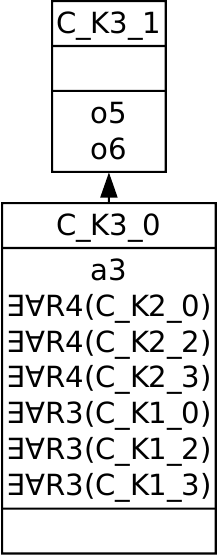}\\
\hline
\end{tabular}
\end{table}

Let us look at the AOC-posets built during the process and shown in Table~\ref{tab:gsh-forall}. Concepts are named as previously.
The existence of C\_K3\_2 and C\_K3\_3 (which appear at step 1) depends on the existence of C\_K2\_1 and C\_K1\_1. If C\_K2\_1 and C\_K1\_1 are not present at the previous step then C\_K3\_2 and C\_K3\_3 cannot be created as we see at steps 3 and 4. Instead their extents (composed respectively of o6 and o5) are contained in the extent of C\_K3\_1, that cannot appear simultaneously with C\_K3\_2 and C\_K3\_3 in an AOC-poset.

On the other hand, C\_K1\_2, C\_K1\_3, C\_K2\_2 and C\_K2\_3 (steps 2 and 3) depend on the existence of C\_K3\_2 and C\_K3\_3 at the previous step. When C\_K1\_2, C\_K1\_3, C\_K2\_2 and C\_K2\_3 appear, C\_K1\_1 and C\_K2\_1 cannot appear at the same step.
Those inter-dependencies lead to the divergence of the process. 

These two counterexamples show that convergence in RCA-AOC is not
always ensured. We present in the following some conditions to ensure it.

%% file: tex__umlBank.tex
\subsection{A concrete divergent example}
\label{sec_uml}

Divergence may arise in various concrete situations. In this section, we consider an application of FCA and RCA to software engineering. This application, developed over the years \citep{Godi93a,DBLP:conf/iccs/DaoHHRV04,HuchardHRV07,DBLP:conf/cla/MirallesMHNDD15,DBLP:conf/concepts/GuenouneGHLMMZ25}, aims to improve the abstraction level of object-oriented code or conceptual models (e.g. database schemas, ER diagrams, or UML class models) by adding new elements (classes, attributes, operations, etc.) that factor out common information and represent more general notions. The example we focus on is taken from the UML world. It is deliberately simple and only aims to illustrate concretely the possibility of divergence in this process.

A UML class model is primarily composed of classes that represent notions (concepts) relevant to the software being developed. For example, in software dedicated to the banking domain, one might introduce a class \texttt{BankAccount}. Additional information is attached to these classes, such as properties (also called attributes) and operations (also called methods). For instance, the class \texttt{BankAccount} can be enriched with a property \texttt{accountNum\-ber} and an operation \texttt{withdraw}. These elements are themselves described by further information, such as the type of a property (e.g. \texttt{String}) or the parameters of an operation (e.g. \texttt{amount} of type \texttt{Real}).

The UML metamodel, as defined by the OMG \citep{OMG-UML}, specifies all the entities that may appear in a UML model, and in particular in a class model.
We here rely on an excerpt of this metamodel, sketched in Fig. \ref{fig_bank_mm_uml1}.
A class owns operations and properties. 
Classes and properties are described by their name. In addition, properties have a type.  An operation owns parameters. Each parameter has a name, a direction, where \texttt{ParameterDirectionKind} is \texttt{in, inout, out, or return}, and a type. 
{This excerpt deliberately omits some meta-classes and meta-attributes of the UML metamodel — notably the name meta-attribute of Operation — so that the two operations are indistinguishable at first and are initially grouped into a single concept.}

\begin{minipage}[htb]{0.45\textwidth}
  \centering
  \includegraphics[width=\textwidth]{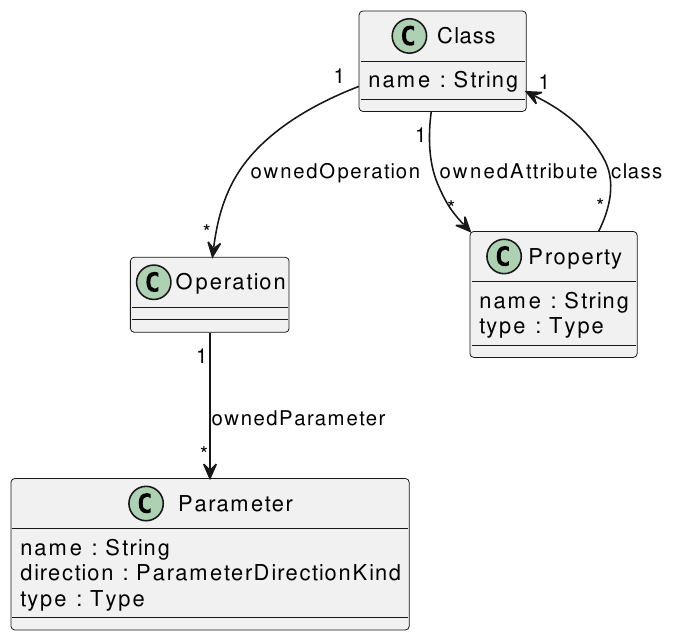}
  \captionof{figure}{Excerpt of the OMG UML metamodel. Arrows represent UML associations. Since we are describing a metamodel, the various elements (classes, associations, etc.) are referred to as meta-elements.}
  \label{fig_bank_mm_uml1}
\end{minipage}
\hfill
\begin{minipage}[htb]{0.45\textwidth}
  \centering
   \includegraphics[width=\textwidth]{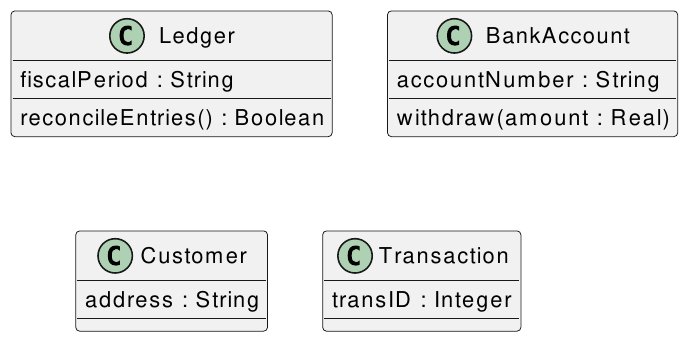}
   \vspace{0.2cm}
    \hrule
    \vspace{0.5cm}
   \includegraphics[width=\textwidth]{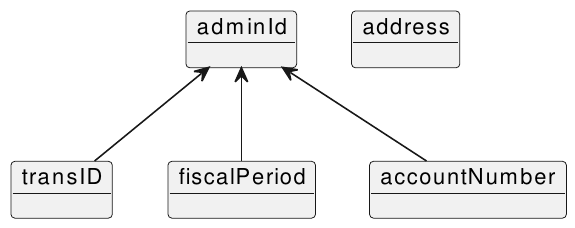}
  \captionof{figure}{Top: A simple UML model in the Bank domain. Bottom: a lexical resource. Arrows represent a hyponymy relation, pointing from the hyponym to the hypernym.
  }
  \label{fig_bank_m_uml2}
\end{minipage}
\vspace{.5cm}

An example of a succinct UML class model in the banking domain is shown at the top of Fig.~\ref{fig_bank_m_uml2}.
This class diagram (Bank model) comprises four classes, \texttt{Ledger}, \texttt{BankAccount}, \texttt{Customer} and \texttt{Transaction}.
\texttt{BankAccount} has been described previously. 
\texttt{Ledger} owns the property \texttt{fiscalPeriod} and the operation \texttt{reconcileEntries} which returns a Boolean. \texttt{Customer} is simply 
described by the property \texttt{address}, while \texttt{Transaction} has the property \texttt{transID}. 

To analyze the UML banking class model, for example for refactoring purposes, we need a description language for UML models, that is a UML metamodel. Based on this metamodel, we can encode the  banking class model as a Relational Context Family.
There are various ways of proceeding; a simple one consists in associating: (1) a formal context to each meta-class of the UML metamodel (i.e. \texttt{Class}, \texttt{Operation}, \texttt{Parameter}, \texttt{Property}); (2) a relational context to each meta-association (i.e. \texttt{ow\-ned\-Operation}, \texttt{class}, \texttt{ownedAttribute}, \texttt{ow\-ned\-Parameter}).
This is reflected respectively in the structure of Tables \ref{fc_for_bank1} and \ref{rc_for_bank2}.

\input{tables__rcftBank}

The formal and relational contexts are then populated with the elements of the UML model, here the simple Bank model.
For this step, the UML class model is viewed as an instantiation of the metamodel of Fig. \ref{fig_bank_mm_uml1} (the UML instance diagram is then used).
This instantiation is presented in Fig. \ref{fig_bank_instantiation}. 
It indicates for example that \texttt{Ledger} is a \texttt{Class}, \texttt{fiscalPeriod} is a \texttt{Property},  \texttt{withdraw} is an \texttt{Operation} which has \texttt{amount} as its input  \texttt{Parameter}. 
It is common to also consider additional information, e.g. information taken from a lexical resource that can be added as attributes of the contexts. For example, the bottom of Fig.~\ref{fig_bank_m_uml2} shows a possible lexical resource on attribute names. Arrows represent a hyponymy relation.

\begin{minipage}[t]{\textwidth}
  \centering
  \includegraphics[width=0.5\linewidth]{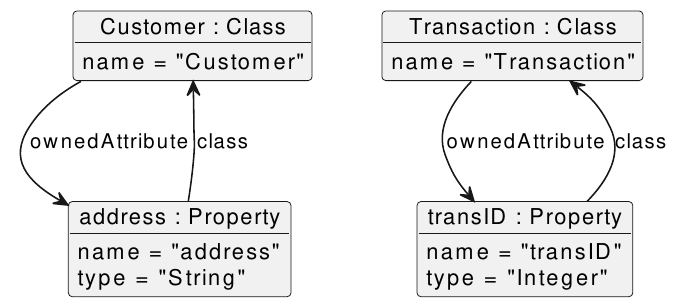}
    \includegraphics[width=\linewidth]{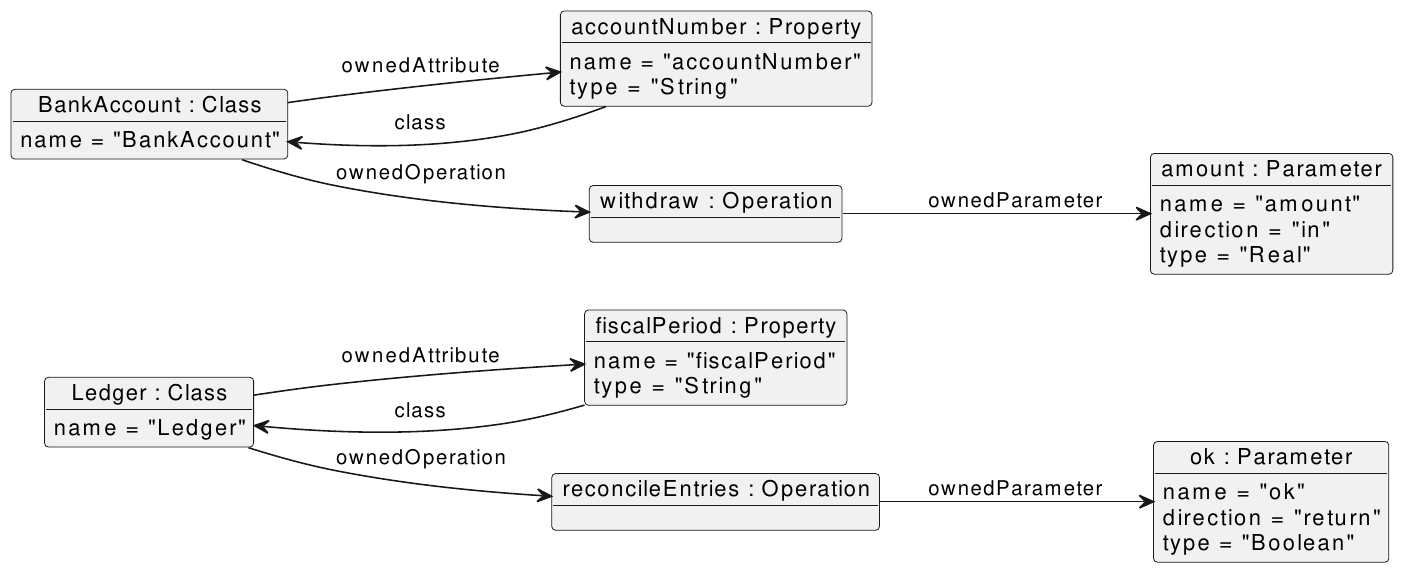}
  \captionof{figure}{The UML model of Fig. \ref{fig_bank_m_uml2} presented as an instantiation of the UML meta\-model of Fig. \ref{fig_bank_mm_uml1}.}
  \label{fig_bank_instantiation}
\end{minipage}

The RCA-AOC process can now be started. In this application, the $\exists$ operator is used, since the refactoring principle consists in adding a new superclass to classes that share at least one element (attribute or operation) appearing in the extent of a concept.

\begin{figure}[htb]
  \centering
    \includegraphics[width=0.8\linewidth]{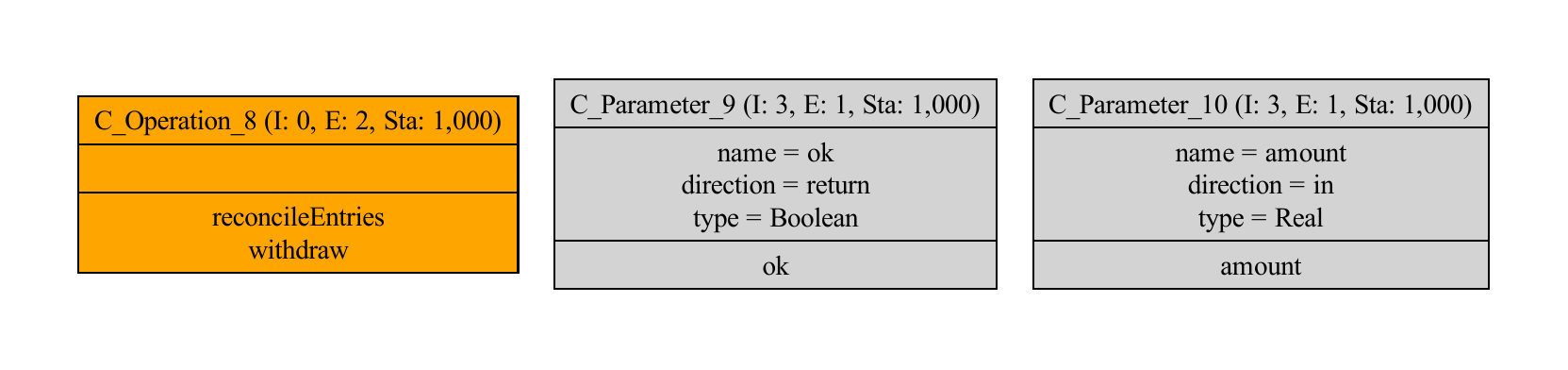}
     \includegraphics[width=\linewidth]{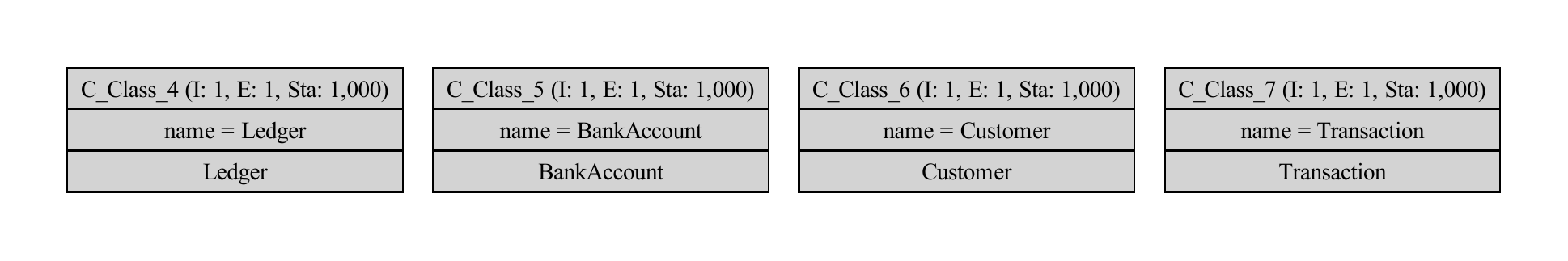}
     \includegraphics[width=\linewidth]{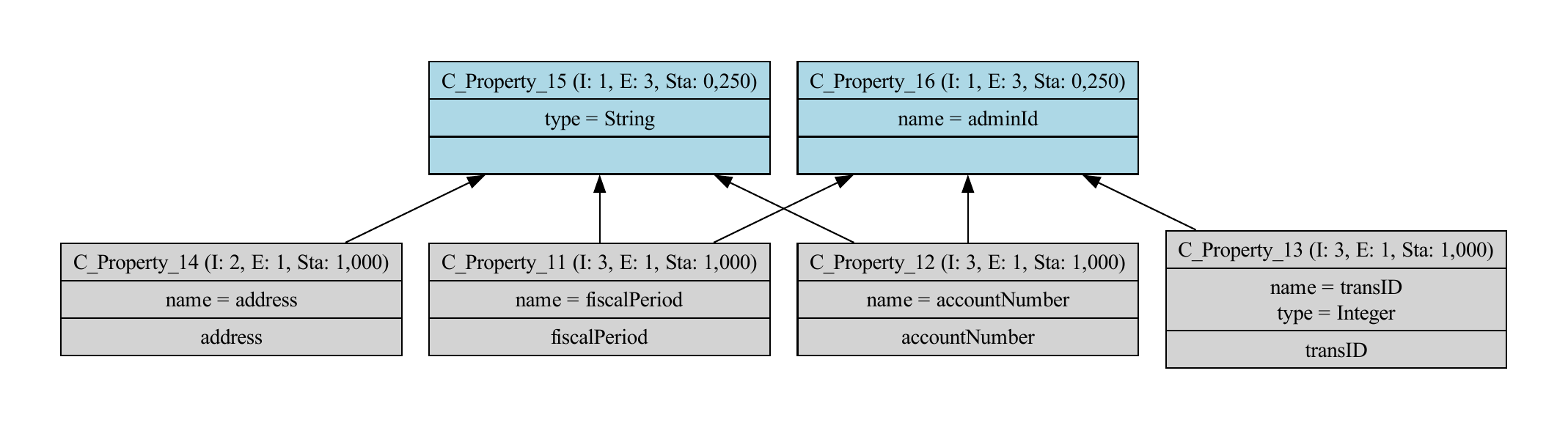}
    \caption{Bank example - AOC-posets at step 0. In the paper, \texttt{C\_Operation\_i} may be shortened to \texttt{co\_i}, \texttt{C\_Parameter\_i} may be shortened to  \texttt{cpa\_i}, \texttt{C\_Class\_i} may be shortened to  \texttt{cc\_i}, and \texttt{C\_Property\_i} may be shortened to \texttt{cpr\_i}.
    For each concept, I, E, and Sta denote the intent cardinality, extent cardinality, and stability metric in decimal comma notation, respectively.}
  \label{fig_bank_step0}
\end{figure}

At step 0 of the RCA-AOC process (Fig. \ref{fig_bank_step0}), both operations are grouped into the operation concept \texttt{co\_8}, as they have no attributes and are indistinguishable at first. Parameters are separated into distinct concepts \texttt{cpa\_9} and  \texttt{cpa\_10}.
Similarly classes are distributed in concepts \texttt{cc\_4} to \texttt{cc\_7}.
The property concept \texttt{cpr\_15} groups properties whose type is \texttt{String}.
The property concept \texttt{cpr\_16} groups properties whose name is a hyponym of \texttt{adminId}.

\begin{figure}[h!]
  \centering
      \includegraphics[width=\linewidth]{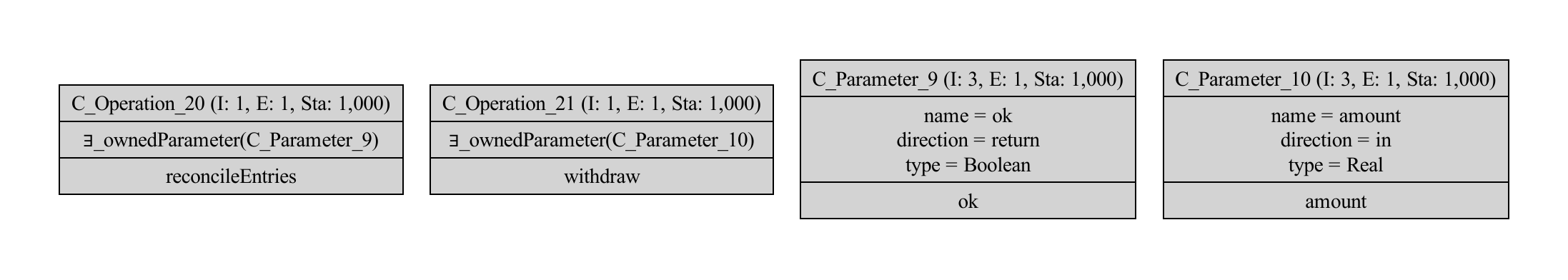}
    \includegraphics[width=0.8\linewidth]{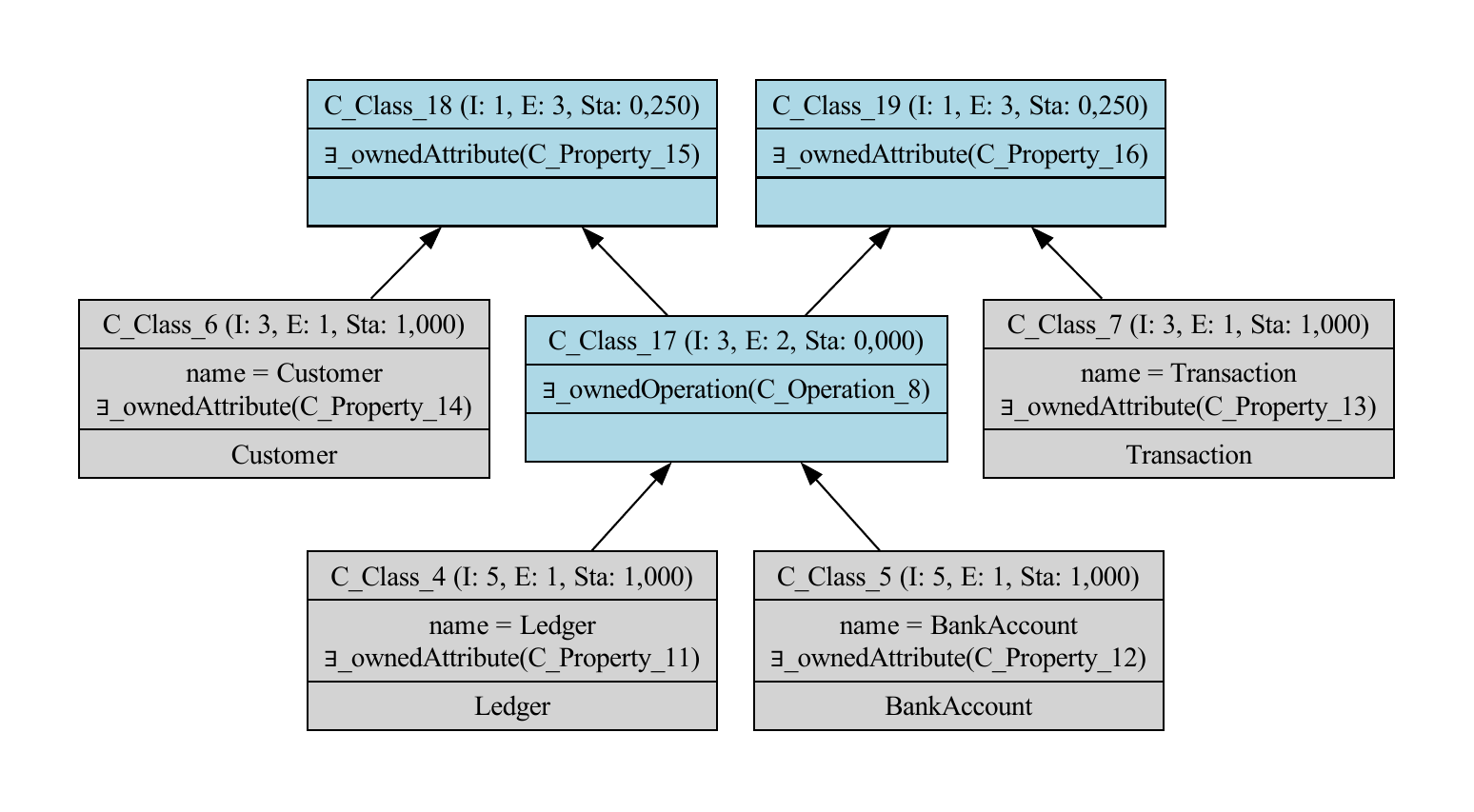}
      \includegraphics[width=\linewidth]{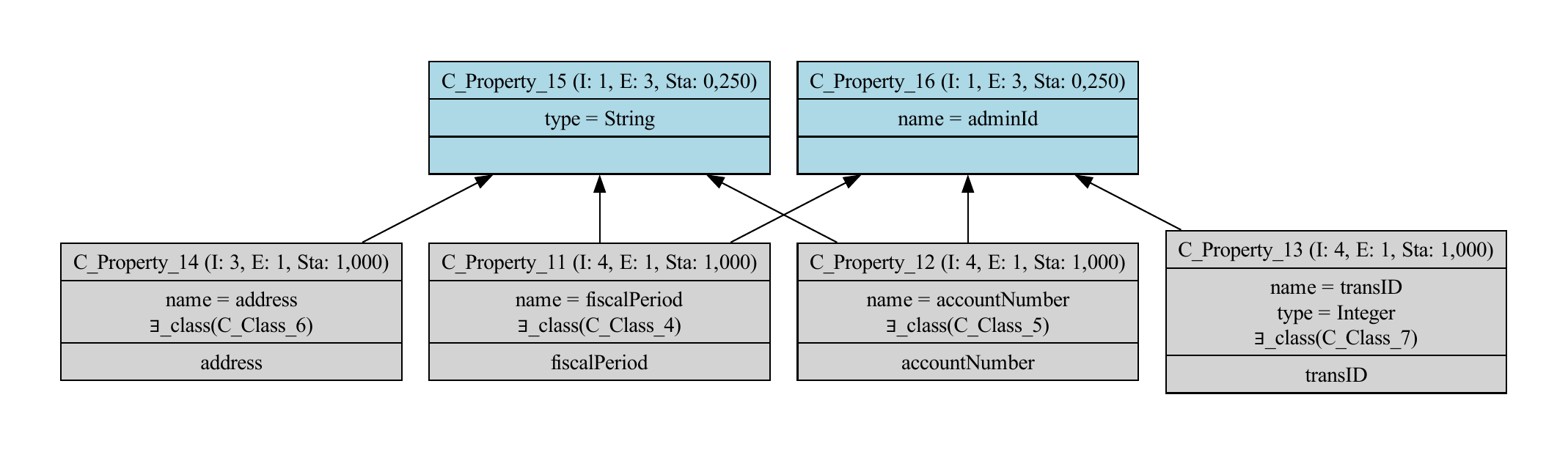}  
    \caption{Bank example - AOC-posets at step 1.}
  \label{fig_bank_step1}
\end{figure}

At step 1 (Fig. \ref{fig_bank_step1}), several  relational attributes are introduced. 
Firstly \texttt{$\exists$\_ownedAttribute(C\_Property\_15)}, aka \texttt{$\exists$\_ownedAttribute(type=String)} if we replace the concept by its intent, is shared by \texttt{Customer}, \texttt{Ledger} and \texttt{BankAccount} and factored out in concept \texttt{cc\_18}.
Secondly, \texttt{$\exists$\_ownedAttribute} \texttt{(cpr\_16)}, aka \texttt{$\exists$\_ownedAttribute(name=adminId)} if we replace the concept by its intent, is shared by \texttt{Transaction}, \texttt{Ledger} and \texttt{BankAccount}  and factored out in concept \texttt{cc\_19}.
Lastly, \texttt{$\exists$\_ownedOperation(co\_8)}  
is shared by \texttt{Ledger} and \texttt{BankAccount} and factored out in concept \texttt{cc\_17}.
Let us note that this last relational attribute
refers to a concept \texttt{co\_8}  absent at this step~1 {by construction of the relational attributes that are based on concepts of the previous step, i.e. step 0}.
Each bottom property concept (\texttt{cpr\_11} to \texttt{cpr\_14}) is completed with a relational attribute pointing to the class that owns the property, e.g. 
\texttt{$\exists$\_class(\texttt{cc\_6})} is added as \texttt{address} is owned by \texttt{Customer}, belonging to  \texttt{cc\_6}  extent.

\begin{figure}[h!]
  \centering
    \includegraphics[width=\linewidth]{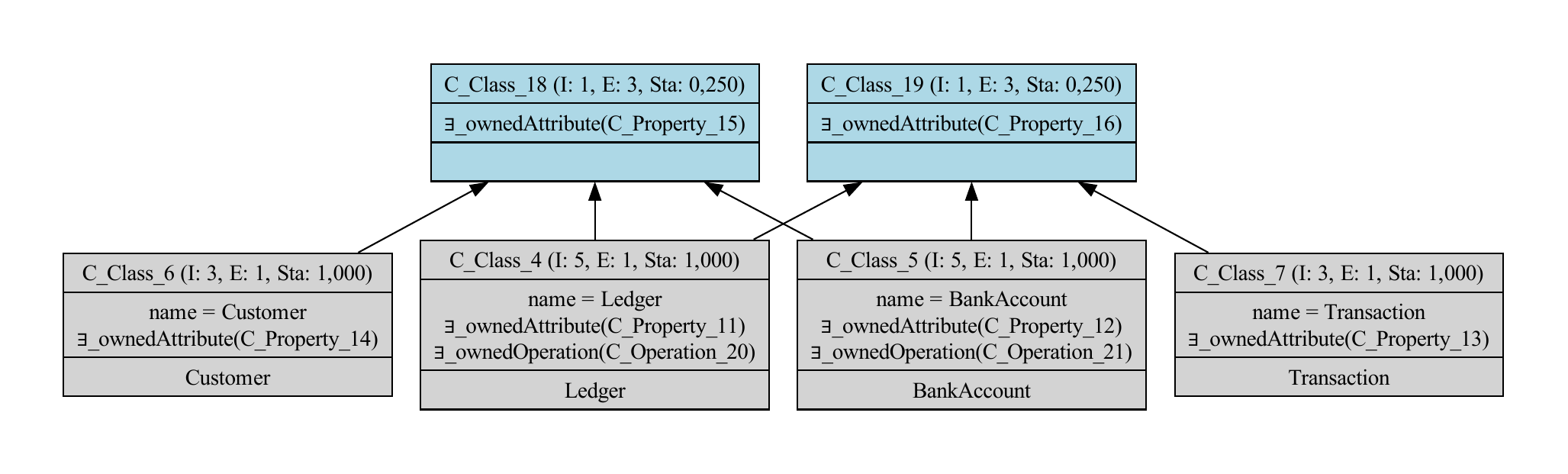}
      \includegraphics[width=0.8\linewidth]{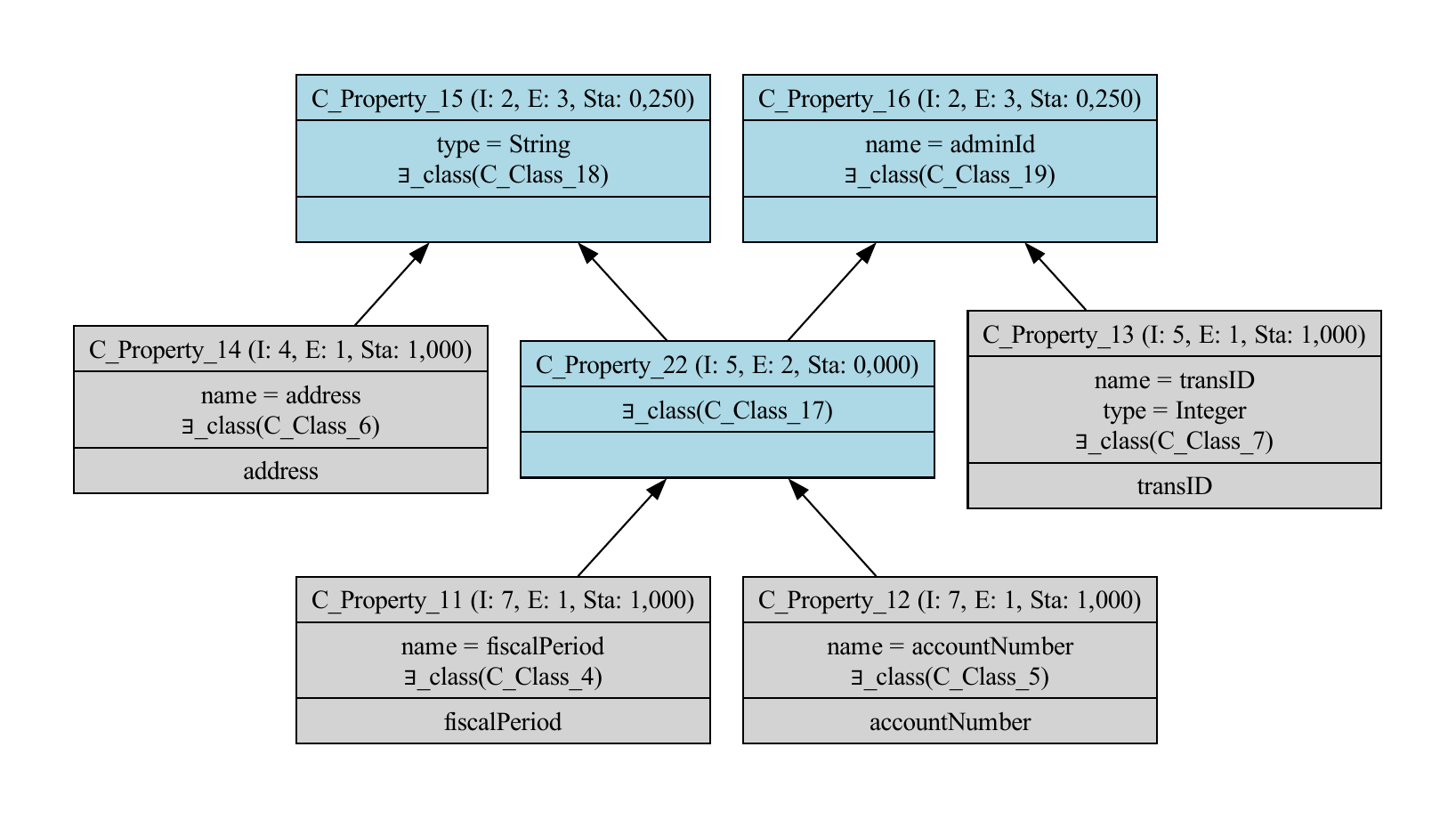}  
    \caption{Bank example - Class and Property AOC-posets at step 2 (Operation and Parameter AOC-posets are unchanged).}
  \label{fig_bank_step2}
\end{figure}

At step 2 (Fig. \ref{fig_bank_step2}), \texttt{cc\_17}, which existed at step 1, no longer exists, as \texttt{co\_8}, a concept that existed at step 0, no longer existed at step 1.
But as \texttt{cc\_17} was present at step 1, it is used to build \texttt{cpr\_22}, to factor out  \texttt{$\exists$\_class(cc\_17)} which is shared by \texttt{fiscalPeriod} and \texttt{accountNumber}.

\begin{figure}[htb]
  \centering
    \includegraphics[width=0.8\linewidth]{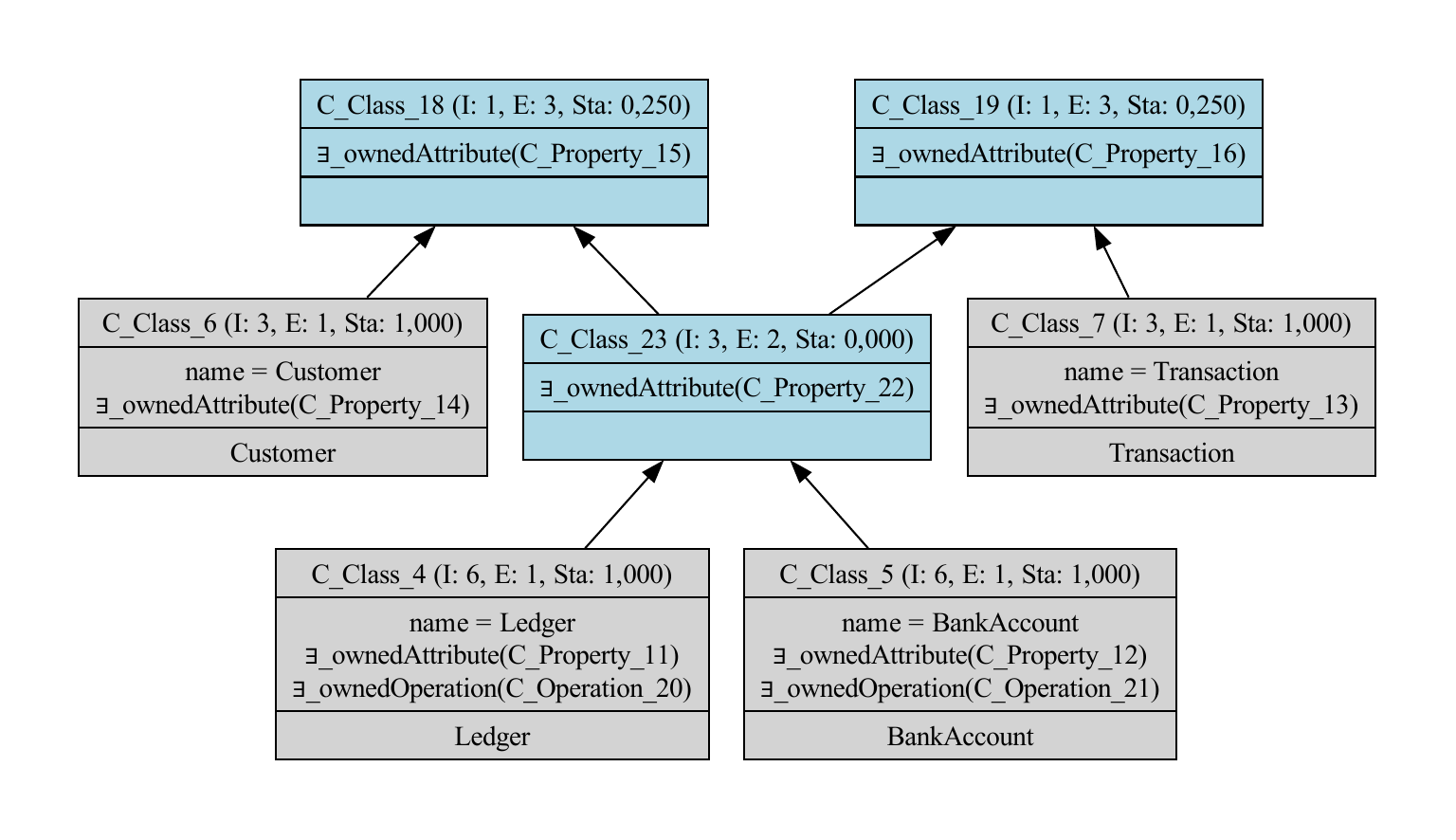}
      \includegraphics[width=\linewidth]{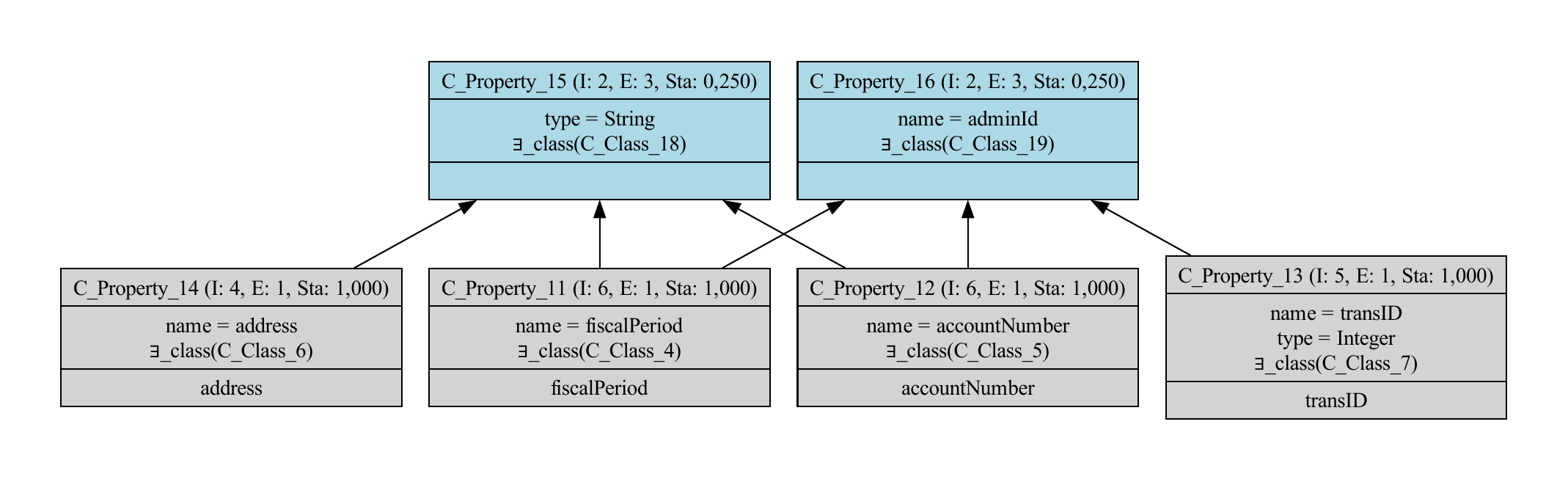}  
    \caption{Bank example - Class and Property AOC-posets at step 3 (Operation and Parameter AOC-posets are unchanged).}
  \label{fig_bank_step3}
\end{figure}

At step 3 (Fig. \ref{fig_bank_step3}), \texttt{cpr\_22} of step 2 is used to build  \texttt{cc\_23}, to factor out  \texttt{$\exists$\_ownedAttribute(cpr\_22)}, which is shared by \texttt{Ledger} and \texttt{BankAccount}.
But  \texttt{cpr\_22} disappears at this step, since concept \texttt{cc\_17} does not exist at step 2.

Class and Property AOC-posets from step 4 are equivalent to those from step 2 because they refer to concepts of step 3 and 1 respectively, that are equivalent (up to a renaming of the concepts). From step 1 to 2, the same concepts have disappeared as from step 3 to 4. These configurations will therefore alternate indefinitely.

The AOC-posets of step 1 and 2 can be used to suggest an evolution of the banking model of Fig. \ref{fig_bank_m_uml2}.
\texttt{cpr\_22} (step 2) suggests introducing a new UML property \texttt{adminId:String} that can be specialized by UML properties \texttt{fiscalPeriod} and \texttt{accountNumber}, using UML constraint syntax \texttt{\{redefines adminId:String\}}. 
\texttt{cc\_17} (step 1) suggests a UML class that a software engineer may call \texttt{FinancialStructure}, on the basis of the factored out property \texttt{adminId:String} and the names of the subclasses \texttt{Ledger} and \texttt{BankAccount}. 
\texttt{cc\_19} (step 1) suggests a UML class \texttt{AdminAsset} to factor out \texttt{adminId}.

The refactored UML class model is shown in Fig. \ref{fig_bankRefactored}.
The names of the new superclasses have to be found by the software engineer, possibly assisted by a lexical resource or a generative AI agent  \citep{DBLP:conf/concepts/GuenouneGHLMMZ25}.

\begin{figure}[htb]
  \centering
     \includegraphics[width=\linewidth]{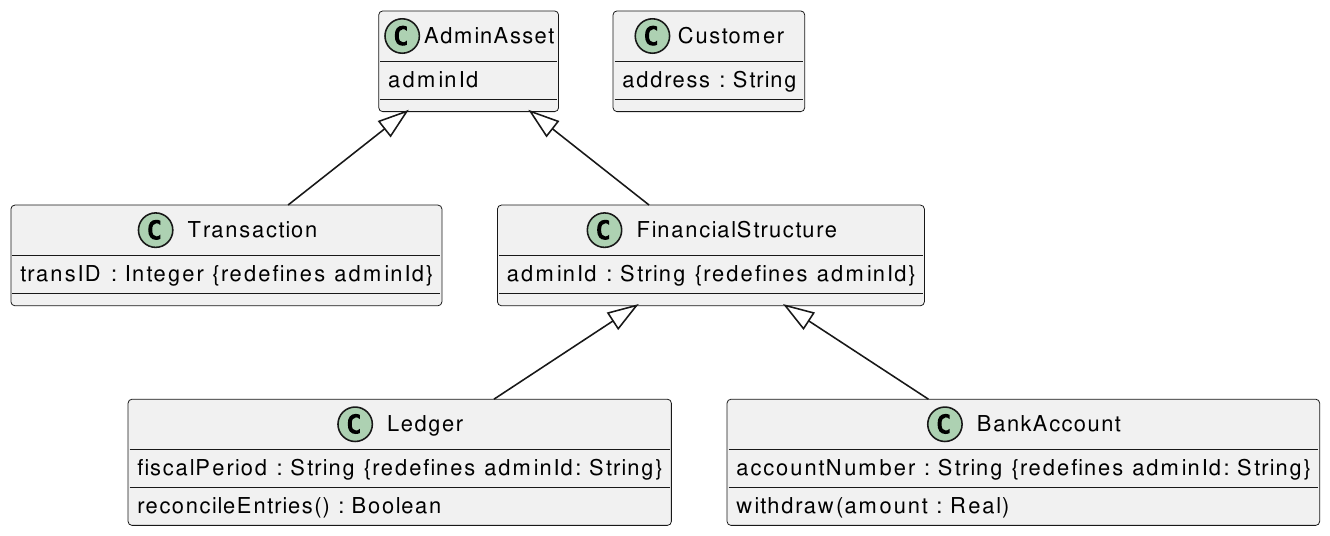}
    \caption{Refactoring suggestions for the Bank model. The two new superclasses factor out the notion of owning an administrative identification (in \texttt{AdminAsset}), and the fact that this identification is a String (in \texttt{FinancialStructure}), and group coherent classes. 
    }
  \label{fig_bankRefactored}
\end{figure}

%% file: tables__rcftBank.tex
\renewcommand{\arraystretch}{1.12}
\begin{table}[htb]
  \centering
  \caption{Formal Contexts of the Bank UML fragment}
  \label{fc_for_bank1}
  \scriptsize
  \begin{minipage}[t]{\textwidth}
    \raggedright

    
\textit{Class}\\[0.35em]
  \begin{tabular}{|l|*{4}{c}|}
  \hline
  
    & name{=} 
    & name{=} 
    & name{=} 
    & name{=}  \\

    & Ledger
    & BankAccount
    & Customer
    & Transaction \\
    \hline
    Ledger      & $\times$ &   &   &   \\

    BankAccount &   & $\times$ &   &   \\

    Customer    &   &   & $\times$ &   \\

    Transaction &   &   &   & $\times$ \\
\hline
  \end{tabular}

    \vspace{0.9em}

    \textit{Operation}\\[0.35em]
      \begin{tabular}{|l|}
        \hline
\\
        \hline
        reconcileEntries \\
        withdraw \\
        \hline
      \end{tabular}
    
  \end{minipage}
  
    \begin{minipage}[t]{\textwidth}
    \raggedright
        \vspace{0.9em}
    
    \textit{Parameter}\\[0.35em]
      \begin{tabular}{|l|*{6}{c}|}
        \hline

        & name{=} & direction{=}& type{=}
        & name{=} & direction{=} & type{=} \\
        &  ok & return &  Boolean
        &  amount &  in &  Real \\
        \hline
        ok      & $\times$& $\times$& $\times$&  &  &  \\
        amount  &  &  &  & $\times$& $\times$& $\times$\\
        \hline
      \end{tabular}

        \vspace{0.9em}
\textit{Property}\\[0.35em]
\begin{tabular}{|l|*{7}{@{\hspace{1.5mm}}c@{\hspace{1.5mm}}}|}
    \hline
    &
    name{=} &
    type{=} &
    name{=} &
    type{=} &
    name{=} &
    name{=} &
    name{=} \\
    &
    transID &
    Integer &
    address &
    String &
    fiscalPeriod &
    accountNumber &
    adminId \\
    \hline
    transID       & $\times$& $\times$&   &   &   &   &  $\times$\\
    address       &   &   & $\times$& $\times$&   &   &   \\
    fiscalPeriod  &   &   &   & $\times$& $\times$&   &  $\times$\\
    accountNumber &   &   &   & $\times$&   & $\times$&  $\times$\\
    \hline
  \end{tabular}

    \end{minipage}
  \end{table}

  \begin{table}[htb]
  \centering
  \caption{Relational Contexts of the Bank UML fragment. For the sake of simplicity, \texttt{ownedAttribute} is the only relational context for which we consider an inverse (\texttt{class}) in this example.}
  \label{rc_for_bank2}
  \scriptsize

  \begin{minipage}[t]{0.49\textwidth}
    \raggedright
    \textit{ownedOperation}\\[0.35em]
    
      \begin{tabular}{|l|*{2}{c}|}
        \hline
        & reconcileEntries & withdraw \\
        \hline
        Ledger      & $\times$&   \\
        BankAccount &   & $\times$\\
        Customer    &   &   \\
        Transaction &   &   \\
        \hline
      \end{tabular}
    
 \end{minipage}
 \begin{minipage}[t]{0.49\textwidth}
    \raggedright
    \textit{ownedParameter}\\[0.35em]
    
      \begin{tabular}{|l|*{2}{c}|}
        \hline
        & ok & amount \\
        \hline
        reconcileEntries & $\times$&   \\
        withdraw         &   & $\times$\\
        \hline
      \end{tabular}
    
\end{minipage}
  
  \vspace{0.9em}
  
    \begin{minipage}[t]{\textwidth}
    \raggedright
    \textit{ownedAttribute}\\[0.35em]
   
      \begin{tabular}{|l|*{4}{c}|}
        \hline
        & transID & address & fiscalPeriod & accountNumber \\
        \hline
        Ledger      &   &   & $\times$&   \\
        BankAccount &   &   &   & $\times$\\
        Customer    &   & $\times$&   &   \\
        Transaction & $\times$&   &   &   \\
        \hline
      \end{tabular}

    \vspace{0.9em}

    \textit{class}\\[0.35em]
    
      \begin{tabular}{|l|*{4}{c}|}
        \hline
        & Ledger & BankAccount & Customer & Transaction \\
        \hline
        transID       &   &   &   & $\times$\\
        address       &   &   & $\times$&   \\
        fiscalPeriod  & $\times$&   &   &   \\
        accountNumber &   & $\times$&   &   \\
        \hline
      \end{tabular}

  \end{minipage}

\end{table}

%% file: tex__approaches_convergence.tex
\section{Approaches to Ensuring RCA-AOC Convergence}
\label{sec_ensuring}

The lack of convergence guarantee is a threat that could discourage users from adopting RCA-AOC. 
Indeed, the occurrence of a divergence case would make the result far more difficult to interpret. To resolve this threat,  and in order to make our method robust we need to find ways to guarantee convergence with as little impact as possible on data.
In this section, we discuss different approaches to ensuring convergence, either by requiring appropriate constraints on the dataset or by introducing a convergent variant of the RCA-AOC process.

\subsection{Preliminaries}
\label{sec_preliminaries}

We introduce here some useful definitions. 
We first define basic notions regarding contexts and concept-posets. We then define the notion of identity of relational attributes, and the convergence for RCA-AOC, that relies on the existence of a fixpoint for each concept-poset of the relational family. Finally we define the notion of dependency graph for an RCF and introduce the notion of  identified objects.

\begin{definition}[Concept and Concept-poset equivalences]
\label{def_equiv}
Let $C_1$ and $C_2$ be two concepts from two different concept-posets $\mathcal{A}_1$ and $\mathcal{A}_2$. $C_1$ and $C_2$ are equivalent, denoted $C_1 \sim C_2$, iff $\mathit{Extent}(C_1)=\mathit{Extent}(C_2)$.
$\mathcal{A}_1$ and $\mathcal{A}_2$ are equivalent iff $Ext_{\mathcal{A}_1}=Ext_{\mathcal{A}_2}$ where $Ext_{\mathcal{A}_1} = \bigcup_{C \in \mathcal{A}_1} \{ \mathit{Extent(C)}\}$ and $Ext_{\mathcal{A}_2} = \bigcup_{C \in \mathcal{A}_2} \{ \mathit{Extent(C)}\}$. We denote it $\mathcal{A}_1 \equiv \mathcal{A}_2$.
\end{definition}

Example: concept-posets from steps 3 and 4 in Table \ref{tab:gsh-forall} are such that  $Ext_{\mathcal{A}_1^3} = \{\{o_1\},\{o_2\},\emptyset\}$, $Ext_{\mathcal{A}_2^3} = \{\{o_3\}, \{o_4\}, \emptyset\}$, $Ext_{\mathcal{A}_3^3} = \{\{o_5, o_6\}, \emptyset\}$ while $Ext_{\mathcal{A}_1^4} = \{\{o_1,o_2\},\emptyset\}$, $Ext_{\mathcal{A}_2^4} = \{\{o_3, o_4\}, \emptyset\}$, and  $Ext_{\mathcal{A}_3^4}$  $= Ext_{\mathcal{A}_3^3}$. 

\begin{definition}[Identity of relational attributes]
\label{def_attridentity}
The identity of a relational attribute $\rho(r)\,r(C)$ is defined as the triple $(\rho(r), r, \mathit{Extent}(C))$. Then, two relational attributes built, possibly at different steps, with the same quantifier and the same relation, and on equivalent concepts, are the same attribute. 
\end{definition}

The incidence of a relational attribute is determined at its creation and is never recomputed afterwards; it only depends on this identifying triple.
E.g. Table \ref{tab:gsh-forall}, the relational attribute $\exists\forall$R2(C\_K3\_1) is the same in C\_K2\_1 at step 1 and step 4. Note that, in this example, equivalent concepts occurring at different steps have the same identifier.

We  now introduce two definitions for context growing, the first one based on context inclusion, the second one based on the inclusion of sets of extents in the associated posets. 
\begin{definition}[Context inclusion and monotonic growth]
Let $\mathcal{K} = (G, M, I)$ and $\mathcal{K}' = (G, M', I')$ be two
contexts. $\mathcal{K} \subseteq \mathcal{K}'$ iff $M \subseteq M'$ and
$I = I' \cap (G \times M)$. Furthermore, let  $(\mathbf{K}, \mathbf{R})$ be an RCF, and  ${\cal K}_i$ a context of $\mathbf{K}$. Let $\mathcal{K}_i^0, \mathcal{K}_i^1, ... \mathcal{K}_i^n, ...$ be the succession of extended formal contexts of $\mathcal{K}_i$ in the RCA-AOC process. The context $\mathcal{K}_i$ is growing monotonically between steps $j$ and $k$ iff $\forall l \in [j..k[$
$\mathcal{K}_i^l \subseteq \mathcal{K}_i^{l+1}$. The context is said to grow monotonically from step $j$ when $k = \infty$. 
\end{definition}

\begin{definition}[Context monotonic poset-growth]
 Let  $(\mathbf{K}, \mathbf{R})$ be an RCF, and  ${\cal K}_i$ a context of $\mathbf{K}$. Let $\mathcal{K}_i^0, \mathcal{K}_i^1, ... \mathcal{K}_i^n, ...$ be the succession of extended formal contexts of $\mathcal{K}_i$ in the RCA-AOC process, and $\mathcal{A}_i^0, \mathcal{A}_i^1, ... \mathcal{A}_i^n, ...$ the corresponding concept-posets. The context $\mathcal{K}_i$ is poset-growing monotonically between steps $j$ and $k$ iff $\forall l \in [j..k[$, $Ext_{\mathcal{A}_i^l}  \subseteq Ext_{\mathcal{A}_i^{l+1}}$ (also denoted $\mathcal{A}_i^l \subseteq \mathcal{A}_i^{l+1}$). The context is said to poset-grow monotonically from step $j$ when $k = \infty$. 
\end{definition}

In the RCA-AOC process, each concept-poset corresponds to a {possibly extended} object-attribute context.  A {concept-poset fixpoint} refers to the final concept-poset generated from this context if it exists. {The verification of the fixpoint relies on the equivalence of successive concept-posets.}

\begin{definition}[Concept-poset fixpoint]\label{def_fixpoint}
 Let  $(\mathbf{K}, \mathbf{R})$ be an RCF, and  ${\cal K}_i$ a context of $\mathbf{K}$. Let $\mathcal{K}_i^0, \mathcal{K}_i^1, ... \mathcal{K}_i^n, ...$ be the succession of extended formal contexts of $\mathcal{K}_i$ in the RCA-AOC process, and $\mathcal{A}_i^0, \mathcal{A}_i^1, ... \mathcal{A}_i^n, ...$ the corresponding concept-posets. If there exists $p$  such that for all $q \geq p$ $\mathcal{A}_i^q \equiv \mathcal{A}_i^{q+1}$, then the context ${\cal K}_i$ admits a concept-poset fixpoint, $\mathcal{A}_i^p$, associated to the extended context $\mathcal{K}_i^p$ of step $p$.
\end{definition}

\begin{definition}[Convergence of RCA-AOC]
The convergence of an application of RCA-AOC on a relational context family $(\mathbf{K}, \mathbf{R})$ is determined by the existence of a fixpoint in the successively generated concept-poset families, i.e. by the existence of a concept-poset fixpoint for each original context ${\cal K}_i\in \mathbf{K}$.
\end{definition}

If all concept-posets reach a fixpoint, then the process can be stopped in practice at step $n$ such that the last concept-poset reaches its own  fixpoint, i.e. 
$\forall i, \mathcal{A}_i^{n} \equiv \mathcal{A}_i^{n-1}$.

\begin{definition}[Dependency graph]
The dependency graph of a relational context family $RCF = (\mathbf{K},
\mathbf{R})$ is defined as the graph $\mathbf{G} = (\mathbf{K},
\mathbf{E})$, with
$\mathbf{E} = \{(\mathcal{K}_{s_r}, \mathcal{K}_{t_r}) \mid r \in
\mathbf{R}\}$.
\end{definition}

The dependency graph shows which contexts have to be considered for the analysis of a given object-object context. 
Figure \ref{fig:ex_gd} shows the dependency graph for the RCF of Table \ref{tab:rcf-forall}.

\begin{figure}
    \centering
    \includegraphics[scale=0.5]{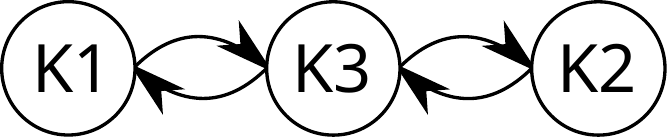}
    \caption{Dependency graph for RCF in Table \ref{tab:rcf-forall}.} 
    \label{fig:ex_gd}
\end{figure}

\begin{definition}[Identified object]
Let ${\cal K} = (G,M,I)$ be an initial object-attribute context from an RCF. An object  $o \in G$ such that  $\{o\}''$=$\{o\}$ is called an identified object. 
\end{definition}

\begin{definition}[Context of identified objects]
A context of identified objects is an object-attribute context $\mathcal{K_{ \mathit{id} }}=(G,M,I)$ where $\forall o \in G, \{o\}''=\{o\}$. As a consequence, concepts of $\mathcal{A_{\mathit{id}}}$ with a non-empty simplified extent have an  extent of size 1.
\end{definition}

Based on these definitions,  
we first examine different configurations of the dependency graph and convergence conditions (Sect. \ref{sec_convconditions}). Then,  we propose a process that ensures convergence (Sect. \ref{sec_convprocess}). Finally, we introduce a variant of RCA-AOC (Sect. \ref{sec_rcaacoconv}).

\subsection{Convergence conditions}
\label{sec_convconditions}

We examine here the conditions of convergence of RCA-AOC, based on the structure of the dependency graph and  some properties of the contexts.
A first lemma states that if a context grows monotonically, then this context reaches a fixpoint.

\begin{lemma} \label{thm:growingcontext}
Let $(\mathbf{K}, \mathbf{R})$ be an RCF, and $\mathcal{K}_i$ a context
of $\mathbf{K}$. Let $\mathcal{K}_i^0, \mathcal{K}_i^1, \ldots,
\mathcal{K}_i^n,$ $\ldots$ be the succession of extended formal contexts
of $\mathcal{K}_i$ in the RCA-AOC process. If there exists $q \in
\mathbb{N}$ from which $\mathcal{K}_i$ grows monotonically, then it admits a concept-poset
fixpoint.
\end{lemma}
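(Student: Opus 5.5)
The plan is to show that monotonic growth forces the sequence of extended contexts $\mathcal{K}_i^q \subseteq \mathcal{K}_i^{q+1} \subseteq \cdots$ to become stationary. The extents of the concept-posets are then stationary too, which yields the fixpoint of Definition~\ref{def_fixpoint}.

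The key observation is finiteness of the attribute universe. The object set $G_i$ never changes, so every attribute is determined, up to the identity of Definition~\ref{def_attridentity}, by the set of objects it describes. More precisely, a relational attribute is identified by a triple $(\rho(r), r, \mathit{Extent}(C))$. Here $r$ ranges over the finitely many relations with source $G_i$, $\rho(r)$ is fixed, and $\mathit{Extent}(C)$ is a subset of the finite set $G_{t_r}$. Hence the set of all relational attributes that can ever be created for $\mathcal{K}_i$ is finite, of size at most $\sum_{r \in R_{G_i}} 2^{|G_{t_r}|}$. Adding the finitely many original attributes $M_i$ gives a finite universe $U_i$ with $M_i^l \subseteq U_i$ for every step $l$.

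Next, by the definition of context inclusion, monotonic growth from step $q$ gives a chain $M_i^q \subseteq M_i^{q+1} \subseteq \cdots \subseteq U_i$. Since $U_i$ is finite, this chain of sets stabilises: there is $p \geq q$ with $M_i^l = M_i^p$ for all $l \geq p$. The incidence also stabilises, because $I_i^l = I_i^{l+1} \cap (G_i \times M_i^l)$. This is consistent with the remark that the incidence of a relational attribute is fixed at creation and depends only on its identifying triple. So $\mathcal{K}_i^l = \mathcal{K}_i^p$ as formal contexts for all $l \geq p$.

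Finally, the AOC-poset is a deterministic function of the context: it is the set of concepts introducing an object or an attribute. Therefore identical contexts give identical sets of extents, $Ext_{\mathcal{A}_i^l} = Ext_{\mathcal{A}_i^{p}}$, that is $\mathcal{A}_i^l \equiv \mathcal{A}_i^{l+1}$ for all $l \geq p$. This is exactly Definition~\ref{def_fixpoint} with fixpoint $\mathcal{A}_i^p$.

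The main obstacle is the finiteness step. Attributes created at different steps from different concept objects must be counted as the same attribute whenever their triples coincide, and this relies on Definition~\ref{def_attridentity}; otherwise $M_i^l$ could grow forever with syntactically new but semantically duplicate attributes. I would state that identification explicitly before bounding $|U_i|$. I would also note that the equality $I = I' \cap (G\times M)$ prevents an incidence from changing once an attribute exists, so that stabilisation of $M_i^l$ really does imply stabilisation of the whole context.
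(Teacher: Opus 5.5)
Your proposal is correct and follows the same route as the paper. The paper simply notes that the extended context has bounded size, so a monotonically growing sequence of contexts becomes stationary and the concept-poset is then stable; you make explicit what it leaves implicit, namely the finite attribute universe via the identifying triples of Definition~\ref{def_attridentity}, and the fact that context inclusion fixes the incidence. (One small wording slip: an attribute is not determined by the set of objects it describes, since two distinct triples can have the same incidence, but your triple-based count that follows is the right one and the bound is unaffected.)
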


This follows directly from the fact that the 
maximal size of the extended context is bounded, and from a certain step it will therefore stop changing. Then the associated  concept-poset will also be stable.

The following  theorem states that, if all the successors in $\mathbf{G}$ of an object-attribute context have a concept-poset fixpoint, then this context has a concept-poset fixpoint too.

\begin{theorem}\label{thm:neighbors}
Let $\mathbf{G}$ be a dependency graph of a relational context family $(\mathbf{K}, \mathbf{R})$.  
Let  $\mathcal{K}_i$ be a context of $\mathbf{K}$. We define $S_i = \{ \mathcal{K}_j \in \mathbf{K} | (\mathcal{K}_i,\mathcal{K}_j)$  is an edge of $\mathbf{G}\}$, the set of direct successors of $\mathcal{K}_i$.
 If all elements of $S_i$ admit a concept-poset fixpoint, 
 then $\mathcal{K}_i$ also admits a concept-poset fixpoint. 
\end{theorem}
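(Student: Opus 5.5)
The argument works directly from the construction of RCA-AOC. At step $p+1$ the extended context $\mathcal{K}_i^{p+1}$ is built from the \emph{original} context $\mathcal{K}_i$ together with the relational attributes $\rho(r)\,r(C)$, for $r \in R_{G_i}$ and $C$ a concept of the step-$p$ concept-poset $\mathcal{A}_{t_r}^{p}$ of the target context. By Definition~\ref{def_attridentity}, the identity of such an attribute, and hence its incidence with the objects of $G_i$, depends only on the triple $(\rho(r), r, \mathit{Extent}(C))$. So $\mathcal{K}_i^{p+1}$ is a function of $\mathcal{K}_i$ and of the family of extent sets $\{Ext_{\mathcal{A}_j^{p}}\}_{\mathcal{K}_j \in S_i}$. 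This holds because every target context $\mathcal{K}_{t_r}$ with $r \in R_{G_i}$ is, by definition of $\mathbf{G}$, a direct successor of $\mathcal{K}_i$. Making this dependency explicit is the first step.

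Next I use the hypothesis. Each $\mathcal{K}_j \in S_i$ admits a concept-poset fixpoint, so by Definition~\ref{def_fixpoint} there is some $p_j$ with $\mathcal{A}_j^{q} \equiv \mathcal{A}_j^{q+1}$ for all $q \ge p_j$. Hence $Ext_{\mathcal{A}_j^{q}} = Ext_{\mathcal{A}_j^{p_j}}$ for all $q \ge p_j$. Since $S_i$ is finite ($\mathbf{K}$ is finite), set $P = \max_{j} p_j$; if $S_i=\emptyset$, set $P=0$. For every $q \ge P$, all successor posets then have identical extent sets.

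By the first step, for all $q \ge P$ the contexts $\mathcal{K}_i^{q+1}$ and $\mathcal{K}_i^{q+2}$ have the same attribute set (up to attribute identity) and the same incidence relation, so they are equal. In particular $\mathcal{K}_i$ grows monotonically from step $P+1$, because equality trivially gives inclusion. Lemma~\ref{thm:growingcontext} then yields a concept-poset fixpoint for $\mathcal{K}_i$. More directly, equal contexts give identical AOC-posets, so $\mathcal{A}_i^{q} \equiv \mathcal{A}_i^{q+1}$ for all $q \ge P+1$. If $S_i=\emptyset$, then $\mathcal{K}_i$ has no relational attributes and is constant from step $0$.

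The main obstacle is the first step: justifying rigorously that $\mathcal{K}_i^{p+1}$ depends only on the extents of the step-$p$ successor posets and not on earlier history or concept names. This relies on two facts stated in Section~\ref{sec_rca_gsh} and Definition~\ref{def_attridentity}: the process rebuilds from the original $\mathbf{K}$ rather than from $\mathbf{K}^{p}$, and attributes are identified by extents. I would state this as an explicit preliminary observation before the short fixpoint argument.
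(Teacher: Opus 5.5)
Your proposal is correct and follows the same route as the paper. The paper argues that, by Definition~\ref{def_attridentity}, relational attributes depend only on the extents of the target concepts, so the extended context of $\mathcal{K}_i$ stops changing at the step after the one where all its successors have reached their fixpoint. Your version makes explicit what the paper leaves implicit: that RCA-AOC rebuilds from the original $\mathbf{K}$, the choice $P=\max_j p_j$ over the finitely many successors, and the case $S_i=\emptyset$.
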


This follows directly from the fact that if a  context verifies the previous condition then its corresponding extended formal context will stop changing on the step following the one 
where all its successors in the dependency graph have reached their concept-poset fixpoint\footnote{By Definition~\ref{def_attridentity}, the identity of a relational attribute only depends on the extent of the concept it refers to, thus the relational attributes do not change once their target concept-posets are stable.}.

Based on Theorem \ref{thm:neighbors}, the first sufficient property for the convergence results from the absence of circuits in the dependency graph.
\begin{corollary} \label{thm:nocircuit}
If the dependency graph $\mathbf{G}$ of a relational context family is  without circuit, then the RCA-AOC process will converge. 
\end{corollary}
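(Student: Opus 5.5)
The plan is to apply Theorem~\ref{thm:neighbors} along a topological order of the dependency graph $\mathbf{G}$. Since $\mathbf{G}$ is finite and has no circuit, it admits a topological ordering: the vertices (contexts) can be arranged as $\mathcal{K}_{\sigma(1)}, \ldots, \mathcal{K}_{\sigma(n)}$ so that every edge $(\mathcal{K}_i,\mathcal{K}_j)$ goes from a later context to an earlier one. Equivalently, I will define the \emph{height} $h(\mathcal{K}_i)$ as the length of the longest directed path starting at $\mathcal{K}_i$. This is well defined and bounded by $n-1$ precisely because there is no circuit. I will then prove, by strong induction on $h$, that every context admits a concept-poset fixpoint.

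For the base case, take $h(\mathcal{K}_i)=0$. Then $S_i=\emptyset$, so the hypothesis of Theorem~\ref{thm:neighbors} holds vacuously. It is still worth checking directly, as a sanity check, that such a context has no outgoing relation. It therefore receives no relational attributes: $\mathcal{K}_i^p=\mathcal{K}_i$ for all $p$, and its concept-poset is the same at every step. This context trivially grows monotonically, so Lemma~\ref{thm:growingcontext} also applies.

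For the inductive step, every direct successor $\mathcal{K}_j\in S_i$ satisfies $h(\mathcal{K}_j)<h(\mathcal{K}_i)$, since any path from $\mathcal{K}_j$ extends to a longer path from $\mathcal{K}_i$. By the induction hypothesis, each such $\mathcal{K}_j$ admits a fixpoint, so Theorem~\ref{thm:neighbors} gives a fixpoint for $\mathcal{K}_i$. Once every context $\mathcal{K}_i\in\mathbf{K}$ has a concept-poset fixpoint, the definition of convergence of RCA-AOC is met. As a quantitative bonus, if successor fixpoints are reached by step $p_j$, then $\mathcal{K}_i$ stabilizes by step $\max_j p_j+1$. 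This bounds the number of steps by roughly $h_{\max}+1\le n$.

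The only delicate point is the justification of Theorem~\ref{thm:neighbors} itself, in the regime where the successors are stable. A relational attribute's identity is fixed by the triple (quantifier, relation, extent) of Definition~\ref{def_attridentity}. Moreover, step $p+1$ is built from the original context and $\mathbf{C}^p$. Together these imply that the extended context depends only on the sets $Ext_{\mathcal{A}_j^p}$ of its successors. Hence identical successor extent sets yield identical extended contexts and identical concept-posets. Since the theorem is given, the corollary itself reduces to the well-foundedness of the induction, which is exactly what acyclicity provides.
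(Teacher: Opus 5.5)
Your proof is correct and is essentially the paper's argument: the paper observes that sink contexts are stable from the start because they have no successor, then applies Theorem~\ref{thm:neighbors} recursively to propagate fixpoints to predecessors, and your induction on the height $h(\mathcal{K}_i)$ simply makes that recursion explicit. Your bound on the number of steps, and your remark that the extended contexts depend only on the successors' extent sets (the content of the paper's footnote to Theorem~\ref{thm:neighbors}), are correct additions that the corollary does not need.
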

It is easy and fast to verify: all sink contexts reach their concept-poset fixpoints at the first step since they do not depend on other contexts (they have no successor). Then, applying Theorem \ref{thm:neighbors} recursively, we can propagate this result to the predecessors.

However, in case of a dependency graph with circuits, the monotonic growth of contexts is not ensured, and contexts can lose attributes.
For example, in  Table \ref{tab:gsh-forall}, from step 3 to 4, context $K_2$ loses 2 attributes (pointing to \texttt{C\_K3\_2} and \texttt{C\_K3\_3}), and is extended with one  attribute (pointing to \texttt{C\_K3\_1}). Thus $\mathcal{K}_2^3  \not \subseteq \mathcal{K}_2^{4}$.
We need to add a constraint on a context to ensure its monotonic growth.  Theorem \ref{thm:identifiedobject} states that adding an identifier to each object of a context  can ensure the monotonic growth of the concept-poset. It relies on Lemma \ref{lem:identifiedobject}.

\begin{lemma}\label{lem:identifiedobject}
Adding an attribute to a context of identified objects ${\mathcal{K}_{id}}$ cannot remove concepts in the corresponding concept-poset ${\mathcal{A}_{id}}$ (${\mathcal{A}_{id}}$ grows). 
\end{lemma}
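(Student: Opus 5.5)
We will compare the AOC-poset of $\mathcal{K}_{id}=(G,M,I)$ with that of the extended context $\mathcal{K}^+_{id}=(G,M\cup\{a\},I\cup I_a)$, where $a$ is the new attribute and $I_a\subseteq G\times\{a\}$ its incidence. Concepts are compared by extent, as in Definition~\ref{def_equiv}. The goal is to show $Ext_{\mathcal{A}_{id}}\subseteq Ext_{\mathcal{A}^+_{id}}$. Two facts drive the argument. Adding a column never destroys the property $\{o\}''=\{o\}$. The AO-concepts of $\mathcal{A}_{id}$ are exactly the object-concepts and the attribute-concepts, and we treat these two kinds separately.

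\emph{Object-concepts.} In $\mathcal{K}_{id}$ the object-concept of $o$ has extent $\{o\}''=\{o\}$. In $\mathcal{K}^+_{id}$ the new intent of $o$ is $\{o\}'\cup(\{a\}$ if $(o,a)\in I_a)$. Its derivation is therefore contained in the old one, $\{o\}''$, which equals $\{o\}$. It also contains $o$, so it equals $\{o\}$. Hence every object of $\mathcal{K}^+_{id}$ is still identified, and each singleton extent $\{o\}$ remains an object-concept extent of $\mathcal{A}^+_{id}$.

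\emph{Attribute-concepts.} Let $m\in M$ with attribute-concept extent $\{m\}'$. Deriving in $\mathcal{K}^+_{id}$ changes nothing for the object set of an old attribute, because the column of $m$ is unchanged. So $\{m\}'$ is still an extent of $\mathcal{K}^+_{id}$, and it is still the extent of the attribute-concept of $m$. These concepts therefore persist too.

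The remaining case, and the main subtlety, is the bottom/top. For example, a concept with empty extent may be present in $\mathcal{A}_{id}$ because it introduces attributes shared by no object. It is still introduced, since those attributes are still in $M$; this is covered by the attribute-concept argument. With these cases settled, every extent of $\mathcal{A}_{id}$ lies in $Ext_{\mathcal{A}^+_{id}}$. The only concept that may be created is the attribute-concept of $a$, so $\mathcal{A}_{id}$ grows.

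Attribute-concepts persist in any context, identified or not. The identified-object hypothesis is exactly what protects the object-concepts. In a general context an object-concept can lose its introducer role: adding $a$ may split the object set of $\{o\}''$ so that the old extent becomes neither an object-concept nor an attribute-concept. This is what happens in Table~\ref{tab:gsh-forall}. With singleton object-concepts such a split is impossible, and we would emphasise this point in the write-up.
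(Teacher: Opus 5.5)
Your proof is correct, and it takes a genuinely different and more economical route than the paper.

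The paper fixes the set $g$ of objects owning the new attribute. It then runs an exhaustive case analysis on the position of each extent $X$ of $\mathcal{A}$ relative to $g$: $X=g$, $X\subset g$, $g\subset X$, partial overlap, or disjoint, each split according to whether the simplified intent and simplified extent are empty. It shows that a concept can vanish only when its simplified intent is empty and its simplified extent is absorbed by $g$ (cases 3c and 4c). It rules these out because such a concept must be the object-concept of some $o$, hence has extent $\{o\}$.

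You instead use the dichotomy that every AO-concept is some $\mu(m)$ or some $\gamma(o)$, together with two monotonicity facts. First, $\{m\}'$ is unchanged by adding a column in \emph{any} context. Second, $\{o\}''$ can only shrink, and it cannot shrink below $\{o\}$; this is the paper's Preservation Lemma~\ref{lem_preservation}, which you promote from an auxiliary fact to the whole argument.

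Your route is shorter and isolates cleanly why identified objects matter. Applying the same dichotomy to $\mathcal{A}^+_{id}$ also gives a sharper conclusion, $Ext_{\mathcal{A}^+_{id}}=Ext_{\mathcal{A}_{id}}\cup\{g\}$, which justifies your unproved remark that only the attribute-concept of $a$ can be new. Your ``bottom/top'' paragraph is not a separate case: the dichotomy already covers every concept, including an empty-extent one.

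The paper's longer analysis describes how an AOC-poset changes under one attribute addition in an \emph{arbitrary} context. It shows which simplified extents shrink, when new concepts such as $\mathit{CA}$ or $C_2$ appear, and exactly which object-only concepts disappear. That is the mechanism behind the divergence examples and the incremental update, points (ii)--(iii), of RCA-AOC-conv.
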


The proof of the lemma is given in Appendix \ref{appendix_the3}. From this lemma we derive the proof of Theorem \ref{thm:identifiedobject}: if all successors of a context of identified objects poset-grow monotonically, then at each step, this context will receive new attributes and its associated poset will grow. Then monotonic growth of ${\mathcal{A}_{id}}$ is ensured.

\begin{theorem}\label{thm:identifiedobject}
Let $(\mathbf{K}, \mathbf{R})$ be an RCF, $\mathbf{G}$ its dependency graph, and ${\mathcal{K}_{id}}$  a context of identified objects in $\mathbf{K}$. Let $S_{id}$ be the set of ${\mathcal{K}_{id}}$'s direct successors in $\mathbf{G}$.
If there exists $n \in \mathbb{N}$ from which  all $\mathcal{K}_j \in S_{id}$  poset-grow monotonically then for all steps $q \geq n+1$, $\mathcal{A}_{id}^q \subseteq \mathcal{A}_{id}^{q+1}$.
Thus, ${\mathcal{K}_{id}}$ admits a concept-poset fixpoint.
\end{theorem}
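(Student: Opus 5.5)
The plan is to prove first that the extended context $\mathcal{K}_{id}^q$ grows monotonically from step $n+1$, and then to get poset growth from Lemma~\ref{lem:identifiedobject}. For $q \geq n+1$, the relational attributes of $\mathcal{K}_{id}^{q}$ are built on the concepts of the successor posets $\mathcal{A}_j^{q-1}$, $\mathcal{K}_j \in S_{id}$. Those of $\mathcal{K}_{id}^{q+1}$ are built on $\mathcal{A}_j^{q}$. Since $q-1 \geq n$, the hypothesis gives $Ext_{\mathcal{A}_j^{q-1}} \subseteq Ext_{\mathcal{A}_j^{q}}$ for every successor $\mathcal{K}_j$.

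By Definition~\ref{def_attridentity}, a relational attribute is identified by the triple $(\rho(r), r, \mathit{Extent}(C))$, and its incidence is fixed by that triple. Every relational attribute $\rho(r)\,r(C)$ of $\mathcal{K}_{id}^{q}$ therefore reappears, with the same column, in $\mathcal{K}_{id}^{q+1}$. The original attributes of $\mathcal{K}_{id}$ are present at every step, since each step extends the original context. Hence $M^q \subseteq M^{q+1}$ and $I^q = I^{q+1}\cap(G\times M^q)$, i.e.\ $\mathcal{K}_{id}^q \subseteq \mathcal{K}_{id}^{q+1}$.

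Next I would show that the identified-objects property persists. Since $\{o\}''=\{o\}$ in the original context and every extension only adds attributes, the closure of $\{o\}$ can only shrink; it still contains $o$, so $\{o\}''=\{o\}$ holds in every $\mathcal{K}_{id}^q$. Passing from $\mathcal{K}_{id}^q$ to $\mathcal{K}_{id}^{q+1}$ amounts to adding finitely many attributes one at a time, each intermediate context still being a context of identified objects. Applying Lemma~\ref{lem:identifiedobject} repeatedly shows that no extent is lost, so $\mathcal{A}_{id}^q \subseteq \mathcal{A}_{id}^{q+1}$ for all $q\geq n+1$.

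For the fixpoint, I would use that $Ext_{\mathcal{A}_{id}^q}$ is a subset of $2^G$, which is finite. An increasing chain of such sets must stabilise at some $p$, giving $\mathcal{A}_{id}^q \equiv \mathcal{A}_{id}^{q+1}$ for all $q\geq p$, as Definition~\ref{def_fixpoint} requires. Alternatively, Lemma~\ref{thm:growingcontext} applies directly, since the context grows monotonically from $n+1$.

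The main obstacle is the first step: justifying that attributes are never lost at the context level. This rests entirely on the identity convention, under which an attribute is determined by its extent rather than its concept name, and on the successors' extent sets growing from step $n$, which is where the index shift to $n+1$ comes from. A secondary point is that Lemma~\ref{lem:identifiedobject} must apply to extended contexts, not only to the original one. The persistence argument above handles this, so I would state it explicitly.
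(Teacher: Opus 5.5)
Your proposal is correct and follows the same route as the paper: the successors' poset growth and the identity convention of Definition~\ref{def_attridentity} make $\mathcal{K}_{id}$ only gain attributes from step $n+1$ on, and Lemma~\ref{lem:identifiedobject} turns this into growth of $\mathcal{A}_{id}$, hence a fixpoint by finiteness. You also make explicit what the paper's one-line argument leaves implicit: the index shift, the persistence of identified objects in the extended contexts (the paper's Lemma~\ref{lem_preservation} in the appendix), and applying the lemma one attribute at a time.
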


Based on Lemma \ref{lem:identifiedobject} and Theorem \ref{thm:identifiedobject} we show that convergence can be ensured in relational schemas with circuits if some contexts are contexts of identified objects.

\begin{corollary}[Convergence on circuits with identified objects]
\label{cor:circuit}
Let $(\mathbf{K},\mathbf{R})$ be an RCF and $\mathbf{S} \subseteq \mathbf{K}$
a set of contexts, closed under the successor relation of the dependency
graph (i.e. every successor of a context of $\mathbf{S}$ is in $\mathbf{S}$),
such that every context of $\mathbf{S}$ is either a sink or a context of
identified objects. Then every context of $\mathbf{S}$ poset-grows monotonically
and admits a concept-poset fixpoint.
\end{corollary}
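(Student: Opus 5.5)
The plan is to prove the two claims for every context of $\mathbf{S}$ together: monotonic poset-growth from some step on, and then existence of a fixpoint. Since $\mathbf{S}$ is closed under successors, the subgraph of $\mathbf{G}$ induced by $\mathbf{S}$ contains all edges leaving its contexts. So the evolution of the extended contexts of $\mathbf{S}$ depends only on the concept-posets of contexts in $\mathbf{S}$, and we can reason inside $\mathbf{S}$ alone.

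\textbf{Sinks.} A sink has no outgoing relation, so its extended context is the original context at every step. Its concept-poset is therefore constant from step $0$. Trivially, it poset-grows monotonically from step $0$ and admits a fixpoint.

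\textbf{Contexts of identified objects.} Here I would not use Theorem~\ref{thm:identifiedobject} one context at a time. Inside a circuit, its hypothesis (successors already growing monotonically) is exactly what we are trying to prove, so that would be circular. Instead I would run a simultaneous induction on the step $q$ for the statement $P(q)$: for every $\mathcal{K} \in \mathbf{S}$, $\mathcal{A}^{q} \subseteq \mathcal{A}^{q+1}$.
\begin{itemize}
\item Under $P(q)$, every successor $\mathcal{K}_j$ of a context of identified objects satisfies $Ext_{\mathcal{A}_j^{q}} \subseteq Ext_{\mathcal{A}_j^{q+1}}$.
\item By Definition~\ref{def_attridentity}, a relational attribute is identified by its extent triple and its incidence is fixed at creation. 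Hence every relational attribute of $\mathcal{K}^{q+1}$ is still present in $\mathcal{K}^{q+2}$ with the same incidence, so $\mathcal{K}^{q+1} \subseteq \mathcal{K}^{q+2}$.
\item Applying Lemma~\ref{lem:identifiedobject} once per added attribute gives $\mathcal{A}^{q+1} \subseteq \mathcal{A}^{q+2}$.
\item For sinks, the step is trivial.
\end{itemize}
This yields $P(q) \Rightarrow P(q+1)$.

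\textbf{Base case.} I need $P(0)$, i.e.\ $\mathcal{A}^0 \subseteq \mathcal{A}^1$ for each context of identified objects. The context $\mathcal{K}^0$ is the original context, and $\mathcal{K}^1$ is it apposed with relational attributes, so $\mathcal{K}^0 \subseteq \mathcal{K}^1$. Lemma~\ref{lem:identifiedobject} then gives $\mathcal{A}^0 \subseteq \mathcal{A}^1$. This relies on the step-$(p+1)$ construction starting from the original $\mathbf{K}$, which always contains the initial attributes.

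\textbf{Main obstacle.} The delicate point is the inductive step, especially the claim that $\mathcal{K}^{q+1} \subseteq \mathcal{K}^{q+2}$ follows from $Ext_{\mathcal{A}_j^{q}} \subseteq Ext_{\mathcal{A}_j^{q+1}}$. The relational attributes at step $q+2$ are built on the concepts of step $q+1$. Every extent present at step $q$ is still present at step $q+1$, so every attribute $\rho(r)\,r(C)$ is recreated with the same identity and the same incidence, whose value depends only on $r$ and $\mathit{Extent}(C)$. I would state this explicitly as a small lemma.

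\textbf{Fixpoint.} Once monotonic poset-growth from step $0$ is established, each context of $\mathbf{S}$ also grows monotonically as a context, since we showed $\mathcal{K}^{q} \subseteq \mathcal{K}^{q+1}$ along the way. Lemma~\ref{thm:growingcontext} then gives a concept-poset fixpoint. Alternatively, the set of possible extents over a finite $G$ is finite, so an increasing chain of extent sets must stabilize.
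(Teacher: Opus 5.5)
Your proposal is correct and follows essentially the same route as the paper: a simultaneous induction on the step over all contexts of $\mathbf{S}$, with the base case from $\mathcal{K}^0 \subseteq \mathcal{K}^1$ plus Lemma~\ref{lem:identifiedobject}, and an inductive step in which attribute identity (Definition~\ref{def_attridentity}) turns extent inclusion of the successors into context inclusion. The only difference is the final step: the paper invokes Theorem~\ref{thm:identifiedobject}, which is not circular there because the induction has already established growth for all of $\mathbf{S}$, whereas you conclude via Lemma~\ref{thm:growingcontext} or directly from the finiteness of the set of possible extents, and both are fine.
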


\begin{proof}
We proceed by induction on the steps of RCA-AOC.
At step $0$, contexts contain no relational attribute, hence between
steps $0$ and $1$ every context of $\mathbf{S}$ only gains attributes.
Then  for any context of identified objects $\mathcal{K}_{id}$, the associated poset grows, 
hence $\mathcal{A}^0_{id} \subseteq \mathcal{A}^1_{id}$ (Lemma~\ref{lem:identifiedobject}).  Sinks are not extended: $\mathcal{A}^0_s \equiv \mathcal{A}^1_s$, and they will not be extended at any later step.
Assume $\mathcal{A}^{p-1}_j \subseteq \mathcal{A}^{p}_j$ for every
$\mathcal{K}_j \in \mathbf{S}$. 
The relational attributes of a context of $\mathbf{S}$ at step $p+1$ are built on the concepts 
of the step $p$ posets of its successors, which all belong to $\mathbf{S}$.
By the induction hypothesis applied to these successors, their
posets contain all the concepts of the corresponding step $(p-1)$ posets, thus
every relational attribute present at step $p$ is still present at
step $p+1$: each context of $\mathbf{S}$ only gains attributes between
steps $p$ and $p+1$. Then $\mathcal{A}^{p}_{j} \subseteq \mathcal{A}^{p+1}_{j}$ for every $\mathcal{K}_j \in \mathbf{S}$. 
Consequently, all successors of any context of
identified objects in $\mathbf{S}$  poset-grow monotonically, this context  admits thus a concept-poset fixpoint (Theorem \ref{thm:identifiedobject}).
\end{proof}

\subsection{Ensuring convergence in RCA-AOC}
\label{sec_convprocess}

\begin{figure}
\begin{center}
\includegraphics[scale=0.5]{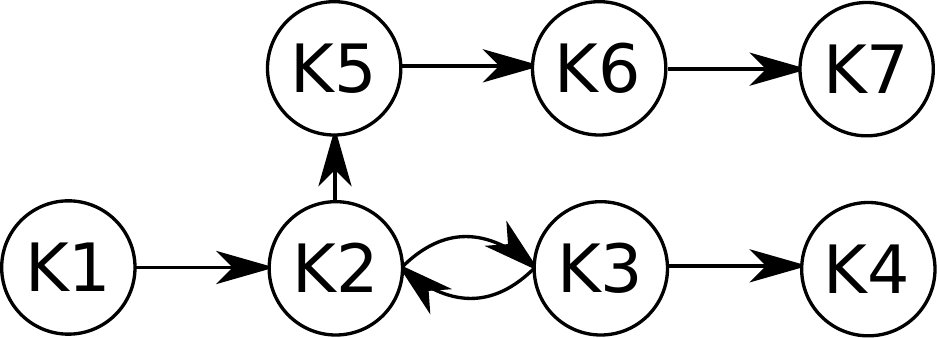}
\caption{Example of dependency graph.\label{fig:depg}}
\end{center}
\end{figure}

Considering Theorems \ref{thm:neighbors} and \ref{thm:identifiedobject}, we can elaborate a simple process that modifies the dataset to ensure convergence of the application of RCA-AOC for any given RCF:

\begin{enumerate}
\item Detect circuits and contexts without successors (sinks).
\item Add identifiers to contexts in circuits and to every context reachable from a circuit (i.e., all its transitive successors), except sink contexts. 
\end{enumerate}

The dependency graph depicted in Fig.~\ref{fig:depg} will help illustrate the process and why it leads to convergence in RCA-AOC.
{In this graph, $K_4$ and $K_7$ have no successor, thus they reach their fixpoint at the first step.}
Let us now consider the contexts that are in a circuit or reachable
from one.  We denote by $\mathbf{S}$ the set composed of the contexts in
circuits, together with all their transitive successors (including
sinks). In Fig.~\ref{fig:depg}, $\mathbf{S} = \{K_2, K_3,
K_4, K_5, K_6, K_7\}$. After applying the process above, every context
of $\mathbf{S}$ is either a sink or a context of identified objects and thus will admit a fixpoint {by reference to Corollary \ref{cor:circuit}.}
{The remaining context ($K_1$ in Fig. \ref{fig:depg}) has no predecessor and is not in a circuit nor reachable from one.} 
Its only successor  $K_2$ belongs to $\mathbf{S}$, hence admits a concept-poset fixpoint. Applying Theorem~\ref{thm:neighbors}, $K_1$ has thus a fixpoint.
Note that $K_1$ being a source context, with no predecessor, it has no
effect on other contexts.

To sum up, identifiers are added to contexts $K_2$, $K_3$, $K_5$ and $K_6$; the other contexts admit fixpoints as they are either sinks ($K_4$, $K_7$) or have all their successors admitting fixpoints ($K_1$).

Applying this process to the UML example, and thus adding identifiers in the Operation context (the only context where they are needed for ensuring convergence), would ensure a converging process.

To check if a context contains only identified objects, it is sufficient to verify that $\forall o \in G $, $\{o\}'' = \{o\}$. In the case of unidentified objects, one can add identifiers as described previously. In an automatic process, we propose to add identifiers without checking for two reasons: the first reason being that the cost of adding an identifying attribute is small in the AOC-poset computation; the second reason being that when applying RCA-AOC to different RCFs sharing the same dependency graph, the user can predict when the system will add identifiers and treat them consistently. The only drawback from adding identifiers is that each object is introduced alone in a concept, leading to more concepts than initially wanted. But the information added from identifiers can be easily spotted and removed from the analysis of the final AOC-posets if needed.

\subsection{A converging approach: RCA-AOC-conv}
\label{sec_rcaacoconv}

In this section, we present a converging variant of RCA-AOC, called RCA-AOC-conv. 
It relies on the property that ensures the convergence of the RCA process: the concept lattice of a context ${\cal K}^n_i$ at step $n$ is included, under the extent inclusion, in the concept lattice of ${\cal K}^{n+1}_i$ at step $n+1$. The number of concepts of a context being bounded by the powerset of its object set, if this number is monotonically increasing then convergence is ensured.
From this same property, an implementation of this process may use an optimization that consists in considering at each step the concept lattice from the previous step and computing only new concepts generated by new attributes, as in Galicia, one of the first implementations supporting RCA~\citep{Valtchev2003GaliciaA}. 
This is of course correct, but it cannot be adapted to RCA-AOC as we define it because the AOC-poset of a context at one step may not be included in the AOC-poset of the next step.

The process RCA-AOC-conv is inspired by this optimization.
At each step, the relational attributes built on the concepts of the previous concept-posets are accumulated into the extended contexts, and the concept-poset of a context is the AOC-poset of its cumulated extended context (see Algorithm~\ref{alg:rca-aoc-conv} in Appendix \ref{appendix_RCAAOCCONV}). As a consequence, new introducer concepts appear, while concepts that no longer introduce any element disappear. 
The convergence is guaranteed, but the invariant differs from the one of RCA: here, \emph{relational attributes}, rather than concepts, are never removed.
When a concept disappears, the relational attributes built on it at previous steps are kept, together with their incidence, which is fixed at creation (Definition~\ref{def_attridentity}).
We call such attributes \emph{dangling attributes}: they refer to a concept that no longer belongs to the current concept-posets.
For example, concept \texttt{co\_8} of step 0 of the UML example (Fig.~\ref{fig_bank_step0}) disappears at step 1, as it is no longer an object introducer; the relational attribute $\exists$\_ownedOperation(\texttt{co\_8}), built at step 1, is nevertheless kept at all the following steps, where it refers to \texttt{co\_8} as a dangling attribute.

Since relational attributes are only ever added and their incidence never changes, the successive extended contexts of a context $\mathcal{K}_i$ form an increasing sequence for the context inclusion defined in Sect.~\ref{sec_convconditions}: $\mathcal{K}_i^p \subseteq \mathcal{K}_i^{p+1}$ for every step $p$. 
Lemma~\ref{thm:growingcontext} applies and RCA-AOC-conv converges on any RCF. 

Because non-introducer concepts are removed, by construction, the structure built at step $p$ is exactly the AOC-poset of the extended context $\mathcal{K}_i^p$. What is sacrificed is thus not the AOC-poset structure itself, but two properties of lattice-based RCA. First, the step-to-step inclusion of the structures is lost: the concept-poset of one step may not be included in the concept-poset of the next step, since a concept that becomes a non-introducer disappears (see the example of \texttt{C\_K1\_3} in Appendix  \ref{appendix_RCAAOCCONV}). Second, the result is no longer self-contained: at the fixpoint, concept intents may contain dangling attributes $\rho(r)\,r(C)$ where $C$ belongs to none of the final concept-posets. Such attributes remain interpretable, since the extent of $C$ both identifies the attribute (Definition~\ref{def_attridentity}) and determines its incidence.

An alternative convergent variant would instead keep the concepts, as in RCA: non-introducer concepts, such as \texttt{co\_8} at step 1, would remain in the structure. The step-to-step inclusion of the structures would then be preserved and no dangling attribute would appear, but the conceptual structure would no longer be an AOC-poset in the general case, and the concept number would grow from the AOC-poset size towards, at worst, the concept lattice size. We adopt the attribute-keeping variant, which preserves the AOC-poset structure and its compactness while ensuring convergence: with respect to the AOC-posets that RCA-AOC builds on the same extended contexts, the only additional concepts are the attribute-concepts introducing dangling attributes, which often have an empty simplified extent (see e.g. concept \texttt{C\_K3\_13} in Appendix  \ref{appendix_RCAAOCCONV}).

In RCA-AOC, some disappearing concepts may have little  interest, as in the case of concept \texttt{co\_8} of step 0 (that groups together all operations), but some others may be very relevant, e.g. concepts \texttt{cpr\_22} (Fig. \ref{fig_bank_step2}) and \texttt{cc\_17} (Fig. \ref{fig_bank_step1}) that represent respectively the new suggested property \texttt{adminId:String} whose addition brings the suggestion of the new class \texttt{FinancialStructure}. These concepts, successively derived from one another and alternately appearing and disappearing in RCA-AOC, are summarized in Fig. \ref{fig_3concepts}. RCA-AOC-conv retains them: \texttt{cc\_17} is the attribute-concept of the kept attribute $\exists$\_ownedOperation(\texttt{co\_8}), and \texttt{cpr\_22} the attribute-concept of $\exists$\_class(\texttt{cc\_17}); both persist at every step following their creation and belong to the final result.

\begin{figure}[htb]
  \centering
    \includegraphics[width=0.31\linewidth]{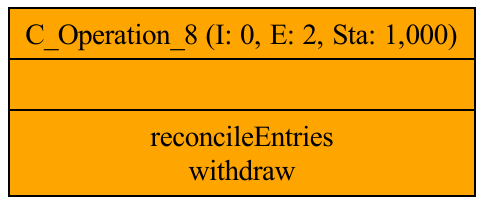}
      \includegraphics[width=0.33\linewidth]{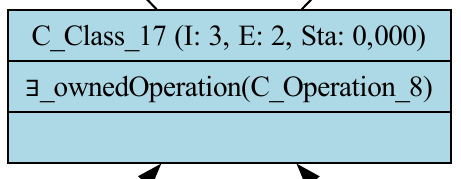}  
      \includegraphics[width=0.33\linewidth]{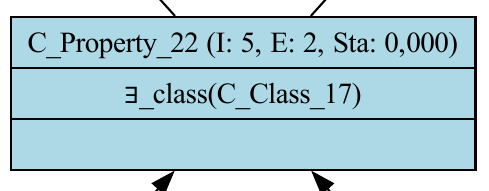}  
    \caption{Bank example - Concepts created and disappearing along the steps of RCA-AOC: \texttt{co\_8} present only at step 0; \texttt{cc\_17} present at step 1, created from \texttt{co\_8}, and absent at step 2;  \texttt{cpr\_22} present at step 2, created from \texttt{cc\_17}, and absent at step 3. In RCA-AOC-conv, \texttt{co\_8} also disappears at step 1, but \texttt{cc\_17} and \texttt{cpr\_22}, being the attribute-concepts of the kept relational attributes built on \texttt{co\_8} and \texttt{cc\_17} respectively, persist from their creation on.}
  \label{fig_3concepts}
\end{figure}
RCA-AOC-conv is also useful when relational attributes are developed by unfolding the concept references, to avoid having 
to keep the AOC-posets of all steps \citep{DBLP:conf/concepts/GutierrezHMZ25}.
RCA-AOC and RCA-AOC-conv are implemented in fca4j as options of the \texttt{RCA} command.  

Only RCA-AOC is implemented in RCAexplore. In this software, the 2015 version does not implement any divergence check, letting the user choose between stopping manually or stopping automatically when the number of concepts stays stable between two steps. Version 26.1 implements a repetition check to stop the process when a step is identically repeated twice\footnote{\url{https://forge.icube.unistra.fr/dolques/RCAExplore}}.

%% file: tex__relatedwork.tex
\section{Related work}
\label{sec:relatedwork}

 \paragraph{AOC-posets in FCA}

To the best of our knowledge, the AOC-posets have been introduced by \cite{Godi93a} in the domain of software engineering (object-oriented programming). In their paper, the AOC-poset is called \textit{pruned lattice} and they consider a specific case where each formal object owns a specific formal attribute (not owned by the others). 
Algorithms and tools for building AOC-posets were proposed in \citep{Godi98a,hermes2012}.

 The AOC-poset has also been used in applications of FCA to non-monotonic reasoning and domain theory~\citep{Hitz04} and to produce classifications from linguistic data~\citep{Ossw02,petersen2004set}.
Several software engineering works have relied on specific parts of the AOC-poset, in particular the attribute-concept component. It has been used, for example, to refactor class hierarchies in code reengineering~\citep{Huch00a}, and to extract feature trees from sets of products in software product lines~\citep{DBLP:conf/splc/RysselPK11}.

 \paragraph{RCA foundations and connections with other frameworks}
 
 Relational Concept Analysis originated from a practical knowledge-representation problem encountered in software engineering, databases, and ontology-based settings: discovering concepts latent in a conceptual model.  More precisely, the objective is to make explicit concepts that provide a better factorization of descriptions, reducing duplication and thus improving the overall level of abstraction of the conceptual model. 
 The initial motivation was to extend early FCA-based approaches~\citep{Godi93a} to relational models, in particular UML models~\citep{DBLP:conf/iccs/DaoHHRV04}.
Formalizations of RCA have been proposed in~\citep{HuchardHRV07,rouane2013}; here, we build on the latter.
 This framework computes in an iterative manner (with a possible stop at each step) several concept lattices from data represented in relational format. The concept lattices are connected by links that abstract the relations between objects. Several operators borrowed from Description Logics are used to build links between concepts.
Relations between these various operators and the corresponding concept lattices are described in \citep{DBLP:journals/kbs/BraudDHB18}. 
RCA has also been studied in relation to Description Logics~\citep{DBLP:conf/icfca/RouaneHNV07} and propositionalization~\citep{DBLP:conf/icfca/DolquesMBHB14}.
\cite{euzenat-jair25} adopts a functional view on the RCA process, and defines  the acceptable  solutions (families of concept lattices) as the common fixpoints of two functions. He shows that the RCA process returns the least element of the set of acceptable solutions.
Beyond RCA, a review of FCA methods applied to  relational data has been done by  \cite{DBLP:journals/kais/LeutwylerLFPTT24}.
 
 \paragraph{RCA variants}

Beyond the core formalism, RCA has been improved with specific navigation  control features on the dataset structure, on the scaling operators and on the built conceptual structures  \citep{DBLP:conf/f-egc/DolquesBHN13a,DBLP:conf/f-egc/OuzerdineBDHB19a}. These features have been operationalized on the basis of the dedicated tool \textit{RCAexplore} \citep{DBLP:conf/icfca/DolquesBHB19}.
More recently, LLM-based approaches have been proposed to support the interpretation of RCA results, using knowledge-delivery mechanisms based on rewriting strategies. These mechanisms translate relational attributes into logic-like formulas grounded either in concept extents or in the non-relational attributes of concept intents~\citep{DBLP:conf/concepts/GutierrezHMZ25}. 
In parallel, Fuzzy RCA enriches the paradigm with a graded semantics \citep{DBLP:conf/eusflat/BoffaM23}.
RCA being based on binary relations, higher-arity settings are considered in Polyadic RCA \citep{DBLP:journals/ijar/BazinGK24}.
Besides, to cope with the scalability issues of RCA, \cite{DBLP:conf/cla/DolquesBH13} proposed a variant based on AOC-posets, namely RCA-AOC. 
\cite{ijgis2016}  performed more specifically a comparison between RCA-AOC and RCA based on  Iceberg lattices \citep{Stumme2002}. We showed that RCA-AOC was more efficient and pertinent since it allows extracting interesting and less frequent  behaviors than Iceberg lattices that limit the computed concepts to the most general ones. Such variants should also be interesting for fuzzy or Polyadic RCA, whose  results are more complex than RCA.

\paragraph{RCA Applications}
 
RCA  has been used for the analysis and modernization of UML elements, namely in class diagrams and in use case diagrams~\citep{DBLP:conf/models/ArevaloFHN06,DBLP:journals/fuin/DolquesHNR12,DBLP:conf/cla/GuediMHN13}. 
In~\citep{DBLP:conf/icfca/MohaHVG08}, RCA is used to exploit relations between methods and attributes to detect and fix design defects.
Model transformations are learned from transformation examples thanks to several kinds of relations between model elements (e.g. between elements inside a model, transformation links between source elements and target elements)~\citep{DBLP:conf/models/SaadaDHNS12}. 
In the context of Web service composition, \cite{DBLP:conf/icws/AzmehDHHMT11} used relations between tasks in an abstract task pipeline to classify Web services according to their relevance for instantiating the pipeline tasks.
Other applications can be found in ontology engineering~\citep{bendaoud08b,rvn-iccs11}.  More recently,  RCA was used to extract knowledge graphs from relational data about neurological examinations \citep{WAJNBERG20181397}, 
to analyze data on pediatric cancers \citep{DBLP:conf/icdm/WajnbergVMBKLLS20} or 
faults in aluminum die casting process \citep{DBLP:conf/jowo/WajnbergVLMP19}, to query legal documents \citep{DBLP:conf/icail/MimouniFNBS13}, 
while \cite{nica-dam2020} used it to extract closed partially-ordered patterns (acyclic graphs) from temporal sequences on water quality.
 RCA has also been used for extracting interdependent linked keys from RDF datasets, including  cyclic dependencies \citep{atencia:hal-02984963,DBLP:journals/dam/AtenciaDENV20}.  \cite{semeraro2023} used FCA and RCA  to extract rules from data on physical systems, for the design of digital twins in the engineering domain.
 The same approach is applied in the specific case of digital twins for compressed air energy storage systems \citep{semeraro2025data}.
In most of these applications, the existential scaling operator is used, and the datasets are medium-sized guaranteeing the feasibility of the approach. When dealing with larger or more complex datasets,  RCA-AOC or RCA using Iceberg lattices can be applied, e.g.  to analyze time series from river monitoring \citep{braud2022} or complex data about ancient remedies \citep{fokou-elhaff2024}. 

These works show that RCA-AOC is relevant not only as a more compact alternative to lattice-based RCA, but also as a way to focus on concepts that are meaningful for specific applications, including concepts that may be discarded by support-based restrictions such as Iceberg lattices.

%% file: tex__conclusion.tex
\section{Conclusion}
\label{sec:conclusion}

This paper addressed the convergence of Relational Concept Analysis when concept lattices are replaced by AOC-posets, yielding the RCA-AOC variant. 
AOC-posets provide a compact and efficient representation that helps mitigate the computational complexity associated with large datasets. In some applications, only introducer concepts are useful, while in others they are sufficient to capture the main outcomes of the analysis. However, replacing concept lattices with AOC-posets generally means losing the convergence guarantee of lattice-based RCA.
We illustrated possible divergence through three examples involving existential and strict universal scaling, including one grounded in a UML class-model refactoring task. 
We then identified properties that prevent divergence. We discussed how convergence can be recovered through conditions on the data and on the process. 
Finally, we proposed a convergent variant (RCA-AOC-conv) whose structures are the AOC-posets of cumulated extended contexts: convergence is obtained by never removing relational attributes, some of which may end up referring to concepts absent from the final structures.

As future work, we plan to develop practical tools to detect potential non-convergence in real application domains and to automatically repair problematic datasets using the procedure proposed in this paper. 
A key objective will be to predict divergence without having to compute entire families of concept-posets until a repetition of step sequences is observed. 
We will also examine whether it is preferable to enforce convergence by modifying the dataset (as discussed in the previous section) or by adopting a convergent process variant (e.g. RCA-AOC-conv). In particular, we will study the trade-offs between these two approaches in terms of computational complexity and the size (and usefulness) of the resulting conceptual structures. 
We plan to study the alternative convergent variant discussed in Sect.~\ref{sec_rcaacoconv}, which would keep the concepts that no longer introduce any element: in particular, adding attributes to these concepts so that they remain introducers, as suggested by \cite{aranda_corral_2026_18608706,arandaConcepts2024}, would preserve both the AOC-poset structure and the step-to-step inclusion of the structures, avoiding dangling attributes.
Moreover, since we can now ensure convergence for a given dataset, we expect to implement RCA-AOC more efficiently, for instance by relying on an incremental AOC-poset construction algorithm. 
We also intend to further study how divergence relates to the choice and combination of scaling operators, in particular by characterizing which ones are more prone to trigger non-convergence and under which data conditions.
Finally, we could extend to RCA-AOC the functional view of \cite{euzenat-jair25}, which defines the acceptable solutions (families of concept lattices) as the common fixpoints of two functions, where RCA returns the least element of the set of acceptable solutions.

%% file: tex__appendix.tex
\appendix
\renewcommand{\thesection}{\Alph{section}}

\section{Proof of Lemma \ref{lem:identifiedobject}}
\label{appendix_the3}

\begin{proof}

\begin{figure}
\begin{center}
\includegraphics[width=0.95\textwidth]{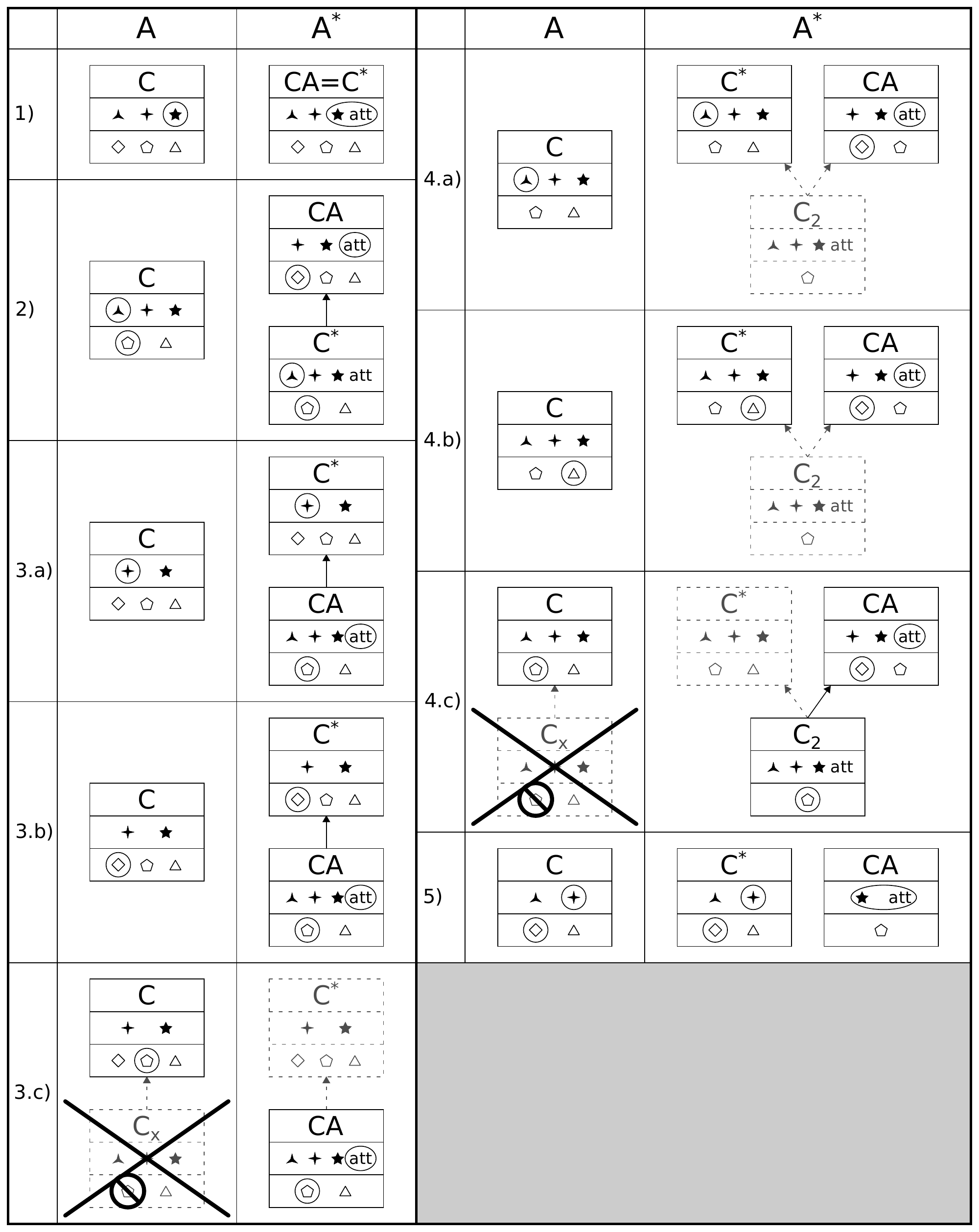}
\caption{Illustration of the incremental construction of an AOC-poset by adding an attribute.
Each concept is represented with its full intent and extent. 
The simplified intent and simplified extent elements are circled. 
Concepts represented with dashed lines are concepts from the lattice that do not belong to the AOC-poset in the pictured cases: the concepts $C^*$ from cases 3.c and 4.c can never exist in the AOC-poset while the concepts $C_2$ from cases 4.a and 4.b could belong to the AOC-poset if $\mathit{Extent}_S(C)\cap \mathit{Extent}(CA) \neq \emptyset$.
Crossed-out concepts are concepts that cannot exist even in the concept lattice.}\label{fig:prop1}
\end{center}
\end{figure}

We begin with a few notations. 
 Let $\mathcal{K}=(G,M,I)$ be a context and let $\mathcal{A}$ be its AOC-poset. Let $\mathit{att}$ denote  a new attribute and $\mathit{g}\subseteq G$ the set of objects owning this attribute. $\mathcal{K}^* = (G,M\cup \{\mathit{att}\},I \cup g\times \{\mathit{att}\})$ is the context resulting from the addition of the  attribute $\mathit{att}$ to the objects of $\mathit{g}$  in  $\mathcal{K}$ and $\mathcal{A}^*$ is its AOC-poset. 
 $\mathit{Intent}_S(C)$ denotes the simplified intent of $C$, $\mathit{Extent}_S(C)$ its simplified extent.
 
 Then we introduce two lemmas.
 The first one states that the
property of being a context of identified objects is preserved by a single attribute addition.
 The second one states that $\mathcal{A}^*$
necessarily contains a concept whose extent is $g$, namely the
attribute-concept of $\mathit{att}$, denoted $\mathit{CA}$ in the
following.
 
\begin{lemma}[Preservation]
\label{lem_preservation}
If $\mathcal{K}$ is a context of identified objects, then  $\mathcal{K}^*$ is also a context of identified objects.
\end{lemma}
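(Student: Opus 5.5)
We show directly that for every object $o \in G$ the closure of $\{o\}$ in $\mathcal{K}^*$ is still $\{o\}$. The key observation is that adding one attribute can only shrink object closures, since it makes objects more distinguishable. Denote by $(\cdot)'$ and $(\cdot)^{*\prime}$ the derivation operators of $\mathcal{K}$ and $\mathcal{K}^*$.

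First compute the intent of a single object in $\mathcal{K}^*$. If $o \in g$, then $\{o\}^{*\prime} = \{o\}' \cup \{\mathit{att}\}$; otherwise $\{o\}^{*\prime} = \{o\}'$. In both cases $\{o\}' \subseteq \{o\}^{*\prime}$, and $\{o\}^{*\prime} \cap M = \{o\}'$.

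Next compute the extent of that intent. For any $Y \subseteq M \cup \{\mathit{att}\}$ we have
\[
Y^{*\prime} = (Y \cap M)' \cap g \quad \text{if } \mathit{att} \in Y, \qquad Y^{*\prime} = (Y \cap M)' \quad \text{otherwise},
\]
because the incidence of $\mathcal{K}^*$ restricted to $G \times M$ is exactly $I$. Applying this to $Y = \{o\}^{*\prime}$ gives $\{o\}^{*\prime\prime} \subseteq (\{o\}')' = \{o\}'' = \{o\}$, using the hypothesis that $o$ is identified in $\mathcal{K}$.

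Since a closure always contains its argument, $\{o\} \subseteq \{o\}^{*\prime\prime}$. Together with the previous inclusion this gives equality for every $o$, so $\mathcal{K}^*$ is a context of identified objects.

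The only point needing care is the monotonicity step: we must check that restricting the new intent to $M$ recovers exactly $\{o\}'$, which is what makes the old derivation apply. This holds because $\mathcal{K}^*$ extends $\mathcal{K}$ in the sense of context inclusion, i.e. $I = I^* \cap (G \times M)$.
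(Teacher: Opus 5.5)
Your proof is correct and follows essentially the same route as the paper: both use $\{o\}^{*\prime} \cap M = \{o\}'$ (valid because $\mathit{att}$ is new, so $I^* \cap (G \times M) = I$) to obtain $\{o\} \subseteq \{o\}^{*\prime\prime} \subseteq \{o\}'' = \{o\}$. Your explicit formula for $Y^{*\prime}$ simply spells out the paper's one-line remark that any object owning all attributes of $\{o\}^{*\prime}$ owns in particular all attributes of $\{o\}'$.
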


\begin{proof}
Let $o \in G$, and let  $(\cdot)'^{*}$ denote
derivation in \ $\mathcal{K}^*$. Since
$\{o\}'^{*} \cap M = \{o\}'$, every object owning all attributes of
$\{o\}'^{*}$ owns in particular all attributes of $\{o\}'$, hence
$\{o\} \subseteq \{o\}''^{*} \subseteq \{o\}'' = \{o\}$, and therefore
$\{o\}''^{*} = \{o\}$.
\end{proof}

 \begin{lemma}[Existence of an attribute-concept CA]
\label{lem_existence}
The set $g$ is closed in $\mathcal{K}^*$, and
$\mathit{CA} = (g,\, g')$ is a formal concept of $\mathcal{K}^*$,
namely the attribute-concept of $\mathit{att}$. In particular,
$\mathit{att} \in \mathit{Intent}_S(\mathit{CA})$ and $\mathit{CA}$
belongs to $\mathcal{A}^*$.
\end{lemma}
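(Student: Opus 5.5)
The plan is to compute the closure of $g$ in $\mathcal{K}^*$ directly. Throughout, $(\cdot)'^{*}$ and $(\cdot)''^{*}$ denote derivation and closure in $\mathcal{K}^*$, and $(\cdot)'$, $(\cdot)''$ denote them in $\mathcal{K}$.

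\textbf{Step 1: the closure of $\{att\}$ is $g$.} By construction of $\mathcal{K}^*$, the objects owning $\mathit{att}$ are exactly those of $g$, so $\{\mathit{att}\}'^{*} = g$.

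\textbf{Step 2: $g$ is closed.} The standard Galois connection identity $Y'''=Y'$ gives
$$g''^{*} = \{\mathit{att}\}'''^{*} = \{\mathit{att}\}'^{*} = g.$$
Hence $(g, g'^{*})$ is a formal concept of $\mathcal{K}^*$. Its intent $g'^{*}$ equals $g' \cup \{\mathit{att}\}$, where $g'$ is the derivation in $\mathcal{K}$, because every object of $g$ owns $\mathit{att}$. This matches the statement's $\mathit{CA}=(g,g')$ once $g'$ is read as a derivation in $\mathcal{K}^*$.

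\textbf{Step 3: $\mathit{CA}$ introduces $\mathit{att}$.} Any concept whose intent contains $\mathit{att}$ has its extent contained in $\{\mathit{att}\}'^{*}=g$. It is therefore a subconcept of $\mathit{CA}$. So $\mathit{CA}$ is the greatest concept having $\mathit{att}$ in its intent. That is, $\mathit{CA}$ is the attribute-concept $(\{\mathit{att}\}'^{*},\{\mathit{att}\}''^{*})$ and $\mathit{att}\in\mathit{Intent}_S(\mathit{CA})$.

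\textbf{Step 4: $\mathit{CA}\in\mathcal{A}^*$.} Since $\mathit{CA}$ introduces at least one attribute, it belongs to the AOC-poset $\mathcal{A}^*$ by definition.

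\textbf{Edge case: $g=\emptyset$.} The argument still holds. The extent is then $\emptyset$ and $\emptyset'^{*}=M\cup\{\mathit{att}\}$, so $\mathit{CA}$ is the bottom concept, and it introduces $\mathit{att}$.

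I expect no real obstacle here; the only care needed is to keep track of which context each derivation is taken in.
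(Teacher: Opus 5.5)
Your proof is correct and follows the same route as the paper. In both, $g$ is closed because $g=\{\mathit{att}\}'$ is the image of a derivation operator (your identity $Y'''=Y'$), so $\mathit{CA}$ is the attribute-concept of $\mathit{att}$ and belongs to $\mathcal{A}^*$. You also make explicit what the paper leaves implicit: the intent must be computed in $\mathcal{K}^*$, giving $g'\cup\{\mathit{att}\}$; you show why $\mathit{att}$ lies in the simplified intent; and you check that the case $g=\emptyset$ (the bottom concept) is covered.
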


\begin{proof}
The set $g$ is closed in $\mathcal{K}^*$ since $\{att\}'=g$ by construction (any image by a derivation operator is closed \citep{Gant99a}). 
$\mathit{CA} = (g, g')$ is thus a well-defined concept of $\mathcal{K}^*$ whose simplified intent contains $att$.
\end{proof}
 
 For each $C=(X,Y) \in \mathcal{A}$, we now exhaustively study the different cases that may occur. 
 Those cases are illustrated by Fig.~\ref{fig:prop1} to help the reader.
During our reasoning we consider that the identifier of a concept is its extent and that two concepts from different AOC-posets are equivalent if their extents are the same regardless of their intent. We denote the equivalence relation between those concepts by the binary operator $\sim$. 
Relational attributes are identified as stated in Definition~\ref{def_attridentity}: if a concept equivalent to $C$
exists at the following step, the attribute $\rho(r)\,r(C)$ is considered to be the same, and
its incidence is unchanged, since it only depends on $(\rho(r), r, \mathit{Extent}(C))$.

\begin{enumerate}
%
%
\item if $g=X$ then $C\sim CA$. 
  In this situation, the considered concept is equivalent to the concept introducing \textit{att}, so \textit{CA} is built just by adding \textit{att} to the intent of \textit{C}.
%
%
\item if $X \subset g$, $\exists C^*=(X,Y\cup \{att\}) \in \mathcal{A}^*$ and $C\sim C^*$.
In other words, for every concept whose extent is strictly included in \textit{g},  \textit{att} is added to its intent. It becomes more specific than \textit{CA}, but it keeps its simplified intent and simplified extent.
%
%
\item if $g \subset X$, 
\begin{enumerate}
\item If $\mathit{Intent}_S(C) \neq \emptyset$ then $\exists C^*=(X,Y) \in \mathcal{A}^*$ as $\mathit{Intent}_S(C^*)=\mathit{Intent}_S(C)$. 
This corresponds to the concepts whose extent contains \textit{g} and which have a non-empty simplified intent. The concept $C$ remains as $C^*$ with the same simplified intent, the simplified extent may be reduced, and become $\mathit{Extent}_S(C^*)=\mathit{Extent}_S(C) \setminus g$, if $\mathit{Extent}_S(C)\cap g \neq \emptyset$.
\item If $\mathit{Intent}_S(C) = \emptyset$ and 
$\mathit{Extent}_S(C)\nsubseteq g $
then $\exists C^*=(X,Y) \in \mathcal{A}^*$ as $\mathit{Extent}_S(C^*)=\mathit{Extent}_S(C) \setminus g \neq \emptyset$. 
This corresponds to the concepts whose extent contains \textit{g}, whose simplified intent is empty and whose simplified extent is not contained by \textit{g}. The concept $C$ remains as $C^*$ with the same simplified intent, the simplified extent of $C^*$ may contain fewer objects if $\mathit{Extent}_S(C)\cap g \neq \emptyset$.
\item \label{item:rem1} If $\mathit{Intent}_S(C)=\emptyset$ and $\mathit{Extent}_S(C)\subseteq g$ then there does not exist any concept $C^*$ of $\mathcal{A}^*$ such that $C\sim C^*$. In this case, $\mathit{CA}$ is not equivalent to any existing concept of $\mathcal{A}$, otherwise the condition $\mathit{Extent}_S(C)\subseteq g$ could not be verified. Thus, $\mathit{CA}$ is a new concept in $\mathcal{A}^*$, and a subconcept  of the concept lattice $(X,Y)$ that has no equivalent in $\mathcal{A}^*$.
In other words, the concept $C$ whose extent contains \textit{g}, whose simplified extent is contained by \textit{g} and whose simplified intent is empty has no equivalent in $\mathcal{A}^*$ as the objects it introduces in $\mathcal{A}$ are introduced by \textit{CA} in $\mathcal{A}^*$. 
In Fig.~\ref{fig:prop1} the barred $C_x$ is here to emphasize that no concept in $\mathcal{A}$ more specific than \textit{C} can contain the objects introduced by \textit{C}.
\end{enumerate}
%
%
\item  if $g \cap X \neq \emptyset$ and  $g \nsubseteq X$ and  $X \nsubseteq g$. 
This corresponds to the concepts whose extent neither contains nor is contained by \textit{g} but shares some objects with \textit{g}.
\begin{enumerate}
\item If $\mathit{Intent}_S(C)\neq\emptyset$ then there exists $C^*=(X,Y)$ in $\mathcal{A}^*$.
If the simplified intent is not empty then an equivalent concept remains in $\mathcal{A}^*$. Note that if some objects of the simplified extent are in \textit{g}, then a concept $C_2$ as pictured in dashed lines in Fig.~\ref{fig:prop1} appears in $\mathcal{A}^*$ introducing the objects from $\mathit{Extent}_S(C)\cap g$.
\item  If $\mathit{Intent}_S(C)=\emptyset$ and $\mathit{Extent}_S(C) \nsubseteq g$ then there exists $C^*=(X,Y)$ in $\mathcal{A}^*$.
If the simplified intent is empty but the simplified extent is not included in \textit{g} then the concept will remain in $\mathcal{A}^*$. Note that if some of the simplified extent is included in \textit{g}, then a concept $C_2$ as pictured in dashed lines in Fig.~\ref{fig:prop1} appears in $\mathcal{A}^*$ introducing the objects from $\mathit{Extent}_S(C)\cap g$.

\item \label{item:rem2} If $\mathit{Intent}_S(C)=\emptyset$ and $\mathit{Extent}_S(C) \subset g$ then there does not exist any concept $C^*$ of $\mathcal{A}^*$ such that $C\sim C^*$.\\
Let $C_2$ be a concept from $\mathcal{K}^*$ such that $\mathit{Extent}(C_2)=g\cap X$ ($g \cap X$ is closed in $\mathcal{K}^*$, as
$(Y \cup \{\mathit{att}\})' = Y' \cap g = X \cap g$).
$C_2$ is a concept of $\mathcal{A}^*$ as $g \cap \mathit{Extent}_S(C) \neq \emptyset$ and $g \cap \mathit{Extent}_S(C) \subseteq \mathit{Extent}_S(C_2)$. There is no $C_3  \in \mathcal{A}$ such that $C_3 \sim C_2$.
In other words, if the simplified intent is empty and the whole simplified extent is included in $g$, then the objects of the simplified extent are introduced by a concept $C_2$ more specific than $C$ leading $C$ to disappear in $\mathcal{A}^*$. In Fig.~\ref{fig:prop1} the barred $C_x$ is here to emphasize that no concept in $\mathcal{A}$ more specific than \textit{C} can contain the object introduced by \textit{C}.
\end{enumerate}
%
%
\item if $g \cap X= \emptyset$, $\exists C^* \in \mathcal{A}^*$ such that $C \sim C^*$.
If the concept shares nothing with $g$ then it is unaffected by the addition of \textit{att}.
\end{enumerate}

From the previous cases, only cases \ref{item:rem1} and \ref{item:rem2} can lead to removing a concept. 
In all the other cases $C$ has an equivalent
concept in $\mathcal{A}^*$, and the possible new concepts $\mathit{CA}$ can only enlarge $\mathit{Ext}_{\mathcal{A}^*}$.
However in a context of identified objects, cases \ref{item:rem1} and \ref{item:rem2} never occur: in both cases $\mathit{Intent}_S(C)=\emptyset$.
Since $C$ belongs to the AOC-poset, it introduces at least one element, here an object $o$, so that $o \in \mathit{Extent}_S(C) \neq \emptyset$. In a context of identified objects, $\{o\}'' = \{o\}$. The object-concept of $o$ is $(\{o\}, \{o\}')$, and since $C$ introduces $o$, $C$ is this object-concept. 
Therefore
$\mathit{Extent}(C)= \mathit{Extent}_S(C) = \{o\}$ 
and $|\mathit{Extent}(C)|=1$.
In \ref{item:rem1}, $g\subset \mathit{Extent}(C)$ which means that $|g|=0$, and $\mathit{Extent}_S(C)\subseteq g$ which is impossible considering that $|\mathit{Extent}_S(C)|=1$ and $|g|=0$. In \ref{item:rem2}, $g \cap \mathit{Extent}(C) \neq \emptyset$ and $\mathit{Extent}(C)\nsubseteq g$ which is impossible considering that $|\mathit{Extent}(C)|=1$.
Consequently, no concept of $\mathcal{A}$ is removed, and the AOC-poset of a context of identified objects can only grow when an attribute is added.

\end{proof}

\section{Algorithm RCA-AOC-conv}
\label{appendix_RCAAOCCONV}

Algorithm~\ref{alg:rca-aoc-conv} implements the approach described in Sect.  \ref{sec_rcaacoconv}. 
Part of the notation used was introduced in Sect. \ref{sec:rca_background} and \ref{sec:rca_gsh}. We recall them for the sake of clarity and define the new ones. We denote by:
\begin{itemize}
    \item $\mathcal{P}_i^{p}$ the concept-poset built at step $p$ for the context derived from $\mathcal{K}_i$
    \item $\mathrm{AOC\text{-}poset(\mathcal{K})}$ the function used to compute the AOC-poset of a context. It is used at every step of the process
     \item $t_r$ the index of the target context of relation $r$
    \item $R_{G_i}$ the subset of relations $r \in \mathbf{R}$ such that $G_{s_{r}}=G_i$, where $s_r$ is the index of the source context of relation $r$
    \item $\mu(m) = (\{m\}', \{m\}'')$ the attribute-concept of $m$, where
      the derivation operators are those of the current extended
      context $\mathcal{K}^p_i$ (used in the implementation note below)
\item $\gamma(o) = (\{o\}'', \{o\}')$ the object-concept of $o$, with
      the same convention
\end{itemize}

Algorithm~\ref{alg:rca-aoc-conv} makes the convergence argument directly
visible: line~\ref{line:appose} only ever adds relational attributes,
whose incidence is fixed at creation and which are identified by their
triple (Definition~\ref{def_attridentity}), so that
$\mathcal{K}_i^{p-1} \subseteq \mathcal{K}_i^{p}$ at every step and the
number of attributes is bounded for each context. The increasing
sequence of extended contexts is therefore stationary
(Lemma~\ref{thm:growingcontext}) and the stop condition of the loop
is eventually satisfied. 
Note that scaling is
applied to all the concepts of the previous posets: the attributes
built on concepts already present at earlier steps exist in
$\mathcal{K}_i^{p-1}$ and are not duplicated, while 
 the attributes built at previous steps on concepts that have since been removed are kept by the apposition, 
as dangling attributes (Sect.~\ref{sec_rcaacoconv}).

\begin{algorithm}[t]
\caption{RCA-AOC-conv}
\label{alg:rca-aoc-conv}
\begin{algorithmic}[1]
\Require an RCF $(\mathbf{K}, \mathbf{R})$, with $\mathbf{K}$ a set of formal contexts  $\{\mathcal{K}_i\}_{i=1,\dots,n}$,
        and a function $\rho$ which associates a scaling operator to each relation
\Ensure  a family of concept-posets $(\mathcal{P}_i)_{i=1,\dots,n}$
\State $p \gets 0$
\ForAll{$\mathcal{K}_i \in \mathbf{K}$}
    \State $\mathcal{K}_i^0 \gets \mathcal{K}_i$;\quad
           $\mathcal{P}_i^0 \gets \mathrm{AOC\text{-}poset}(\mathcal{K}_i^0)$
           \Comment{initial AOC-poset for $\mathcal{K}_i$}
\EndFor
\Repeat
    \State $p \gets p + 1$
    \ForAll{$\mathcal{K}_i \in \mathbf{K}$}
        \State $\mathcal{C}_{t_r}^{\,p-1} \gets \mathit{concept~set~of~} \mathcal{P}_{t_r}^{\,p-1}$
               for each $r \in R_{G_i}$
               \Comment{with $R_{G_i} =\{r^i_1, \ldots, r^i_{k_i}\}$}
        \State \label{line:appose}
               $\mathcal{K}_i^p \gets \mathcal{K}_i^{p-1}
               \,|\, \mathbb{S}_{\rho(r^i_1)}(\mathcal{K}_i, r^i_1, \mathcal{C}_{t_{r^i_1}}^{\,p-1})
               \,|\, \dots \,|\,
               \mathbb{S}_{\rho(r^i_{k_i})}(\mathcal{K}_i, r^i_{k_i}, \mathcal{C}_{t_{r^i_{k_i}}}^{\,p-1})$
               \Comment{apposition up to attribute identity (Def.~\ref{def_attridentity}): attributes already in $\mathcal{K}_i^{p-1}$ are not duplicated}
        \State \label{line:union}
               $\mathcal{P}_i^p \gets \mathrm{AOC\text{-}poset}(\mathcal{K}_i^p)$
    \EndFor
\Until{$\forall i,\ \mathcal{K}_i^p = \mathcal{K}_i^{p-1}$}
\State \Return $(\mathcal{P}_i^p)_{i=1,\dots,n}$
\end{algorithmic}
\end{algorithm}

An implementation need not recompute the AOC-poset from scratch at
each step. The one provided in \textsf{fca4j} proceeds incrementally:
(i) only the relational attributes referring to the concepts created at
the previous step are generated, the others being already present;
(ii) the new introducer concepts are obtained as the attribute-concepts
$\mu(m)$ of the new attributes $m$, and as the
object-concepts $\gamma(o)$, recomputed for the
objects owning a new attribute, since the introducer concept of an
object may become more specific when new attributes are added;
(iii) conversely, a concept whose introduced objects have all migrated
to more specific concepts, and which introduces no attribute, is
removed from the poset. The incremental result coincides with
$\mathrm{AOC\text{-}poset}(\mathcal{K}_i^p)$: the extent $\{m\}'$ of
the attribute-concept of an existing attribute $m$ is fixed and remains
closed, and the object-concept of an object owning no new attribute is
unchanged. This is the counterpart, for RCA-AOC-conv, of the
optimization used in Galicia for RCA (Sect.~\ref{sec_rcaacoconv}).

\begin{table}[htb]
\caption{RCF making RCA-AOC diverge when using the existential scaling operator on each relation, and requiring object-concept  
recomputation in RCA-AOC-conv.}
\label{tab_rcf-div-oc}
\centering
\footnotesize
\setlength{\tabcolsep}{2.5pt}
\renewcommand{\arraystretch}{1.0}
\resizebox{\textwidth}{!}{%
\begin{tabular}{|c@{\;\;}c@{\;\;}c@{\;\;}c|}
\hline
\multicolumn{4}{|c|}{Formal Contexts}\\
\hline
\begin{tabular}{c|c}
$K_1$ & a1\_1 \\ \hline
o1\_1 & $\times$ \\
o1\_2 & $\times$ \\
o1\_3 & \\
\end{tabular}
&
\begin{tabular}{c|cc}
$K_2$ & a2\_1 & a2\_2 \\ \hline
o2\_1 & $\times$ & \\
o2\_2 & & $\times$ \\
\end{tabular}
&
\begin{tabular}{c|cc}
$K_3$ & a3\_1 & a3\_2 \\ \hline
o3\_1 & $\times$ & \\
o3\_2 & & $\times$ \\
\end{tabular}
&
\begin{tabular}{c|cc}
$K_4$ & a4\_1 & a4\_2 \\ \hline
o4\_1 & $\times$ & \\
o4\_2 & & $\times$ \\
\end{tabular}
\\
\hline
\multicolumn{4}{|c|}{Relations}\\
\hline
\begin{tabular}{c|cc}
$r1\_2$ & o2\_1 & o2\_2 \\ \hline
o1\_1 & $\times$ & \\
o1\_2 & & $\times$ \\
o1\_3 & $\times$ & \\
\end{tabular}
&
\begin{tabular}{c|ccc}
$r3\_1$ & o1\_1 & o1\_2 & o1\_3 \\ \hline
o3\_1 & & $\times$ & \\
o3\_2 & & & $\times$ \\
\end{tabular}
&
\begin{tabular}{c|cc}
$r4\_3$ & o3\_1 & o3\_2 \\ \hline
o4\_1 & $\times$ & \\
o4\_2 & & $\times$ \\
\end{tabular}
&
\begin{tabular}{c|cc}
$r3\_4$ & o4\_1 & o4\_2 \\ \hline
o3\_1 & $\times$ & \\
o3\_2 & & $\times$ \\
\end{tabular}
\\
\hline
\end{tabular}
}
\end{table}

We now develop an example which combines divergence 
and the need for recomputing object-concepts when new attributes arrive. 
Table \ref{tab_rcf-div-oc} shows the RCF. 
The dependency graph is shown in Fig. \ref{fig_graph-div-oc}.
Figures~\ref{fig_conv_1} and~\ref{fig_conv_2} 
show the concept-posets computed by \textsf{fca4j} with
RCA-AOC-conv at steps~0 to~4.

\begin{figure}[htb]
\centering
\includegraphics[width=0.7\linewidth]{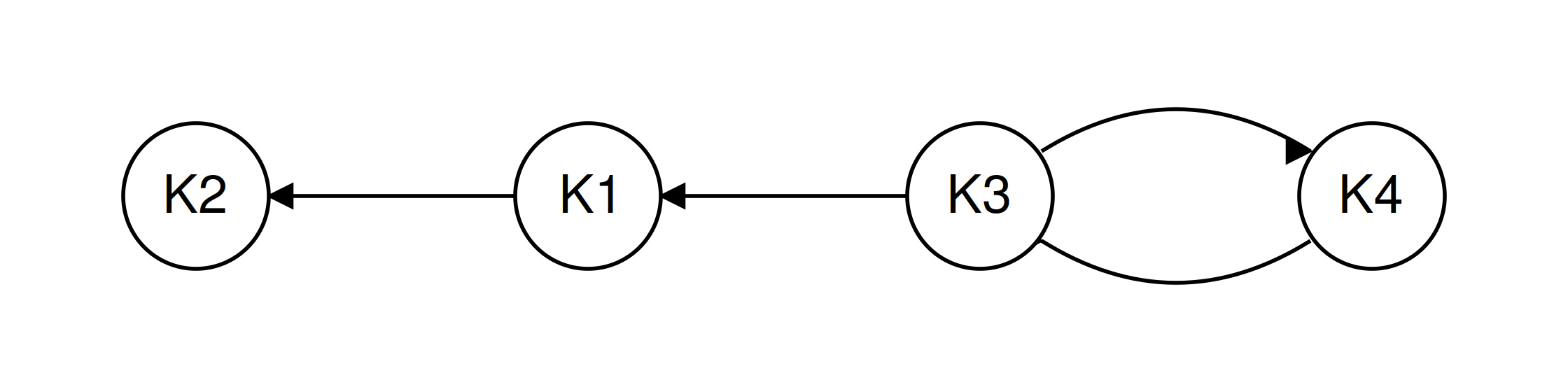}
\caption{Dependency graph of Table \ref{tab_rcf-div-oc} }
\label{fig_graph-div-oc}
\end{figure}

\begin{figure}[htb]
\centering
\includegraphics[width=\linewidth]{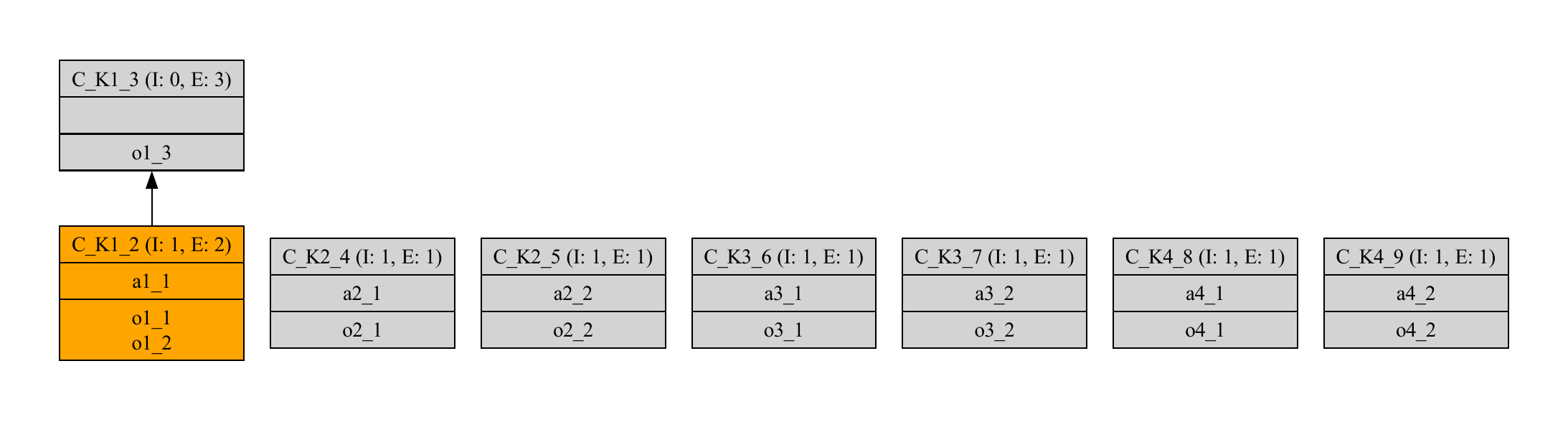}
\includegraphics[width=\linewidth]{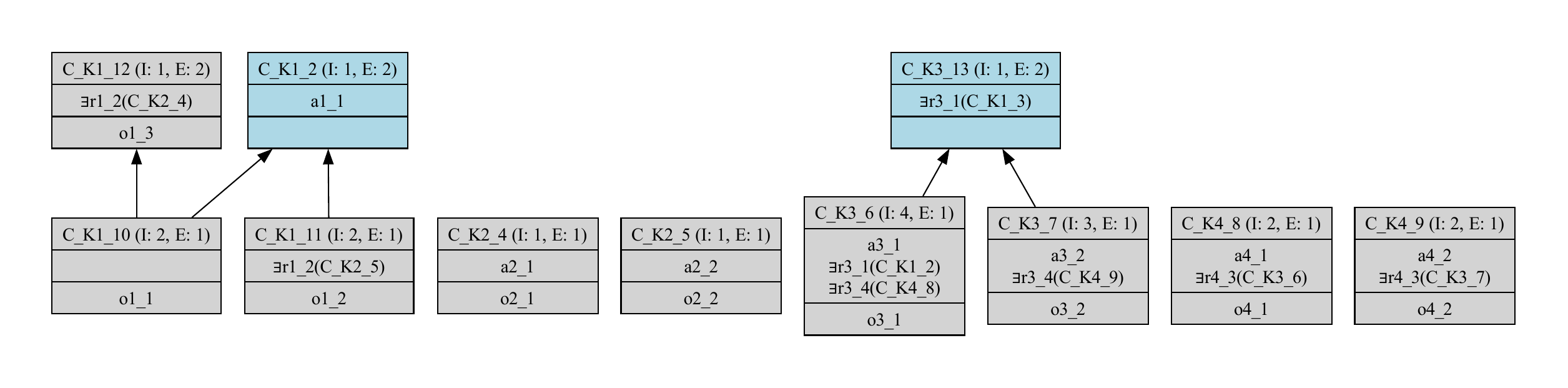}
\includegraphics[width=\linewidth]{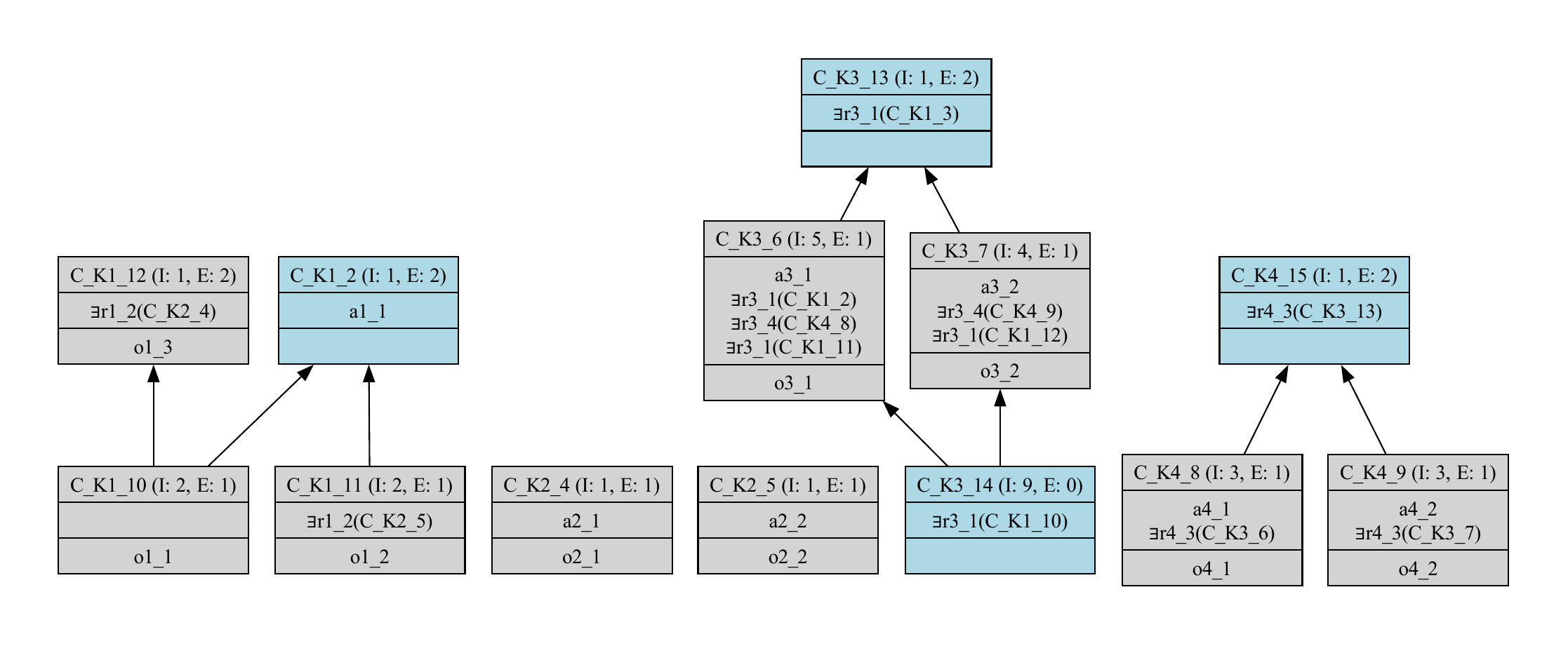}
\caption{AOC-posets built with RCA-AOC-conv at steps 0 to 2 for Table \ref{tab_rcf-div-oc}. The process converges.}
\label{fig_conv_1}
\end{figure}

\begin{figure}[htb]
\centering
\includegraphics[width=\linewidth]{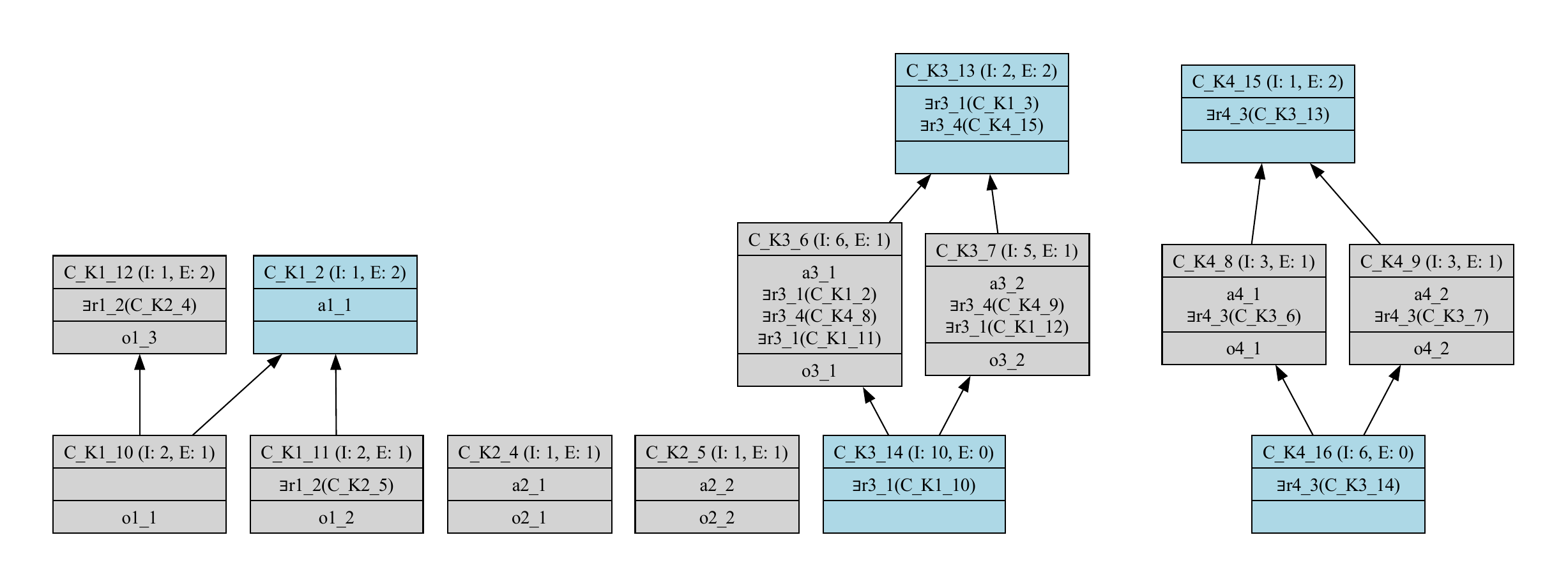}
\includegraphics[width=\linewidth]{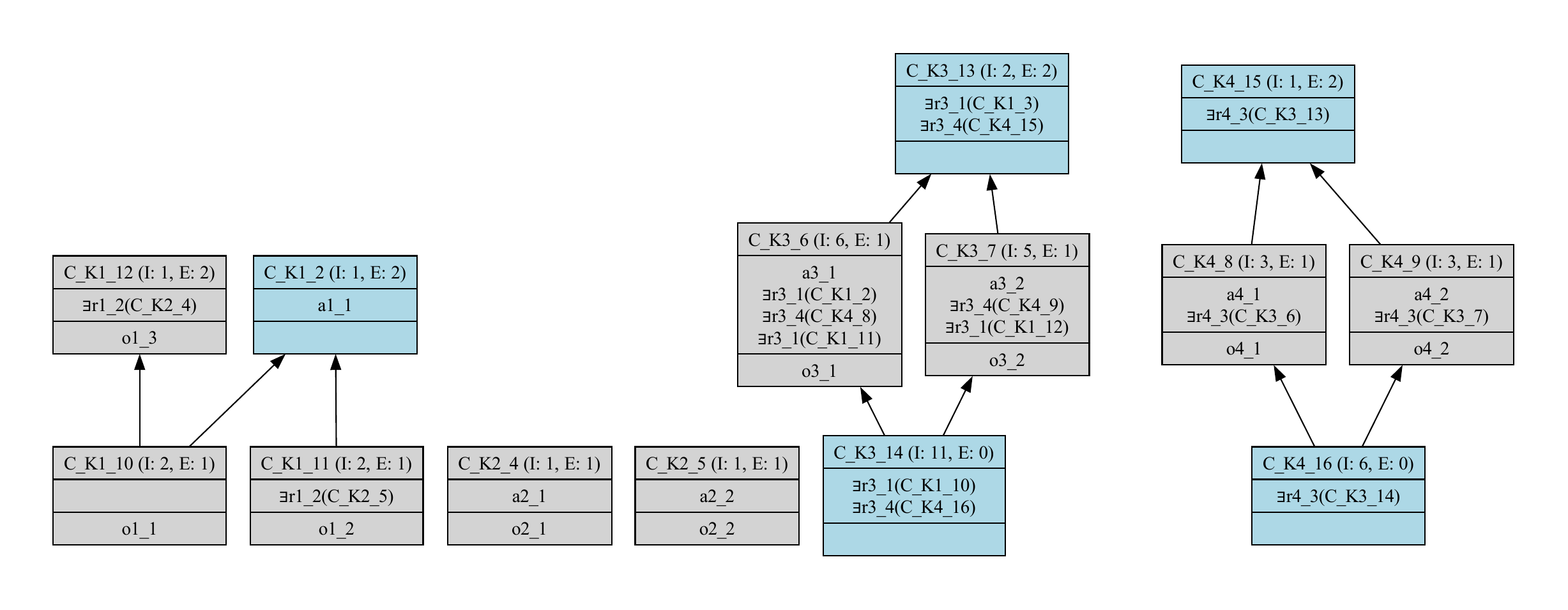}
\caption{AOC-posets built with RCA-AOC-conv at steps 3 to 4 for Table \ref{tab_rcf-div-oc}. 
The process converges.}
\label{fig_conv_2}
\end{figure}

At step~0 (Fig.~\ref{fig_conv_1}, top), $K_2$, $K_3$ and $K_4$ are
contexts of identified objects. 
In $K_1$, \texttt{C\_K1\_2} $= (\{o1\_1, o1\_2\}, \{a1\_1\})$ introduces
\texttt{o1\_1} and \texttt{o1\_2}, while the top concept
\texttt{C\_K1\_3} introduces \texttt{o1\_3}; in $K_3$,
\texttt{C\_K3\_6} introduces \texttt{o3\_1} and \texttt{C\_K3\_7}
introduces \texttt{o3\_2}; 
 in $K_4$,
\texttt{C\_K4\_8} introduces \texttt{o4\_1} and \texttt{C\_K4\_9}
introduces \texttt{o4\_2}.
 
At step~1 (Fig.~\ref{fig_conv_1}, second row), two noteworthy concepts appear.
First, \texttt{C\_K1\_10} $= (\{o1\_1\}, \{a1\_1,
\exists r1\_2(\texttt{C\_K2\_4})\})$ is the new object-concept of
\texttt{o1\_1}: it introduces no attribute (\texttt{a1\_1} is
introduced by \texttt{C\_K1\_2} and
$\exists r1\_2(\penalty500\texttt{C\_K2\_4})$ by \texttt{C\_K1\_12}) and is only
obtained by recomputing the object-concept of \texttt{o1\_1}, whose
closure is refined by the new attribute (point (ii) of the
implementation note above). 
Second, \texttt{C\_K3\_13} $= (\{o3\_1, o3\_2\},$ 
$\{\exists r3\_1(\texttt{C\_K1\_3})\})$ has an empty simplified
extent and refers to \texttt{C\_K1\_3}, which no longer belongs to the
poset of $K_1$ at this step: as \texttt{o1\_3} is now introduced by
\texttt{C\_K1\_12}, \texttt{C\_K1\_3} no longer introduces any element and
is removed, while the attribute
$\exists r3\_1(\texttt{C\_K1\_3})$, whose incidence is fixed
(Definition~\ref{def_attridentity}), is kept: it is our first example
of a dangling attribute. Note that the extent $\{o1\_1, o1\_2, o1\_3\}$
of \texttt{C\_K1\_3} belongs to $Ext_{\mathcal{P}_1^0}$ but not to
$Ext_{\mathcal{P}_1^1}$: RCA-AOC-conv does not preserve the
step-to-step inclusion of the structures
(Sect.~\ref{sec_rcaacoconv}). This dangling reference will also be the
seed of the divergence of RCA-AOC.
 
At step~2 (Fig.~\ref{fig_conv_1}, third row), the presence of
\texttt{C\_K1\_11} and \texttt{C\_K1\_12} refines the intents of the
object-concepts of $K_3$: \texttt{C\_K3\_6} introduces \texttt{o3\_1}
and gains $\exists r3\_1(\texttt{C\_K1\_11})$, and \texttt{C\_K3\_7}
introduces \texttt{o3\_2} and gains $\exists r3\_1(\texttt{C\_K1\_12})$. The attribute
$\exists r3\_1(\texttt{C\_K1\_10})$, owned by no object, is
introduced by \texttt{C\_K3\_14}, whose extent is empty. In parallel, \texttt{C\_K3\_13}
propagates into the circuit:   \texttt{C\_K4\_15} $= (\{o4\_1, o4\_2\},$
$\{\exists r4\_3(\penalty500\texttt{C\_K3\_13})\})$
 is created, with an empty simplified extent as well. 
 
At step~3 (Fig.~\ref{fig_conv_2}, first row), the convergence mechanism of
RCA-AOC-conv becomes visible. Since \texttt{C\_K4\_15} persists, because \texttt{C\_K3\_13} also persists, the
new attribute $\exists r3\_4(\penalty500\texttt{C\_K4\_15})$ is simply added to
the intent of the existing concept \texttt{C\_K3\_13}, whose extent
$\{o3\_1, o3\_2\}$ is already present: no concept is created in
$K_3$, and the mutual dependency between $K_3$ and $K_4$ is resolved
instead of oscillating. Similarly, \texttt{C\_K4\_16} introduces
$\exists r4\_3(\texttt{C\_K3\_14})$ with an empty extent. At step~4
(Fig.~\ref{fig_conv_2}, second row), the only change is the addition of
$\exists r3\_4(\texttt{C\_K4\_16})$ to the intent of
\texttt{C\_K3\_14}: the sets of extents are unchanged, the posets of
steps~3 and~4 are equivalent (Definition~\ref{def_equiv}), and the algorithm stops at step 5. In the final result, $\exists r3\_1(\texttt{C\_K1\_3})$
remains a dangling attribute, in the intent of \texttt{C\_K3\_13}: it
is interpreted through the extent $\{o1\_1, o1\_2, o1\_3\}$ of the
concept \texttt{C\_K1\_3} it was originally built on.

Figure~\ref{fig_div} shows the concept-posets computed by \textsf{fca4j} with
RCA-AOC at steps~2 to~5, and the divergence of the process.
The first two steps are identical and not shown on the figure. 
At step~2,
\texttt{C\_K3\_13} disappears, since \texttt{C\_K1\_3} does not exist
at step~1 and no other attribute is shared by \texttt{o3\_1} and
\texttt{o3\_2}; \texttt{C\_K4\_15} then disappears at step~3, while an
equivalent of \texttt{C\_K3\_13} reappears (\texttt{C\_K3\_16}), built
on the step 2 concept \texttt{C\_K4\_15}. The concepts on the circuit
appear alternately, as in the counterexample of
Table~\ref{tab:gsh-exists}: no two successive steps are equivalent,
and the process loops with period two (steps~3 and~5 share the same
extents, as do steps~4 and~6). 

Note also that a variant of
RCA-AOC-conv that would only compute the attribute-concepts of the new
attributes, skipping the recomputation of object-concepts (point (ii)
of the implementation note above), would converge as well, but would
never create
\texttt{C\_K1\_10}. 

\enlargethispage{\baselineskip}

\begin{figure}[htb]
\centering
\includegraphics[width=\linewidth]{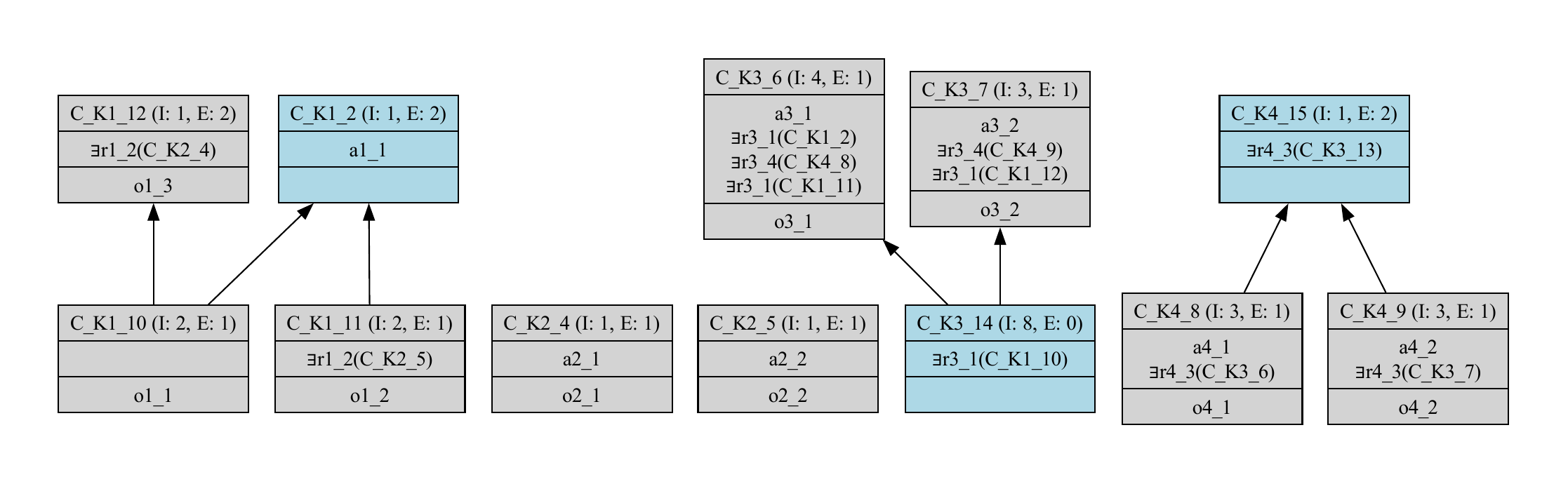}
\includegraphics[width=\linewidth]{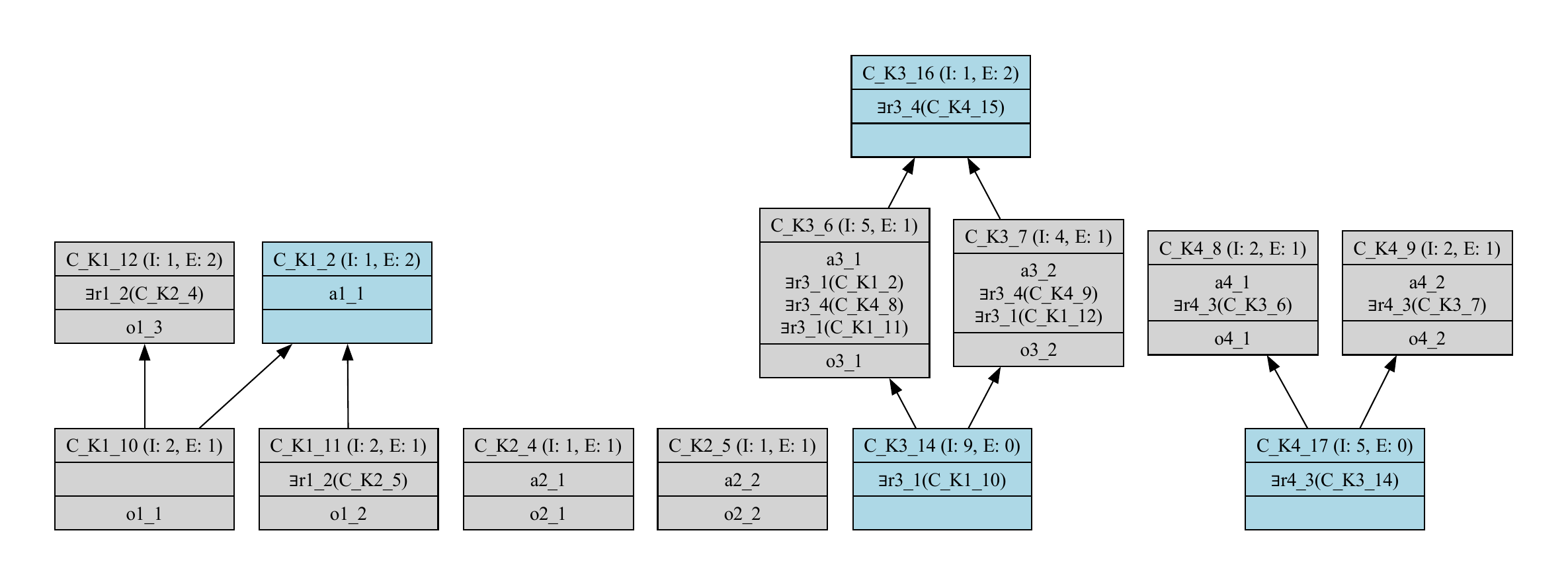}
\includegraphics[width=\linewidth]{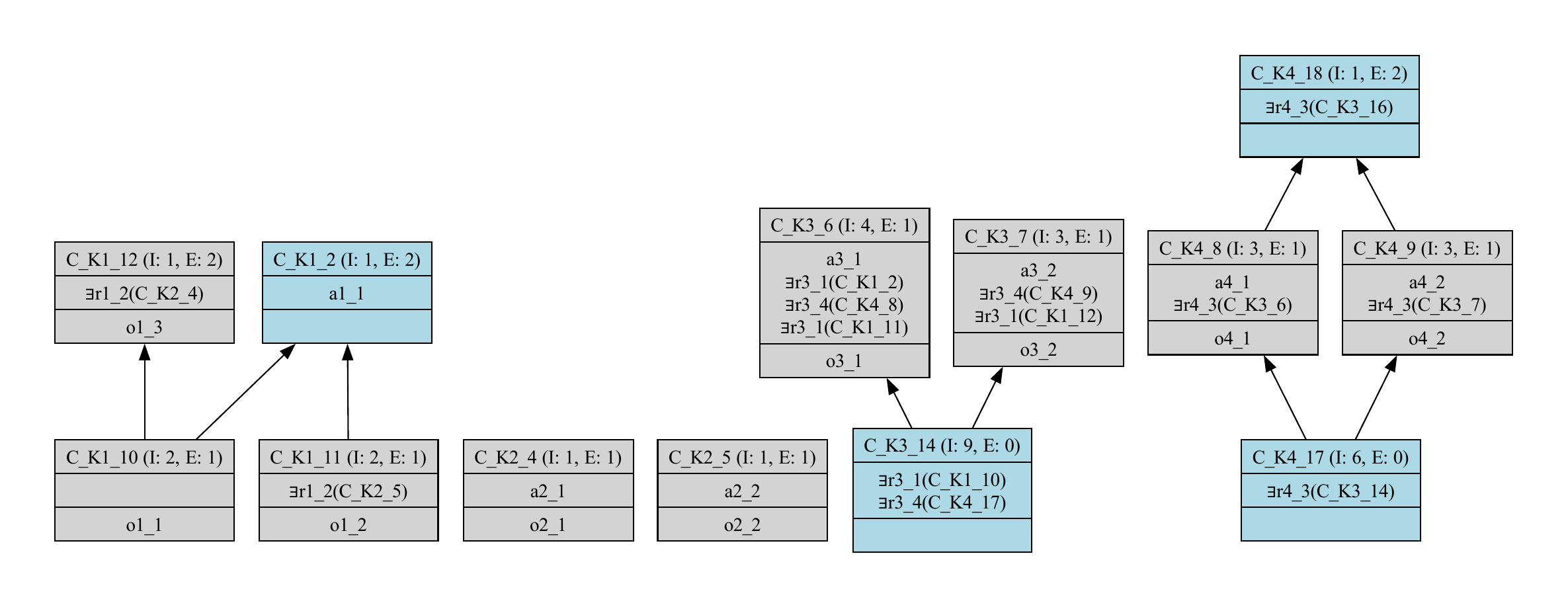}
\includegraphics[width=\linewidth]{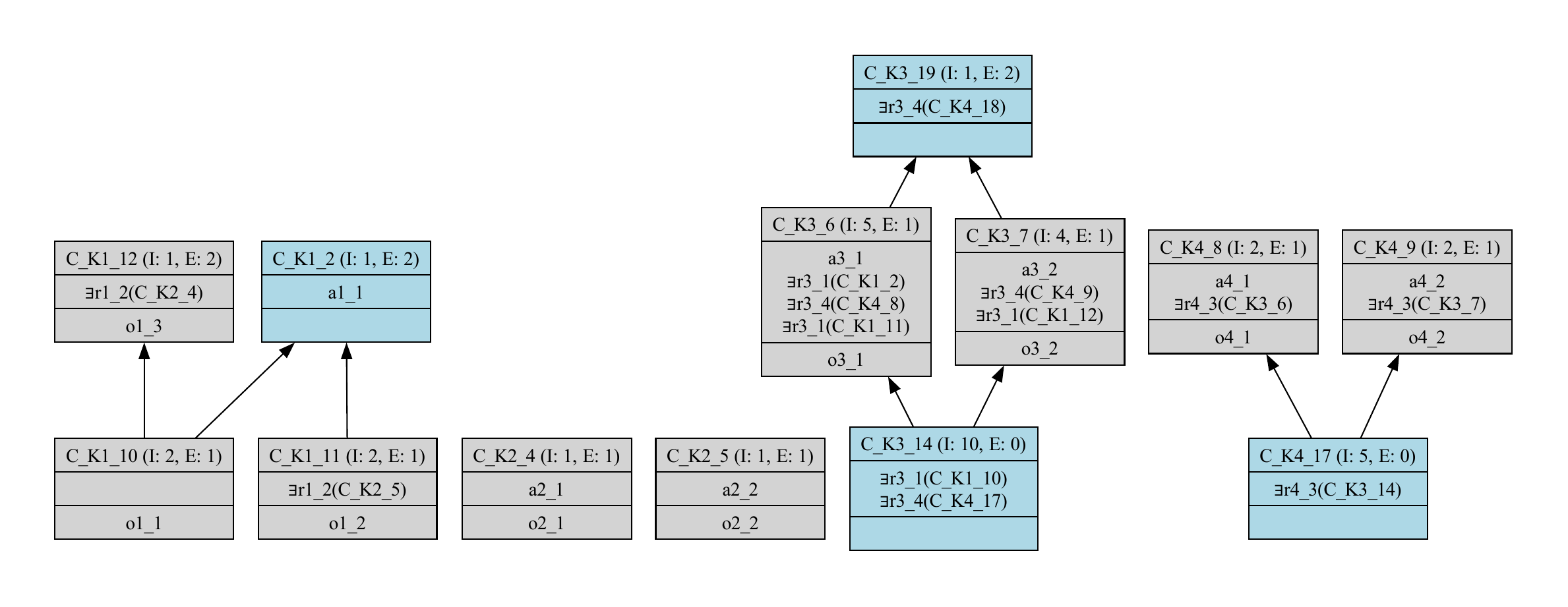}
\caption{AOC-posets built with RCA-AOC at steps 2 to 5 for Table \ref{tab_rcf-div-oc}. The process diverges.}
\label{fig_div}
\end{figure}

%% file: bib-propre.bib
@book{hastie2009elements,
  title={The Elements of Statistical Learning: Data Mining, Inference, and Prediction},
  author={Hastie, Trevor and Tibshirani, Robert and Friedman, Jerome},
  edition={2},
  year={2009},
  publisher={Springer}
}

@article{LeCun2015,
	Author = {LeCun, Yann and Bengio, Yoshua and Hinton, Geoffrey},
	Da = {2015/05/01},
	Doi = {10.1038/nature14539},
	Id = {LeCun2015},
	Isbn = {1476-4687},
	Journal = {Nature},
	Number = {7553},
	Pages = {436--444},
	Title = {Deep learning},
	Ty = {JOUR},
	OPTUrl = {https://doi.org/10.1038/nature14539},
	Volume = {521},
	Year = {2015},
	OPTBdsk-Url-1 = {https://doi.org/10.1038/nature14539}}

@article{10.1007/s10618-024-01041-y,
author = {Atzmueller, Martin and F\"{u}rnkranz, Johannes and Kliegr, Tom\'{a}\v{s} and Schmid, Ute},
title = {Explainable and interpretable machine learning and data mining},
year = {2024},
issue_date = {Sep 2024},
publisher = {Kluwer Academic Publishers},
address = {USA},
volume = {38},
number = {5},
issn = {1384-5810},
OPTurl = {https://doi.org/10.1007/s10618-024-01041-y},
doi = {10.1007/s10618-024-01041-y},
journal = {Data Min. Knowl. Discov.},
OPTmonth = jul,
pages = {2571–2595},
numpages = {25}
}

@article{10.1145/3236009,
author = {Guidotti, Riccardo and Monreale, Anna and Ruggieri, Salvatore and Turini, Franco and Giannotti, Fosca and Pedreschi, Dino},
title = {A Survey of Methods for Explaining Black Box Models},
year = {2018},
issue_date = {September 2019},
publisher = {Association for Computing Machinery},
address = {New York, NY, USA},
volume = {51},
number = {5},
issn = {0360-0300},
OPTurl = {https://doi.org/10.1145/3236009},
doi = {10.1145/3236009},
journal = {ACM Comput. Surv.},
OPTmonth = aug,
articleno = {93},
numpages = {42}
}

@article{DBLP:journals/isci/KuznetsovM18,
  author       = {Sergei O. Kuznetsov and
                  Tatiana P. Makhalova},
  title        = {On interestingness measures of formal concepts},
  journal      = {Inf. Sci.},
  volume       = {442-443},
  pages        = {202--219},
  year         = {2018},
  OPTurl          = {https://doi.org/10.1016/j.ins.2018.02.032},
  OPTdoi          = {10.1016/J.INS.2018.02.032},
  bibsource    = {dblp computer science bibliography, https://dblp.org}
}

@misc{aranda_corral_2026_18608706,
  author       = {G.A. Aranda-Corral and A. Bundy and J. Borrego-Díaz and P.Y. Chan},
  title        = {{Grounding problem in FCA by means of LLMs}},
  OPTmonth        = feb,
  year         = 2026,
  howpublished    = {Zenodo, Slides  presented at conference Concepts 24},
  url          = {https://zenodo.org/records/18608706}}

@misc{arandaConcepts2024,
author       = {G.A. Aranda-Corral and A. Bundy and J. Borrego-Díaz and P.Y. Chan},
title        = {{Grounding problem in Formal Concept Analysis by means of Large Language Models}},
  year         = {2024},
  howpublished = {{Workshop on Late Breaking Advances on Conceptual Structures @ Concepts 2024}},
  OPTurl={https://concepts2024.uca.es/lbacs/}
}

@article{buzmakov2014concept,
  title={Is concept stability a measure for pattern selection?},
  author={Buzmakov, Aleksey and Kuznetsov, Sergei O. and Napoli, Amedeo},
  journal={Procedia Computer Science},
  volume={31},
  pages={918--927},
  year={2014},
  publisher={Elsevier}
}

@inproceedings{DBLP:conf/models/ArevaloFHN06,
  author    = {Gabriela Ar{\'e}valo and
               Jean-R{\'e}my Falleri and
               Marianne Huchard and
               Cl{\'e}mentine Nebut},
  title     = {{Building Abstractions in Class Models: Formal Concept Analysis
               in a Model-Driven Approach}},
  booktitle = {MoDELS 2006},
  year      = {2006},
  pages     = {513-527}
}

@inproceedings{atencia:hal-02984963,
  TITLE = {{A guided walk into link key candidate extraction with relational concept analysis}},
  AUTHOR = {Atencia, Manuel and David, J{\'e}r{\^o}me and Euzenat, J{\'e}r{\^o}me and Napoli, Amedeo and Vizzini, J{\'e}r{\'e}my},
  OPTURL = {https://hal.science/hal-02984963},
  BOOKTITLE = {{ISWC 2019}},
  OPTADDRESS = {Auckland, New Zealand},
  OPTPUBLISHER = {{No commercial editor.}},
  PAGES = {1-9},
  YEAR = {2019},
  OPTMONTH = Oct,
  HAL_ID = {hal-02984963},
  HAL_VERSION = {v1},
}

@article{DBLP:journals/dam/AtenciaDENV20,
  author       = {Manuel Atencia and
                  J{\'{e}}r{\^{o}}me David and
                  J{\'{e}}r{\^{o}}me Euzenat and
                  Amedeo Napoli and
                  J{\'{e}}r{\'{e}}my Vizzini},
  title        = {Link key candidate extraction with relational concept analysis},
  journal      = {Discret. Appl. Math.},
  volume       = {273},
  pages        = {2--20},
  year         = {2020},
  OPTurl          = {https://doi.org/10.1016/j.dam.2019.02.012},
  doi          = {10.1016/J.DAM.2019.02.012},
  OPTbiburl       = {https://dblp.org/rec/journals/dam/AtenciaDENV20.bib},
  OPTbibsource    = {dblp computer science bibliography, https://dblp.org}
}

@inproceedings{DBLP:conf/icws/AzmehDHHMT11,
  author    = {Zeina Azmeh and
               Maha Driss and
               Fady Hamoui and
               Marianne Huchard and
               Naouel Moha and
               Chouki Tibermacine},
  title     = {{Selection of Composable Web Services Driven by User Requirements}},
  booktitle = {ICWS 2011},
  year      = {2011},
  pages     = {395-402}
}

@misc{Valtchev2003GaliciaA,
  title={Galicia : an open platform for lattices},
  author={Petko Valtchev and David Grosser and Cyril Roume and Mohamed {Rouane-Hac\`ene}},
  year={2003},
  url={https://api.semanticscholar.org/CorpusID:16650318}
}

@inproceedings{azmehCla11,
	Author = {Zeina Azmeh and Marianne Huchard and Amedeo Napoli and Mohamed Rouane-Hac\`ene and Petko Valtchev},
	Booktitle = {CLA 2011: Concept Lattices and their Applications},
	Isbn = {978-2-905267-78-8},
	Pages = {377--392},
	Title = {Querying Relational Concept Lattices},
	Year = {2011}}

@Book{baader03,
  editor =     {Franz Baader and Diego Calvanese and Deborah McGuinness and Daniele Nardi and Peter Patel-Schneider},
  title =      {The Description Logic Handbook
Theory, Implementation and Applications},
  publisher =          {Cambridge University Press},
  year =       {2003},
  address =    {Cambridge, MA}}

@article{DBLP:journals/ijar/BazinGK24,
  author       = {Alexandre Bazin and
                  Jessie Galasso and
                  Giacomo Kahn},
  title        = {Polyadic relational concept analysis},
  journal      = {Int. J. Approx. Reason.},
  volume       = {164},
  pages        = {109067},
  year         = {2024},
  OPTurl          = {https://doi.org/10.1016/j.ijar.2023.109067},
  doi          = {10.1016/J.IJAR.2023.109067},
  bibsource    = {dblp computer science bibliography, https://dblp.org}
}

@inproceedings{DBLP:conf/cla/BelohlavekV05,
  author       = {Radim Belohl{\'{a}}vek and
                  Vil{\'{e}}m Vychodil},
  OPTeditor       = {Radim Belohl{\'{a}}vek and
                  V{\'{a}}clav Sn{\'{a}}sel},
  title        = {What is a fuzzy concept lattice?},
  booktitle    = {{CLA} 2005: Concept Lattices and their Applications},
   series       = {{CEUR}-WS Proc.},
  volume       = {162},
  publisher    = {CEUR-WS.org},
  year         = {2005},
  OPTurl          = {https://ceur-ws.org/Vol-162/paper4.pdf},
  bibsource    = {dblp computer science bibliography, https://dblp.org}
}

@inproceedings{bendaoud08b,
	Author = {Rokia Bendaoud and Amedeo Napoli and Yannick Toussaint},
	Booktitle = {EKAW 2008},
	Pages = {156-171},
	Series = {LNCS 5268},
	Title = {{Formal Concept Analysis: A unified framework for building and refining ontologies}},
	Year = 2008}

@inproceedings{hermes2012,
  author    = {Anne Berry and
               Marianne Huchard and
               Amedeo Napoli and
               Alain Sigayret},
  title     = {{Hermes: an efficient algorithm for building Galois Sub-hierarchies}},
  booktitle = {CLA 2012: Concept Lattices and their Applications},
  pages = {21-32},
  year      = {2012}
}

@inproceedings{DBLP:conf/eusflat/BoffaM23,
  author       = {Stefania Boffa and
                  Petra Murinov{\'{a}}},
  OPTeditor       = {Sebastia Massanet and
                  Susana Montes and
                  Daniel Ruiz{-}Aguilera and
                  Manuel Gonz{\'{a}}lez Hidalgo},
  title        = {{Logical Relations Between T-Scaling Quantifiers and Their Implications  in Fuzzy Relational Concept Analysis}},
  booktitle    = {Fuzzy Logic and Technology, and Aggregation Operators -  {EUSFLAT}
                  2023, and    {AGOP} 2023, Proceedings},
  series       = {LNCS},
  volume       = {14069},
  pages        = {393--404},
  publisher    = {Springer},
  year         = {2023},
  OPTurl          = {https://doi.org/10.1007/978-3-031-39965-7\_33},
  doi          = {10.1007/978-3-031-39965-7\_33},
  bibsource    = {dblp computer science bibliography, https://dblp.org} }

@incollection{braud2022,
  TITLE = {{Dealing with large volumes of complex relational data using RCA}},
  AUTHOR = {Braud, Agn{\`e}s and Dolques, Xavier and Gutierrez, Alain and Huchard, Marianne and Keip, Priscilla and {Le Ber}, Florence and Martin, Pierre and Nica, Cristina and Silvie, Pierre},
  OPTURL = {https://hal.science/hal-03744342},
  BOOKTITLE = {{Complex Data Analytics with Formal Concept Analysis}},
   OPTEDITOR = {Rokia Missaoui and L{\'e}onard Kwuida and Talel Abdessalem},
  PUBLISHER = {{Springer}},
  PAGES = {105--134},
  YEAR = {2022},
  DOI = {10.1007/978-3-030-93278-7\_5},
  HAL_ID = {hal-03744342},
}

@article{DBLP:journals/kbs/BraudDHB18,
  author       = {Agn{\`{e}}s Braud and
                  Xavier Dolques and
                  Marianne Huchard and
                  Florence {Le Ber}},
  title        = {Generalization effect of quantifiers in a classification based on
                  relational concept analysis},
  journal      = {Knowl. Based Syst.},
  volume       = {160},
  pages        = {119--135},
  year         = {2018},
 OPTurl          = {https://doi.org/10.1016/j.knosys.2018.06.011},
  doi          = {10.1016/J.KNOSYS.2018.06.011},
  bibsource    = {dblp computer science bibliography, https://dblp.org}
}

@inproceedings{DBLP:conf/iccs/DaoHHRV04,
  author       = {Michel Dao and
                  Marianne Huchard and
                  Mohamed {Rouane-Hac\`ene} and
                  Cyril Roume and
                  Petko Valtchev},
  OPTeditor       = {Karl Erich Wolff and
                  Heather D. Pfeiffer and
                  Harry S. Delugach},
  title        = {Improving Generalization Level in {UML} Models Iterative Cross Generalization
                  in Practice},
  booktitle    = {Conceptual Structures at Work:  {ICCS} 2004},
  series       = {LNCS},
  volume       = {3127},
  pages        = {346--360},
  publisher    = {Springer},
  year         = {2004},
  OPTurl          = {https://doi.org/10.1007/978-3-540-27769-9\_23},
  doi          = {10.1007/978-3-540-27769-9\_23},
  bibsource    = {dblp computer science bibliography, https://dblp.org}
}

@inproceedings{DBLP:conf/f-egc/DolquesBHN13a,
  author       = {Xavier Dolques and
                  Florence {Le Ber} and
                  Marianne Huchard and
                  Cl{\'{e}}mentine Nebut},
  OPTeditor       = {Fabrice Guillet and
                  Bruno Pinaud and
                  Gilles Venturini and
                  Djamel Abdelkader Zighed},
  title        = {Relational Concept Analysis for Relational Data Exploration, Vol. 5},
  booktitle    = {Advances in Knowledge Discovery and Management},
  series       = {SCI},
  volume       = {615},
  pages        = {57--77},
  publisher    = {Springer},
  year         = {2013},
  OPTurl          = {https://doi.org/10.1007/978-3-319-23751-0\_4},
  OPTdoi          = {10.1007/978-3-319-23751-0\_4},
  bibsource    = {dblp computer science bibliography, https://dblp.org}
}

@inproceedings{DBLP:conf/icfca/DolquesBHB19,
  author       = {Xavier Dolques and
                  Agn{\`{e}}s Braud and
                  Marianne Huchard and
                  Florence {Le Ber}},
  OPTeditor       = {Diana Cristea and
                  Florence {Le Ber} and
                  Rokia Missaoui and
                  L{\'{e}}onard Kwuida and
                  Baris Sertkaya},
  title        = {RCAexplore, a {FCA} based Tool to Explore Relational Data},
  booktitle    = {Supplementary Proceedings of {ICFCA} 2019 Conference},
  series       = {{CEUR}-WS Proc.},
  volume       = {2378},
  pages        = {55--59},
  publisher    = {CEUR-WS.org},
  year         = {2019},
  OPTurl          = {https://ceur-ws.org/Vol-2378/shortAT5.pdf},
  bibsource    = {dblp computer science bibliography, https://dblp.org}
}

@inproceedings{dolques2010learning,
  title={Learning transformation rules from transformation examples: An approach based on relational concept analysis},
  author={Dolques, Xavier and Huchard, Marianne and Nebut, Cl{\'e}mentine and Reitz, Philippe},
  booktitle={2010 14th IEEE International Enterprise Distributed Object Computing Conference Workshops},
  pages={27--32},
  year={2010},
  organization={IEEE}}

@article{DBLP:journals/fuin/DolquesHNR12,
  author    = {Xavier Dolques and
               Marianne Huchard and
               Cl{\'e}mentine Nebut and
               Philippe Reitz},
  title     = {{Fixing Generalization Defects in UML Use Case Diagrams}},
  journal   = {Fundam. Inform.},
  volume    = {115},
  number    = {4},
  year      = {2012},
  pages     = {327-356},
  ee        = {http://dx.doi.org/10.3233/FI-2012-658},
  bibsource = {DBLP, http://dblp.uni-trier.de}
}

@inproceedings{DBLP:conf/cla/DolquesBH13,
  author    = {Xavier Dolques and
               Florence {Le Ber} and
               Marianne Huchard},
  title     = {{AOC-Posets: a Scalable Alternative to Concept Lattices for
               Relational Concept Analysis}},
  booktitle = {CLA 2013: Concept Lattices and their Applications},
   series       = {{CEUR}-WS Proc.},
   year      = {2013},
  pages     = {129-140},
  ee        = {http://ceur-ws.org/Vol-1062/paper11.pdf},
  bibsource = {DBLP, http://dblp.uni-trier.de}
}

@article{ijgis2016,
author = {Xavier Dolques and Florence {Le Ber} and Marianne Huchard and Corinne Grac},
title = {{Performance-friendly rule extraction in large water data-sets with AOC posets and relational concept analysis}},
journal = {International Journal of General Systems},
volume = {45},
number = {1},
pages = {1--24},
year = {2016},
OPTdoi = {10.1080/03081079.2015.1072927},
OPTURL = {http://dx.doi.org/10.1080/03081079.2015.1072927},
eprint = {http://dx.doi.org/10.1080/03081079.2015.1072927}
}

@inproceedings{DBLP:conf/icfca/DolquesMBHB14,
  author       = {Xavier Dolques and
                  Kartick Chandra Mondal and
                  Agn{\`{e}}s Braud and
                  Marianne Huchard and
                  Florence {Le Ber}},
  OPTeditor       = {Cynthia Vera Glodeanu and  Mehdi Kaytoue and
                  Christian Sacarea},
  title        = {{RCA} as a Data Transforming Method: {A} Comparison with Propositionalisation},
  booktitle    = {Formal Concept Analysis -  {ICFCA} 2014},
  series       = {LNCS},
  volume       = {8478},
  pages        = {112--127},
  publisher    = {Springer},
  year         = {2014},
  OPTurl          = {https://doi.org/10.1007/978-3-319-07248-7\_9},
  OPTdoi          = {10.1007/978-3-319-07248-7\_9},
  bibsource    = {dblp computer science bibliography, https://dblp.org}}

@article{euzenat-jair25,
  author       = {J{\'{e}}r{\^{o}}me Euzenat},
  title        = {The Fixed-Point Semantics of Relational Concept Analysis},
  journal      = {J. Artif. Intell. Res.},
  volume       = {83},
  year         = {2025},
  OPTurl          = {https://doi.org/10.1613/jair.1.17882},
  doi          = {10.1613/JAIR.1.17882},
  bibsource    = {dblp computer science bibliography, https://dblp.org}}

@inproceedings{ferre2018hierarchies,
  title={{How Hierarchies of Concept Graphs Can Facilitate the Interpretation of RCA Lattices?}},
  author={Ferr{\'e}, S{\'e}bastien and Cellier, Peggy},
  booktitle={CLA 2018 : Concept Lattices and their Applications},
  Series = {CEUR-WS Proc.},
  volume ={2123},
  year={2018}}

@article{ferre2020graph,
  title={{Graph-FCA: An} extension of formal concept analysis to knowledge graphs},
  author={Ferr{\'e}, S{\'e}bastien and Cellier, Peggy},
  journal={Discrete applied mathematics},
  volume={273},
  pages={81--102},
  year={2020},
  publisher={Elsevier}}

@article{fokou-ijar2025,
  TITLE = {{Theoretical comparison of Relational Concept Analysis (RCA) and Graph-FCA (GCA)}},
  AUTHOR = {Fokou, Vanessa and Cellier, Peggy and Dolques, Xavier and Ferr{\'e}, S{\'e}bastien and Le Ber, Florence},
  OPTURL = {https://hal.science/hal-05138994},
  JOURNAL = {{Int. Journal of Approximate Reasoning}},
  PUBLISHER = {{Elsevier}},
  VOLUME = {186},
  PAGES = {109496},
  YEAR = {2025},
  OPTMONTH = Nov,
  DOI = {10.1016/j.ijar.2025.109496},
  HAL_ID = {hal-05138994}}

@inproceedings{fokou-elhaff2024,
  TITLE = {{Exploring Old Arabic Remedies with Formal and Relational Concept Analysis}},
  AUTHOR = {Fokou, Vanessa and El Haff, Karim and Braud, Agn{\`e}s and Dolques, Xavier and Le Ber, Florence and Pitchon, Veronique},
  OPTURL = {https://hal.science/hal-04622852},
  BOOKTITLE = {{Conceptual Knowledge Structure - Concepts 2024, Proceedings}},
  OPTADDRESS = {Cadiz, Spain},
  YEAR = {2024},
  OPTMONTH = Sep,
  HAL_ID = {hal-04622852}}

@book{Gant99a,
  author = {B. Ganter and R. Wille},
  title = {Formal Concept Analysis: Mathematical Foundations},
  publisher = {Springer Verlag},
  year = {1999}}

@inproceedings{Godi93a,
   author = {R. Godin and H. Mili},
   title = {Building and {M}aintaining {A}nalysis-{L}evel {C}lass {H}ierarchies
   using   {G}alois {L}attices},
   booktitle = {OOPSLA '93},
   OPTlocation = {Washington, DC, USA},
   pages = {394--410},
   year = {1993},
   volume = {28}}

@article{Godi98a,
  author = {R. Godin and H. Mili and G. W. Mineau and R. Missaoui
  and A. Arfi and T.-T. Chau},
  title = {Design of {C}lass {H}ierarchies based on {C}oncept ({G}alois)
  {L}attices},
  journal = {Theory and Practice of Object Systems},
  year = {1998},
  volume = {4},
  number = {2},
  pages = {117-134} }

@inproceedings{DBLP:conf/cla/GuediMHN13,
  author       = {Abdoulkader Osman Gu{\'{e}}di and
                  Andr{\'{e}} Miralles and
                  Marianne Huchard and
                  Cl{\'{e}}mentine Nebut},
  OPTeditor       = {Manuel Ojeda{-}Aciego and       Jan Outrata},
  title        = {A Practical Application of Relational Concept Analysis to Class Model
                  Factorization: Lessons Learned from a Thematic Information System},
  booktitle    = {CLA 2013 : Concept Lattices   and Their Applications},
  series       = {{CEUR-WS} Proc.},
  volume       = {1062},
  pages        = {9--20},
  publisher    = {CEUR-WS.org},
  year         = {2013},
  OPTurl          = {https://ceur-ws.org/Vol-1062/paper1.pdf},
  bibsource    = {dblp computer science bibliography, https://dblp.org}
}

@inproceedings{DBLP:conf/concepts/GuenouneGHLMMZ25,
  author       = {Hani Guenoune and
                  Alain Gutierrez and
                  Marianne Huchard and
                  Mathieu Lafourcade and
                  Pierre Martin and
                  Andr{\'{e}} Miralles and
                  Huaxi Zhang},
  OPTeditor       = {Peggy Cellier and
                  Bernhard Ganter and
                  Rokia Missaoui},
  title        = {{LLM-Assisted Relational Concept Analysis for Class Model Restructuring}},
  booktitle    = {Conceptual Knowledge Structures -  {CONCEPTS} 2025, Proceedings},
  series       = {LNCS},
  volume       = {15941},
  pages        = {107--123},
  publisher    = {Springer},
  year         = {2025},
  OPTurl          = {https://doi.org/10.1007/978-3-032-03364-2\_7},
  doi          = {10.1007/978-3-032-03364-2\_7},
  bibsource    = {dblp computer science bibliography, https://dblp.org}}

@inproceedings{DBLP:conf/concepts/GutierrezHMZ25,
  author       = {Alain Gutierrez and
                  Marianne Huchard and
                  Pierre Martin and
                  Huaxi Zhang},
  OPTeditor       = {Peggy Cellier and
                  Bernhard Ganter and
                  Rokia Missaoui},
  title        = {Empowering Relational Concept Analysis Using Large Language Model
                  Knowledge Delivery},
  booktitle    = {Conceptual Knowledge Structures -
                  {CONCEPTS} 2025, Proceedings},
  series       = {LNCS},
  volume       = {15941},
  pages        = {124--139},
  publisher    = {Springer},
  year         = {2025},
  OPTurl          = {https://doi.org/10.1007/978-3-032-03364-2\_8},
  OPTdoi          = {10.1007/978-3-032-03364-2\_8},
  OPTbiburl       = {https://dblp.org/rec/conf/concepts/GutierrezHMZ25.bib},
  OPTbibsource    = {dblp computer science bibliography, https://dblp.org}}

@inproceedings{DBLP:conf/icfca/RouaneHNV07,
  author       = {Mohamed {Rouane-Hac\`ene} and  Marianne Huchard and
                  Amedeo Napoli and    Petko Valtchev},
  OPTeditor       = {Sergei O. Kuznetsov and
                  Stefan Schmidt},
  title        = {A Proposal for Combining Formal Concept Analysis and Description Logics  for Mining Relational Data},
  booktitle    = {Formal Concept Analysis,  {ICFCA} 2007,    Proceedings},
  series       = {LNCS},
  volume       = {4390},
  pages        = {51--65},
  publisher    = {Springer},
  year         = {2007},
  OPTurl          = {https://doi.org/10.1007/978-3-540-70901-5\_4},
  doi          = {10.1007/978-3-540-70901-5\_4},
  bibsource    = {dblp computer science bibliography, https://dblp.org}}

@inproceedings{rvn-iccs11,
	Author = {Mohamed Rouane-Hac\`ene and Petko Valtchev and Roger Nkambou},
	Bibsource = {DBLP, http://dblp.uni-trier.de},
	Booktitle = {ICCS 2011},
	Pages = {257-269},
	Title = {{Supporting Ontology Design through Large-Scale FCA-Based Ontology Restructuring}},
	Year = {2011}}

@inproceedings{Hitz04,
  author    = {P. Hitzler},
  title     = {{Default Reasoning over Domains and Concept Hierarchies}},
  booktitle = {{KI 2004: Advances in Artificial Intelligence}},
  year      = {2004},
  pages     = {351-365},
  publisher = {Springer Verlag},
  series    = {LNCS},
  volume    = {3238}}

@article{Huch00a,
  author = {M. Huchard and H. Dicky and H. Leblanc},
  title = {{G}alois {L}attice as a {F}ramework to specify {A}lgorithms {B}uilding
  {C}lass {H}ierarchies},
  journal = {Theoretical Informatics and Applications},
  year = {2000},
  volume = {34},
  pages = {521-548} }

@ARTICLE{HuchardHRV07,
  author = { Huchard, Marianne and  Rouane-Hac\`ene, Mohamed and  Roume,Cyril and  Valtchev,Petko},
  title = {Relational concept discovery in structured datasets},
  journal = {Ann. Math. Artif. Intell.},
  year = {2007},
  volume = {49},
  pages = {39-76},
  number = {1-4},
  bibsource = {DBLP, http://dblp.uni-trier.de},
  ee = {http://dx.doi.org/10.1007/s10472-007-9056-3}}

@article{DBLP:journals/dam/KottersE20,
  author       = {Jens K{\"{o}}tters and
                  Peter W. Eklund},
  title        = {Conjunctive query pattern structures: {A} relational database model
                  for Formal Concept Analysis},
  journal      = {Discret. Appl. Math.},
  volume       = {273},
  pages        = {144--171},
  year         = {2020},
  OPTurl          = {https://doi.org/10.1016/j.dam.2019.08.019},
  doi          = {10.1016/J.DAM.2019.08.019},
  bibsource    = {dblp computer science bibliography, https://dblp.org}}

@article{DBLP:journals/kais/LeutwylerLFPTT24,
  author       = {Nicol{\'{a}}s Leutwyler and  Mario Lezoche and   Chiara Franciosi and    Herv{\'{e}} Panetto and    Laurent Teste and    Diego Torres},
  title        = {Methods for concept analysis and multi-relational data mining: a systematic
                  literature review},
  journal      = {Knowl. Inf. Syst.},
  volume       = {66},
  number       = {9},
  pages        = {5113--5150},
  year         = {2024},
  OPTurl          = {https://doi.org/10.1007/s10115-024-02139-x},
  doi          = {10.1007/S10115-024-02139-X},
  bibsource    = {dblp computer science bibliography, https://dblp.org}}

@inproceedings{DBLP:conf/icail/MimouniFNBS13,
  author       = {Nada Mimouni and
                  Meritxell Fern{\'{a}}ndez and
                  Adeline Nazarenko and
                  Dani{\`{e}}le Bourcier and
                  Sylvie Salotti},
  editor       = {Enrico Francesconi and Bart Verheij},
  title        = {A relational approach for information retrieval on {XML} legal sources},
  booktitle    = {International Conference on Artificial Intelligence and Law, {ICAIL}'13},
  pages        = {212--216},
  publisher    = {{ACM}},
  year         = {2013},
  OPTurl          = {https://doi.org/10.1145/2514601.2514629},
  doi          = {10.1145/2514601.2514629},
  bibsource    = {dblp computer science bibliography, https://dblp.org}}

@inproceedings{DBLP:conf/cla/MirallesMHNDD15,
  author       = {Andr{\'{e}} Miralles and
                  Guilhem Molla and
                  Marianne Huchard and
                  Cl{\'{e}}mentine Nebut and
                  Laurent Deruelle and
                  Mustapha Derras},
  OPTeditor       = {Sadok Ben Yahia and          Jan Konecny},
  title        = {Class Model Normalization - Outperforming Formal Concept Analysis
                  Approaches with AOC-posets},
  booktitle    = {CLA 2015 : Concept Lattices and Their Applications},
   series       = {{CEUR}-WS Proc.},
  volume       = {1466},
  pages        = {111--122},
  publisher    = {CEUR-WS.org},
  year         = {2015},
  OPTurl          = {https://ceur-ws.org/Vol-1466/paper09.pdf},
  bibsource    = {dblp computer science bibliography, https://dblp.org} }

@inproceedings{DBLP:conf/icfca/MohaHVG08,
  author    = {Naouel Moha and  Mohamed Rouane-Hac\`ene and
               Petko Valtchev and   Yann-Ga{\"e}l Gu{\'e}h{\'e}neuc},
  title     = {{Refactorings of Design Defects Using Relational Concept
               Analysis}},
  booktitle = {Formal Concept Analysis, ICFCA 2008},
  year      = {2008},
  pages     = {289–304},
publisher = {Springer},
series = {LNAI},
volume = {4933}}

@article{nica-kbs2020,
title = {{FastRCA-Seq:} An efficient approach for extracting hierarchies of multilevel closed partially-ordered patterns},
journal = {Knowledge-Based Systems},
volume = {210},
pages = {106533},
year = {2020},
issn = {0950-7051},
doi = {https://doi.org/10.1016/j.knosys.2020.106533},
OPTurl = {https://www.sciencedirect.com/science/article/pii/S0950705120306626},
author = {Cristina Nica and Victor-Petru Alm{\u{a}}{\c{s}}an and Adrian Groza} }

@article{nica-dam2020,
  TITLE = {{RCA-Seq: an Original Approach for Enhancing the Analysis of Sequential Data Based on Hierarchies of Multilevel Closed Partially-Ordered Patterns}},
  AUTHOR = {Nica, Cristina and Braud, Agn{\`e}s and Le Ber, Florence},
  OPTURL = {https://hal.science/hal-02081393},
  JOURNAL = {{Discrete Applied Mathematics}},
  PUBLISHER = {{Elsevier}},
  VOLUME = {273},
  PAGES = {232-251},
  YEAR = {2020},
  OPTmonth = Feb,
  DOI = {10.1016/j.dam.2019.02.037},
  HAL_ID = {hal-02081393} }

@Misc{OMG-UML,
    Author       = "{Object Management Group}",
    Title        = "{Unified Modeling Language, Version 2.5.1}",
    Howpublished = "OMG Document Number formal/2017-12 (\url{https://www.omg.org/spec/UML/2.5.1})",
    Year         = 2017 }

@inproceedings{Ossw02,
  author    = {R. Osswald and W. Petersen},
  title     = {{Induction of Classifications from Linguistic Data}},
  booktitle = {ECAI'02 Workshop},
  year      = {2002},
  OPTmonth     = {July} }

@inproceedings{DBLP:conf/f-egc/OuzerdineBDHB19a,
  author       = {Amirouche Ouzerdine and
                  Agn{\`{e}}s Braud and
                  Xavier Dolques and
                  Marianne Huchard and
                  Florence {Le Ber}},
 OPTeditor       = {Rakia Jaziri and
                  Arnaud Martin and
                  Marie{-}Christine Rousset and
                  Lydia Boudjeloud{-}Assala and
                  Fabrice Guillet},
  title        = {Adjusting the Exploration Flow in Relational Concept Analysis - An
                  Experience on a Watercourse Quality Dataset},
  booktitle    = {Advances in Knowledge Discovery and Management, Vol. 9},
  series       = {SCI},
  volume       = {1004},
  pages        = {175--198},
  publisher    = {Springer},
  year         = {2019},
  OPTurl          = {https://doi.org/10.1007/978-3-030-90287-2\_9},
  doi          = {10.1007/978-3-030-90287-2\_9},
  bibsource    = {dblp computer science bibliography, https://dblp.org}
}

@article{petersen2004set,
  title={A set-theoretical approach for the induction of inheritance hierarchies},
  author={Petersen, Wiebke},
  journal={Electronic Notes in Theoretical Computer Science},
  volume={53},
  pages={296--308},
  year={2004},
  publisher={Elsevier} }

@article{DBLP:journals/eswa/PoelmansIKD13,
  author       = {Jonas Poelmans and
                  Dmitry I. Ignatov and
                  Sergei O. Kuznetsov and
                  Guido Dedene},
  title        = {Formal concept analysis in knowledge processing: {A} survey on applications},
  journal      = {Expert Syst. Appl.},
  volume       = {40},
  number       = {16},
  pages        = {6538--6560},
  year         = {2013},
  OPTurl          = {https://doi.org/10.1016/j.eswa.2013.05.009},
  doi          = {10.1016/J.ESWA.2013.05.009},
  bibsource    = {dblp computer science bibliography, https://dblp.org}}

@article{DBLP:journals/eswa/PoelmansKID13,
  author       = {Jonas Poelmans and
                  Sergei O. Kuznetsov and
                  Dmitry I. Ignatov and
                  Guido Dedene},
  title        = {Formal Concept Analysis in knowledge processing: {A} survey on models
                  and techniques},
  journal      = {Expert Syst. Appl.},
  volume       = {40},
  number       = {16},
  pages        = {6601--6623},
  year         = {2013},
  OPTurl          = {https://doi.org/10.1016/j.eswa.2013.05.007},
  doi          = {10.1016/J.ESWA.2013.05.007},
  bibsource    = {dblp computer science bibliography, https://dblp.org}}

@article{rouane2013,
  author    = {Mohamed Rouane-Hac\`ene and
               Marianne Huchard and
               Amedeo Napoli and
               Petko Valtchev},
  title     = {Relational concept analysis: mining concept lattices from
               multi-relational data},
  journal   = {Ann. Math. Artif. Intell.},
  volume    = {67},
  number    = {1},
  year      = {2013},
  pages     = {81-108},
  ee        = {http://dx.doi.org/10.1007/s10472-012-9329-3},
  bibsource = {DBLP, http://dblp.uni-trier.de} }

@inproceedings{DBLP:conf/splc/RysselPK11,
  author    = {Uwe Ryssel and
               Joern Ploennigs and
               Klaus Kabitzsch},
  title     = {Extraction of feature models from formal contexts},
  booktitle = {SPLC 2011 Workshops},
  year      = {2011} }

@inproceedings{DBLP:conf/models/SaadaDHNS12,
  author    = {Hajer Saada and
               Xavier Dolques and
               Marianne Huchard and
               Cl{\'e}mentine Nebut and
               Houari A. Sahraoui},
  title     = {{Generation of Operational Transformation Rules from Examples
               of Model Transformations}},
  booktitle = {MoDELS 2012, Model Driven Engineering Languages and Systems},
   series    = {LNCS},
  volume    = {7590},
year      = {2012},
  pages     = {546-561} }

@article{semeraro2025data,
  title={Data-driven digital twin for fault detection in compressed air energy storage systems: Design and experimental validation},
  author={Semeraro, Concetta and Ababneh, Rawnaq Faisal and Alkhatib, Lamis Ahmed and Saqallah, Dana and Al Koutoubi, Rawad and Aljaghoub, Haya and Alami, Abdul Hai and Abdelkareem, Mohammad Ali and Olabi, Abdul Ghani},
  journal={Energy},
  pages={138401},
  year={2025},
  publisher={Elsevier}}

@article{semeraro2023,
title = {Data-driven invariant modelling patterns for digital twin design},
journal = {Journal of Industrial Information Integration},
volume = {31},
pages = {100424},
year = {2023},
issn = {2452-414X},
doi = {https://doi.org/10.1016/j.jii.2022.100424},
OPTurl = {https://www.sciencedirect.com/science/article/pii/S2452414X22000917},
author = {Concetta Semeraro and Mario Lezoche and HervÃÂ© Panetto and Michele Dassisti}}

@article{Stumme2002,
 author = {Stumme, Gerd and Taouil, Rafik and Bastide, Yves and Pasquier, Nicolas and Lakhal, Lotfi},
 title = {Computing Iceberg Concept Lattices with TITANIC},
 journal = {Data Knowl. Eng.},
 issue_date = {August 2002},
 volume = {42},
 number = {2},
 OPTmonth = aug,
 year = {2002},
 issn = {0169-023X},
 pages = {189--222},
 numpages = {34},
 OPTurl = {http://dx.doi.org/10.1016/S0169-023X(02)00057-5},
 doi = {10.1016/S0169-023X(02)00057-5},
 acmid = {606457},
 publisher = {Elsevier Science Publishers B. V.},
 address = {Amsterdam, The Netherlands, The Netherlands} }

@article{DBLP:journals/order/Voutsadakis02,
  author       = {George Voutsadakis},
  title        = {Polyadic Concept Analysis},
  journal      = {Order},
  volume       = {19},
  number       = {3},
  pages        = {295--304},
  year         = {2002},
  OPTurl          = {https://doi.org/10.1023/A:1021252203599},
  doi          = {10.1023/A:1021252203599},
  bibsource    = {dblp computer science bibliography, https://dblp.org} }

@article{WAJNBERG20181397,
title = {Semantic interoperability of large systems through a formal method: Relational Concept Analysis},
journal = {IFAC-PapersOnLine},
volume = {51},
number = {11},
pages = {1397-1402},
year = {2018},
note = {Special Issue: 16th IFAC Symposium on Information Control Problems in Manufacturing INCOM 2018},
issn = {2405-8963},
doi = {https://doi.org/10.1016/j.ifacol.2018.08.330},
OPTurl = {https://www.sciencedirect.com/science/article/pii/S240589631831454X},
author = {Micka{\"{e}}l Wajnberg and Mario Lezoche and Alexandre Blondin-Mass{\'{e}} and Petko Valtchev and Herv\'{e} Panetto and Louise Tyvaert}}

@inproceedings{DBLP:conf/jowo/WajnbergVLMP19,
  author       = {Micka{\"{e}}l Wajnberg and
                  Petko Valtchev and
                  Mario Lezoche and
                  Alexandre Blondin Mass{\'{e}} and
                  Herv{\'{e}} Panetto},
  OPTeditor       = {Adrien Barton and
                  Selja Sepp{\"{a}}l{\"{a}} and
                  Daniele Porello},
  title        = {Concept Analysis-Based Association Mining from Linked Data: {A} Case
                  in Industrial Decision Making},
  booktitle    = {Proceedings of the Joint Ontology Workshops 2019 Episode {V:} The
                  Styrian Autumn of Ontology},
   series       = {{CEUR}-WS Proc.},
  volume       = {2518},
  publisher    = {CEUR-WS.org},
  year         = {2019},
  OPTurl          = {https://ceur-ws.org/Vol-2518/paper-DAOSI3.pdf},
  bibsource    = {dblp computer science bibliography, https://dblp.org}}

@inproceedings{DBLP:conf/icdm/WajnbergVMBKLLS20,
  author       = {Micka{\"{e}}l Wajnberg and
                  Petko Valtchev and
                  Alexandre Blondin Mass{\'{e}} and
                  Abderrahim Benmoussa and
                  Maja Krajinovic and
                  Caroline Laverdi{\`{e}}re and
                  Emile Levy and
                  Daniel Sinnett and
                  Val{\'{e}}rie Marcil},
  OPTeditor       = {Giuseppe Di Fatta and
                  Victor S. Sheng and
                  Alfredo Cuzzocrea and
                  Carlo Zaniolo and
                  Xindong Wu},
  title        = {Mining Heterogeneous Associations from Pediatric Cancer Data by Relational Concept Analysis},
  booktitle    = {{ICDM} Workshops  2020},
  pages        = {597--604},
  publisher    = {{IEEE}},
  year         = {2020},
  OPTurl          = {https://doi.org/10.1109/ICDMW51313.2020.00085},
  doi          = {10.1109/ICDMW51313.2020.00085},
  bibsource    = {dblp computer science bibliography, https://dblp.org} }
